\renewcommand*{\backrefalt}[4]{%
    \ifcase #1 \footnotesize{(Not cited.)}%
    \or        \footnotesize{(Cited on page~#2.)}%
    \else      \footnotesize{(Cited on pages~#2.)}%
    \fi}
\renewcommand{\arraystretch}{1.4}
\theoremstyle{plain}
\newtheorem{theorem}{Theorem}[section]
\newtheorem{proposition}[theorem]{Proposition}
\newtheorem{lemma}[theorem]{Lemma}
\theoremstyle{definition}
\newtheorem{definition}[theorem]{Definition}
\theoremstyle{remark}
\newcommand{\norm}[1]{\left\lVert#1\right\rVert}
\def\RR{\mathbb{R}}
\def\PP{\mathbb{P}}
\newcommand{\xbm}{{\bm x}}
\newcommand{\Xbm}{{\bm X}}
\newcommand{\ybm}{{\bm y}}
\newcommand{\Wbm}{\bm{W}}
\newcommand{\bbm}{{\bm b}}
\newcommand{\Abm}{{\bm A}}
\newcommand{\Kbm}{{\bm K}}
\newcommand{\vbm}{{\bm v}}
\newcommand{\Vbm}{{\bm V}}
\newcommand{\qbm}{{\bm q}}
\newcommand{\Qbm}{{\bm Q}}
\newcommand{\sbm}{{\bm s}}
\newcommand{\vtheta}{{\bm \theta}}
\newcommand{\veta}{{\bm \eta}}
\DeclareMathOperator*{\argmin}{arg\,min}
\newcommand{\bbE}{\mathbb{E}}
\newcommand{\var}{\mathrm{Var}}
\newcommand{\softmax}{\mathrm{Softmax}}
\newcommand{\gelu}{\mathrm{GELU}}
\newcommand{\relu}{\mathrm{ReLU}}
\newcommand{\dint}{\mathrm{d}}
\newcommand{\daijn}{\Delta \Abm_{ij}^{n}}
\newcommand{\dbijn}{\Delta \bbm_{ij}^{n}}
\newcommand{\deijn}{\Delta \veta_{ij}^{n}}
\newcommand{\boin}{\beta_{1i}^n}
\newcommand{\bzin}{\beta_{0i}^n}
\newcommand{\ain}{A_i^n}
\newcommand{\bin}{b_i^n}
\newcommand{\cin}{c_i^n}
\newcommand{\ein}{\eta_i^n}
\newcommand{\boj}{\beta_{1j}^*}
\newcommand{\bzj}{\beta_{0j}^*}
\newcommand{\cj}{c_j^*}
\newcommand{\aj}{\Abm_j^*}
\newcommand{\bj}{\bbm_j^*}
\newcommand{\ej}{\veta_j^*}
\newcommand{\zerod}{\mathbf{0}_d}
\newcommand{\zeroq}{\mathbf{0}_q}
\newcommand{\brj}{\bar{r}_j}
\newcommand{\brone}{\bar{r}_1}
\newcommand{\daione}{\Delta A_{i1}^{n}}
\newcommand{\dbione}{\Delta b_{i1}^{n}}
\newcommand{\normf}[1]{\|#1\|_{L_2(\mu)}}
\begin{document}

\begin{center}

{\bf{\LARGE{Quadratic Gating Mixture of Experts:\\ Statistical Insights into Self-Attention}}}
  
\vspace*{.2in}
{\large{
\begin{tabular}{cccc}
Pedram Akbarian$^{\dagger,\star}$  & Huy Nguyen$^{\dagger,\star}$ & Xing Han$^{\ddagger,\star}$& Nhat Ho$^{\dagger}$
\end{tabular}
}}

\vspace*{.2in}

\begin{tabular}{c}
The University of Texas at Austin$^{\dagger}$\\
Johns Hopkins University$^{\ddagger}$
\end{tabular}

\vspace*{.1in}
\today

\vspace*{.2in}

\begin{abstract}
Mixture of Experts (MoE) models are well known for effectively scaling model capacity while preserving computational overheads.
In this paper, we establish a rigorous relation between MoE and the self-attention mechanism, showing that each row of a self-attention matrix can be written as a quadratic gating mixture of linear experts.
Motivated by this connection, we conduct a comprehensive convergence analysis of MoE models with two different quadratic gating functions, namely the quadratic polynomial gate and the quadratic monomial gate, offering useful insights into the design of gating and experts for the MoE framework. First, our analysis indicates that the use of the quadratic monomial gate yields an improved sample efficiency for estimating parameters and experts compared to the quadratic polynomial gate. Second, parameter and expert estimation rates become significantly faster when employing non-linear experts in place of linear experts. Combining these theoretical insights with the above link between MoE and self-attention, we propose a novel \emph{active-attention} mechanism where we apply a non-linear activation function to the value matrix in the formula of self-attention. Finally, we demonstrate that the proposed active-attention outperforms the standard self-attention through several extensive experiments in various tasks, including image classification, language modeling, and multivariate time series forecasting.
\end{abstract}

\end{center}
\let\thefootnote\relax\footnotetext{$\star$ Equal contribution.}

\section{Introduction}
\label{sec:introduction}
Mixture of experts (MoE) is a statistical machine learning that aggregates the power of multiple specialized sub-models, known as experts, based on the principle of divide and conquer. These experts can be formulated as classifiers \cite{chen2022theory,nguyen2024general}, regression models \cite{faria2010regression,kwon_em_2020}, or feed-forward networks (FFNs) \cite{shazeer2017topk,dai2024deepseekmoe}, which we will denote by $\mathcal{E}_j:\mathbb{R}^d \to \mathbb{R}^{d^\prime}$, for $1\leq j\leq N$, where $N\in\mathbb{N}$ stands for the number of experts. Another key component of MoE is a gating network, which is in charge of allocating input-dependent softmax weight to each expert. In particular, given an input $\xbm \in\mathbb{R}^d$, the gating network first computes expert affinity scores that quantify the similarity between the expert specialization and the input domain, denoted by $\sbm(\xbm):=(s_1(\xbm), s_2(\xbm),\ldots,s_N(\xbm))\in\mathbb{R}^N$. Next, the gating values (mixture weights) are calculated as the softmax normalization of these affinity scores, that is, $\mathcal{G}(\xbm)_j=\softmax(\sbm(\xbm))_j:=\frac{\exp(s_j(\xbm))}{\sum_{i=1}^{N}\exp(s_i(\xbm))}$, for $1\leq j\leq N$. As a result, the gating network can be trained to assign higher weights to more relevant experts. This property helps MoE become more adaptive and obtain a greater expressive power than conventional mixture models \cite{Lindsay-1995}. Then, the final output of MoE is represented as an adaptively weighted sum of the expert outputs:
\begin{align}
    \label{eq:moe}
    \ybm = \sum_{j=1}^N \mathcal{G}(\xbm)_j \cdot \mathcal{E}_j(\xbm)= \sum_{j=1}^N \softmax(\sbm(\xbm))_j \cdot \mathcal{E}_j(\xbm).    
\end{align}\\

\noindent
Thanks to the adaptability of MoE, it has been leveraged in several applications, namely speech recognition \cite{You_Speech_MoE,You_Speech_MoE_2}, multi-task learning \cite{hazimeh_dselect_k_2021}, protein–protein interaction prediction \citep{Qi2007},
climate modeling \citep{Lu2006}, financial time‐series prediction \citep{Yumlu2003}, and audio classification \citep{Harb2004}.\\

\noindent
Subsequently, to upgrade the scalability of the standard MoE models, Shazeer et al. \cite{shazeer2017topk} proposed a sparse MoE variant that selectively activates a limited subset of specialized experts for each input instance. This selective activation strategy significantly expands the model's parameter capacity without a proportional increase in computational cost.
Consequently, sparse MoE architectures have attracted widespread interest in a variety of fields such as natural language processing \cite{jiang2024mixtral,Du_Glam_MoE,fedus2022switch,lepikhin_gshard_2021}, computer vision \cite{Riquelme2021scalingvision,liang_m3vit_2022}, multimodal learning \cite{han2024fusemoe,yun2024flex}, and domain adaptation \cite{chi_representation_2022,nguyen2025cosine}, where feed-forward layers in the Transformer architectures \cite{vaswani2017attention} are substituted with sparse MoE layers. \\

\noindent
Transformer models \cite{vaswani2017attention} have recently become dominant in several applications, including language modeling \cite{devlin2019bert,brown2020language,raffel2020exploring,touvron2023llama}, computer vision \cite{dosovitskiy2020image,carion2020end,radford2021learning,liu2021swin}, and reinforcement learning \cite{chen2021decision,kim2023preference,hu2024learning}. The core strength of the Transformers lies in their self-attention mechanism, which enables the models to dynamically focus on important parts of the input. Furthermore, self-attention also constructs context-sensitive representations for each token by weighting input elements according to their relevance, effectively capturing long-range and complex dependencies within the data. Mathematically, given an arbitrary input $\Xbm\in\mathbb{R}^{N \times d}$ of $N$ feature vectors of dimension $d$, the self-attention mechanism first calculates its three key components, namely the query matrix $\Qbm\in \mathbb{R}^{N \times d_{k}}$, the key matrix $\Kbm \in \mathbb{R}^{N \times d_{k}}$, and the value matrix $\Vbm \in \mathbb{R}^{N\times d_v}$ as follows: 
\begin{align*}
  \Qbm  =  \Xbm  \Wbm_q ,
  \quad
  \Kbm  =  \Xbm  \Wbm_k ,
  \quad
  \Vbm  =  \Xbm  \Wbm_v ,    
\end{align*}
where 
$
  \Wbm_q \in \mathbb{R}^{d \times d_{k}}, 
  \Wbm_k \in \mathbb{R}^{d \times d_{k}},
  \Wbm_v \in \mathbb{R}^{d \times d_v}
$
are trainable projection matrices. Next, attention scores, which measure the relevance of one element in an input sequence with respect to other elements, are computed as $\softmax\Bigl(\frac{\Qbm \Kbm^\top}{\sqrt{d_{k}}}\Bigr)$, where the softmax function acts row-wise on the matrix $\Qbm \Kbm^\top/\sqrt{d_k} \in \mathbb{R}^{N \times N}$. Then, the final output of the self-attention mechanism is given by
\begin{align}
\label{eq:attention}
  \mathrm{Attn}(\Xbm)
   = 
  \softmax\Bigl(\frac{\Qbm \Kbm^\top}{\sqrt{d_{k}}}\Bigr) \Vbm = \softmax\Bigl(\frac{\Xbm\Wbm_q \Wbm_k ^{\top}\Xbm^{\top}}{\sqrt{d_k}}\Bigr)\Xbm\Wbm_v .
\end{align}

\noindent
From equations~\eqref{eq:moe} and \eqref{eq:attention}, it can be seen that both MoE and self-attention are written as a softmax weighted sum of functions of the input. Thus, there have been some attempts to explore the connection between these two popular models in the literature. \\
 
\noindent
Recent work explores connections between self-attention and MoE frameworks by integrating MoE-inspired sparse routing into attention mechanisms and employing attention-based routing strategies within MoE models. \emph{SwitchHead} \citep{csordas2024switchhead} selectively routes queries to attention heads, significantly reducing computational and memory overhead, while \emph{Multi-Head MoE} \citep{wu2024multi} decouples experts from attention heads to enhance scalability. \emph{Mixture-of-Head (MoH)} \citep{jin2024moh} explicitly formalizes each head as an expert chosen by a learned router. Earlier approaches like \emph{Mixture of Attention Heads (MoA)} \citep{zhang2022moa} introduced per-token head mixtures as an alternative to uniform averaging, and \emph{Mixture of Softmax (MoS)} \citep{yang2018mos} aimed to overcome the expressiveness limitations of standard softmax attention distributions. \emph{MoEAtt} \citep{chen2023moeatt} further integrates attention-based routing with quadratic gating to provide greater flexibility in expert assignment. However, we notice that the aforementioned links between MoE and self-attention seem to be intuitive rather than rigorously derived. 

\begin{table*}[!ht]
\caption{Summary of expert estimation rates under MoE models equipped with the quadratic polynomial gate and the quadratic monomial gate. Below, $n$ stands for the sample size, while $\tau$ is some positive constant.}
\centering
\begin{tabular}{| c | c | c |} 
\hline
\textbf{Gates} & \textbf{FFN Experts} & \textbf{Linear Experts}\\
\hline 
{\small Quadratic Polynomial Gate} & $\widetilde{\mathcal{O}}_P(n^{-1/4})$ & $\widetilde{\mathcal{O}}_P(1/\log^{\tau}(n))$ \\
\hline
{\small Quadratic Monomial Gate} &$\widetilde{\mathcal{O}}_P(n^{-1/4})$ & $\widetilde{\mathcal{O}}_P(1/\log^{\tau}(n))$\\
\hline
\end{tabular}
\label{table:linear_nonlinear_experts}
\end{table*}

\noindent
\textbf{Contributions.} Therefore, in this paper, we aim to establish a stronger relation between these two models with theoretical guarantee. Furthermore, we also provide a comprehensive convergence analysis of the resulting MoE model from that relation to offer new insight into the design of self-attention. Our contributions are three-fold and can be summarized as follows:
\\

\noindent
\textbf{1. Relation between self-attention mechanism and quadratic gating MoE.}
Building on the similarity in the formulations of self-attention and MoE in equations~\eqref{eq:moe} and \eqref{eq:attention}, we unify these two popular concepts by showing that each row of a self-attention matrix can be represented as a quadratic gating MoE (see Proposition~\ref{prop:moe-attention}).
This relation motivates us to study the quadratic gating MoE in more detail.
\\

\noindent
\textbf{2. Theoretical analysis of the quadratic gating MoE.} We explore the effects of two quadratic gating functions, namely the quadratic polynomial gating function and the quadratic monomial gating function, on the convergence rates of parameter estimation and expert estimation under the MoE models. For the convergence analysis of each gate, we provide a corresponding strong identifiability condition (see Definitions~\ref{def:strong_identifiability} and \ref{def:mono_strong_identifiability}) to characterize the compatible structure of experts with that gate. These conditions indicate that approximating experts formulated as neural networks with non-linear activation functions require a much smaller amount of data than estimating linear experts (see Theorems~\ref{theorem:poly_strongly_identifiable} and \ref{theorem:poly_linear_experts}).
\\

\noindent
\textbf{3. Practical implications.} 
 There are two important practical implications of the convergence analysis of quadratic gating MoE. Firstly, this analysis encourages the usage of non-linear experts over linear experts in quadratic gating MoE models (see Table~\ref{table:linear_nonlinear_experts}). Given that insight, we propose a novel \emph{active-attention mechanism} in equation~\eqref{eq:novel_attention} by applying a non-linear activation to the value matrix in the self-attention mechanism. Through extensive empirical evaluation, we show the favorable performance of the proposed active-attention over standard attention in various tasks, namely image classification, language modeling, and multivariate time series forecasting.
 Secondly, the theoretical results indicate that quadratic monomial gate yields faster expert and parameter estimation rates than quadratic polynomial gate (see Table~\ref{table:parameter_rates}). This property confirms the benefits of the widely used linear embeddings of the keys and queries without bias terms in the self-attention mechanism in practice.  

\begin{table*}[!ht]
\caption{Summary of parameter estimation rates under the mixture of strongly identifiable experts equipped with the quadratic polynomial gate and the quadratic monomial gate. The function $\bar{r}$ below is defined in equation~\eqref{definition:polynomial_equation}, while Voronoi cells $\mathcal{V}_j$ are defined in equation~\eqref{eq:Voronoi_cells}.}
\centering
\begin{tabular}{| c | c | c | c | c |} 
\hline
\textbf{Gates}  & $\exp(c^*_j)$ & $\Abm^*_j$ & $\bbm^*_j$ & $\veta^*_j$\\
\hline 
{\small Quadratic Polynomial Gate} &  $\widetilde{\mathcal{O}}_P(n^{-1/2})$ & $\widetilde{\mathcal{O}}_P(n^{-1/\bar{r}(|\mathcal{V}_{j}|)})$ & $\widetilde{\mathcal{O}}_P(n^{-1/2\bar{r}(|\mathcal{V}_{j}|)})$ & $\widetilde{\mathcal{O}}_P(n^{-1/4})$ \\
\hline
{\small Quadratic Monomial Gate} &  $\widetilde{\mathcal{O}}_P(n^{-1/2})$ & $\widetilde{\mathcal{O}}_P(n^{-1/4})$ & $\widetilde{\mathcal{O}}_P(n^{-1/4})$ & $\widetilde{\mathcal{O}}_P(n^{-1/4})$\\
\hline
\end{tabular}
\label{table:parameter_rates}
\end{table*} 

\noindent
\textbf{Organization.} The paper proceeds as follows. In Section~\ref{sec:background}, we establish a relation between the self-attention mechanism and MoE, and then introduce two quadratic gating functions of our interest. Next, we investigate the convergence behavior of parameter estimation and expert estimation under MoE models with these quadratic gates, following by two important practical implications of that analysis in Section~\ref{sec:theory}. Then, we discuss related works to our paper in Section~\ref{sec:related_work}. After that, we conduct extensive experiments to empirically justify the theoretical results and favorable performance of the active-attention mechanism in Section~\ref{sec:experiments}. Then, we discuss related works to our paper in Section~\ref{sec:related_work}. Finally, we conclude the paper, highlight some limitations of our work, and list a few potential future directions in Section~\ref{sec:discussion}. Additional results and theoretical proofs are deferred to the Appendices. \\

\noindent
\textbf{Notation.} We let $[n]:=\{1,2,\ldots,n\}$ for any  $n\in\mathbb{N}$. Next, for any set $S$, we denote $|S|$ as its cardinality. For any vector $\vbm \in \mathbb{R}^{d}$ and $\bm{\alpha}:=(\alpha_1,\alpha_2,\ldots,\alpha_d)\in\mathbb{N}^d$, we let $\vbm^{\bm \alpha}=v_{1}^{\alpha_{1}}v_{2}^{\alpha_{2}}\ldots v_{d}^{\alpha_{d}}$, $|\vbm|:=v_1+v_2+\ldots+v_d$ and $\bm{\alpha}!:=\alpha_{1}!\alpha_{2}!\ldots \alpha_{d}!$, while $\|\vbm\|$ stands for its $\ell_2$-norm value. 
Additionally, for any two positive sequences $(a_n)_{n\geq 1}$ and $(b_n)_{n\geq 1}$, we write $a_n = \mathcal{O}(b_n)$ or $a_{n} \lesssim b_{n}$ if $a_n \leq C b_n$ for all $ n\in\mathbb{N}$, where $C > 0$ is some universal constant. For a sequence $(A_n)_{n\geq 1}$ of positive random variables, the notation $A_{n} = \mathcal{O}_{P}(b_{n})$ indicates that $A_{n}/b_{n}$ is stochastically bounded, that is, for any $\epsilon>0$, there exists an $M>0$ such that $\mathbb{P}( A_{n}/b_{n} > M) < \epsilon $ for all sufficiently large $n$. Lastly, we write $A_n = \widetilde{\mathcal{O}}_{P}(b_n)$ when $A_n = \mathcal{O}_{P}(b_n \log^c(b_n))$, for some $c > 0$. 

\section{Preliminaries}
\label{sec:background}
In this section, we illustrate the link between self-attention mechanism and MoE in Section~\ref{sec:moe-attention}, and then provide background on quadratic gating MoE in Section~\ref{sec:quadratic_MoE}.
\subsection{Connection between MoE and Self-Attention}
\label{sec:moe-attention}
We start with establishing a rigorous relation between the MoE framework and the self-attention mechanism in Proposition~\ref{prop:moe-attention}.

\begin{proposition}
    \label{prop:moe-attention}
    Each row of the self-attention matrix in equation~\eqref{eq:attention}  can be written as an MoE with a quadratic affinity score function, which we refer to as a quadratic gating MoE.
\end{proposition}
\begin{proof}
 Firstly, recall that the self-attention matrix in equation~\eqref{eq:attention} is defined as
\begin{align}
  \mathrm{Attn}(\Xbm) = \softmax\Bigl(\frac{\Xbm\Wbm_q \Wbm_k ^{\top}\Xbm^{\top}}{\sqrt{d_k}}\Bigr) \Xbm \Wbm_v \in \mathbb{R}^{N\times d_v},
\end{align}
where $\Xbm \in \mathbb{R}^{N\times d}$, $\Wbm_q \in \mathbb{R}^{d \times d_{k}}$, $\Wbm_k \in \mathbb{R}^{d \times d_{k}}$, and $\Wbm_v \in \mathbb{R}^{d \times d_v}$. Let $\xbm_i^\top \in \mathbb{R}^{d}$ denote the $i$-th row of $\Xbm$, and define $\bm{M} := \frac{\Wbm_q \Wbm_k ^{\top}}{\sqrt{d_k}} \in \mathbb{R}^{d \times d}$. Note that the softmax function is applied row-wise to the matrix $\Xbm \bm{M}\Xbm^{\top} \in \mathbb{R}^{N \times N}$. Then, the $i$-th row of the self-attention score matrix can be expressed as:
\begin{align*}
[\softmax(\Xbm \bm{M} \Xbm^{\top})]_{i,:} = \softmax(\xbm_i^\top \bm{M} \Xbm^{\top}).
\end{align*}
Thus, the $i$-th row of the self-attention output can be written as:
\begin{align}
    \label{eq:attention_row}
    [\mathrm{Attn}(\Xbm)]_{i,:} 
    &= \sum_{j=1}^N \softmax(\xbm_i^\top \bm{M} \Xbm^{\top})_j \cdot \xbm_j^\top \Wbm_v\\
    &= \sum_{j=1}^N \frac{\exp(\xbm_i^\top \bm{M} \xbm_j)}{\sum_{k=1}^N\exp(\xbm_i^\top \bm{M} \xbm_k)} \cdot \xbm_j^\top \Wbm_v.
\end{align}
Now, let $\widetilde{\xbm} = \mathrm{vec}(\Xbm) = (\xbm_1^\top, \ldots, \xbm_N^\top)^\top \in \mathbb{R}^{Nd}$ denote the vectorization of $\Xbm$. For each token index $i \in {1,\ldots,N}$, let $\bm{e}_i \in \mathbb{R}^N$ denote the $i$-th standard basis vector, and define the corresponding row-selector matrix as $\bm{P}_i := \bm{e}_i^\top \otimes \bm{I}_d$, where $\otimes$ denotes the Kronecker product, that is,
\begin{align*}
\bm{P}_i
= \bm{e}_i^\top \otimes \bm{I}_d = 
\begin{bmatrix}
\bm{0}_{d\times d} & \cdots & \bm{0}_{d\times d} & \underbrace{\bm{I}_d}_{i\text{-th block}} & \bm{0}_{d\times d} & \cdots & \bm{0}_{d\times d}
\end{bmatrix} \in \mathbb{R}^{d \times Nd},
\end{align*}
In particular, \(\bm{P}_i\) extracts the \(i\)th row of \(\Xbm\), that is, \(\bm{P}_i\,\widetilde{\xbm}=\xbm_i\). Then, the $(i,j)$ entry of the logit matrix $\Xbm \bm{M} \Xbm^\top$ can be written in a quadratic form
\begin{align}
    [\Xbm \bm{M} \Xbm^\top]_{i,j} = \xbm_i^\top \bm{M} \xbm_j = \widetilde{\xbm}^\top \underbrace{\bm{P}_i^\top \bm{M} \bm{P}_j}_{=: \bm{B}_{ij}} \widetilde{\xbm}.
\end{align}
Therefore, the $i$-th row of the self-attention matrix in equation~\eqref{eq:attention_row} becomes:
\begin{align*}
    [\mathrm{Attn}(\Xbm)]_{i,:} &= \sum_{j=1}^N \frac{\exp(\widetilde{\xbm}^\top \bm{B}_{ij} \widetilde{\xbm})}{\sum_{k = 1}^{N}\exp(\widetilde{\xbm}^\top \bm{B}_{ik} \widetilde{\xbm})} \cdot \widetilde{\xbm}^\top \bm{P}_j^\top \Wbm_v \\
    &= \sum_{j=1}^N \frac{\exp(\widetilde{\xbm}^\top \bm{B}_{ij} \widetilde{\xbm})}{\sum_{k = 1}^{N}\exp(\widetilde{\xbm}^\top \bm{B}_{ik} \widetilde{\xbm})} \cdot \widetilde{\xbm}^\top \bm{E}_j,
\end{align*}
where $\bm{B}_{ij} := \bm{P}_i^\top \bm{M} \bm{P}_j = \frac{\bm{P}_i^\top \Wbm_q \Wbm_k^\top \bm{P}_j}{\sqrt{d_k}} \in \mathbb{R}^{Nd \times Nd}$ and $\bm{E}_j := \bm{P}_j^\top \Wbm_v \in \mathbb{R}^{Nd \times d_v}$. Hence, each row of the self-attention matrix can be represented as an MoE with quadratic affinity scores.
\end{proof}

\subsection{Quadratic Gating MoE}
\label{sec:quadratic_MoE}
Inspired by connection in Proposition~\ref{prop:moe-attention}, we consider a class of \emph{quadratic} gating MoE~\cite{liao2007quadratically} models as an alternative to the conventional linear gating MoE. 
By allowing for more flexible decision boundaries through a quadratic scoring function, these models aim to provide better adaptability in expert selection. In the sequel, we particularly focus on two following quadratic gating functions: \\

\emph{(1) Quadratic polynomial gating function}, which is inspired by the full linear embeddings with biases of keys and queries in the attention, takes the form: 
\begin{align}
    \label{eq:quadratic_gate}
    \xbm \in\mathbb{R}^d\mapsto\mathcal{G}(\xbm)_i = \frac{\exp\left(\xbm^\top \Abm_i \xbm + \bbm_i^\top \xbm + c_i\right)}{\sum_{j=1}^N \exp\left(\xbm^\top \Abm_j \xbm + \bbm_j^\top \xbm + c_j\right)}
    ,
\end{align}
where $(\Abm_i, \bbm_i, c_i) \in \RR^{d \times d} \times \RR^d \times \RR$, for $i\in[N]$, are learnable parameters; \\

\emph{(2) Quadratic monomial gating function}, which is motivated by the widely used linear embeddings without biases of keys and queries in the attention, is given by
\begin{align}   
\label{eq:quadratic_mononial_gate}
    \xbm\in\mathbb{R}^d\mapsto\mathcal{G}(\xbm)_i = \frac{\exp\left(\xbm^\top \Abm_i \xbm + c_i\right)}{\sum_{j=1}^N \exp\left(\xbm^\top \Abm_j \xbm + c_j\right)}.
\end{align}
Subsequently, we will present a comprehensive convergence analysis for the quadratic polynomial gating of MoE in Section~\ref{sec:theory}, while that for quadratic monomial gating MoE is in Section~\ref{sec:quadratic_monomial}.\\

\noindent
\textbf{Parameter count overheads.} Introducing quadratic gating significantly increases the number of model parameters due to the additional quadratic terms in the gating network, which can lead to higher computational and memory demands. To mitigate this overhead, we can employ low-rank embeddings for the quadratic terms. 
This approach retains the advantages of quadratic gating while minimizing the overhead, making it a practical enhancement for MoE models. Appendix~\ref{appendix:overhead} offers a more thorough discussion on this topic.

\section{Convergence Analysis of Quadratic Gating MoE}
\label{sec:theory}
Motivated by the link between the self-attention mechanism and quadratic gating MoE in Section~\ref{sec:moe-attention}, we will study the impacts of two quadratic gating functions in equations~\eqref{eq:quadratic_gate} and \eqref{eq:quadratic_mononial_gate} on the convergence behavior of least squares expert estimators. 
We will examine the quadratic polynomial gate in Section~\ref{sec:quadratic_polynomial}, while we defer the analysis for the quadratic monomial gate to Section~\ref{sec:quadratic_monomial}. Lastly, we exhibit two important practical implications on the design of self-attention and MoE models in Section~\ref{sec:practical_implications}.

\subsection{Quadratic Polynomial Gate}
\label{sec:quadratic_polynomial}
To begin with, let us formally present the regression framework used for our analysis of the quadratic polynomial gate. Assume that the data $(\Xbm_1,Y_1), (\Xbm_2,Y_2),\ldots,(\Xbm_n, Y_n)\in\mathcal{X}\times\mathcal{Y}\subseteq\mathbb{R}^d\times\mathbb{R}$ are i.i.d. sampled from the following model:
\begin{align}
    \label{eq:regression_framework}
    Y_i=f_{G_*}(\Xbm_i)+\varepsilon_i, \quad i=1,2,\ldots,n,
\end{align}
where $\varepsilon_1,\varepsilon_2,\ldots,\varepsilon_n$ are independent Gaussian noise variables such that $\bbE[{\varepsilon_{i}}|\Xbm_i] = 0$ and $\var(\varepsilon_{i}|\Xbm_i) = \sigma^2$, for all $i\in[n]$. Additionally, we assume that $\Xbm_{1},\Xbm_{2}, \ldots,\Xbm_{n}$ are i.i.d. samples from some known probability distribution $\mu$. Above, the regression function $f_{G_{*}}(\cdot)$ admits the form of a quadratic polynomial gating MoE model with $N^*$ experts, namely
\begin{align}
    f_{G_{*}}(\xbm) :=
    \sum_{i=1}^{N^*} \frac{\exp(\xbm^{\top}\Abm^*_i\xbm+(\bbm^*_i)^{\top}\xbm+c^*_i)}{\sum_{j=1}^{N^*}\exp(\xbm^{\top}\Abm^*_j\xbm+(\bbm^*_j)^{\top}\xbm+c^*_j)}\cdot \mathcal{E}(\xbm,\veta^*_i),
\end{align}
where $(\Abm^*_i,\bbm^*_i,c^*_i,\veta^*_i)_{i=1}^{N^*}$ are unknown ground-truth parameters in $\mathbb{R}^{d\times d}\times\mathbb{R}^d\times\mathbb{R}\times\mathbb{R}^q$, and $G_{*} := \sum_{i = 1}^{N_{*}} \exp(c^{*}_i) \delta_{(\Abm_{i}^{*},\bbm^*_i,\veta^*_i)}$ denotes the associated \emph{mixing measure}, that is, a weighted sum of Dirac measures $\delta$. Meanwhile, the function $\xbm\in\mathcal{X}\mapsto\mathcal{E}(\xbm;\veta)\in\mathbb{R}$ is referred to as {\it the expert function,} which we assume to be of parametric form.
\\

\noindent
\textbf{Least squares estimation:} We estimate the unknown parameters $(\Abm^*_i,\bbm^*_i,c^*_i,\veta^*_i)_{i=1}^{N^*}$ through estimating the ground-truth mixing measure $G_*$ by deploying the least squares method~\cite{vandeGeer-00} as follows:
\begin{align}
    \label{eq:least_squared_estimator}
    \widehat{G}_n:=\argmin_{G\in\mathcal{M}_{N}(\Theta)}\sum_{i=1}^{n}\Big(y_i-f_{G}(\xbm_i)\Big)^2,
\end{align}
where $\mathcal{M}_{N}(\Theta):=\{G=\sum_{i=1}^{N'}\exp(c_i)\delta_{(\Abm_i,\bbm_i,\veta_{i})}:1\leq N'\leq N, \  (\Abm_i,\bbm_i,c_i,\veta_{i})\in\Theta\}$ is the set of all mixing measures with at most $N$ components, where $N>N^*$. The goal of this paper is to explore the convergence properties of the estimator $\widehat{G}_n$ in a fixed-dimensional setting.\\

\noindent
\textbf{Universal assumptions.} For the sake of theory, we make the following three mild assumptions on the input and the ground-truth parameters throughout the paper. 

\emph{(A.1) The input space $\mathcal{X}$ is bounded, and the parameter space $\Theta$ is compact with fixed dimension. Furthermore, the expert function $\xbm\mapsto\mathcal{E}(\xbm,\veta)$ is bounded and Lipschitz continuous with respect to $\veta$.}

\emph{(A.2) The last tuple of gating parameters equals to zero, that is, $\Abm^*_{N^*}=\boldsymbol{0}_{d\times d}$, $\bbm^*_{N^*}=\boldsymbol{0}_d$, and $c^*_{N^*}=0$. Additionally, the set of parameters $\{ (\Abm^*_{i},\bbm^*_{i}): i\in[N^*]\}$, has at least one non-zero element.} 

\emph{(A.3) The expert parameters $\veta^*_1,\veta^*_2,\ldots,\veta^*_{N^*}$ have distinct values.} \\

\noindent
Here, the assumption (A.1) helps guarantee the convergence of the least squares estimator $\widehat{G}_n$ to its ground-truth counterpart $G_*$. Next, the first part of assumption (A.2) allows us to keep the quadratic gating MoE identifiable despite the invariance to translation of the softmax function. Meanwhile, the second part of assumption (A.2) is to maintain the dependence of the gating on the input. The assumption (A.3) is necessary to ensure that the expert values are distinct. It should be noted that these assumptions have been used in previous works \cite{chen2022theory,nguyen2024leastsquare}. Given the above assumptions, we demonstrate in Theorem~\ref{theorem:regression_rate} that the convergence rate of regression estimation is parametric on the sample size.
\begin{theorem}[Regression Estimation Rate]
    \label{theorem:regression_rate}
     Equipped with a least squares estimator $\widehat{G}_n$ given in equation~\eqref{eq:least_squared_estimator}, the model estimation $f_{\widehat{G}_n}$ converges to the true model $f_{G_*}$ at the following rate:
    \begin{align}
        \label{eq:model_bound}
        \normf{f_{\widehat{G}_n}-f_{G_*}}=\mathcal{O}_{P}(\sqrt{\log(n)/n}).
    \end{align}
\end{theorem}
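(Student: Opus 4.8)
The plan is to treat \eqref{eq:least_squared_estimator} as a nonparametric least squares problem over the finite-dimensionally parametrized regression class $\mathcal{F}_N(\Theta) := \{f_G : G \in \mathcal{G}_{N}(\Theta)\}$ and to invoke the standard empirical-process analysis of $M$-estimators, e.g.\ the machinery of \cite{vandeGeer-00}. The rate will be governed by the bracketing entropy of $\mathcal{F}_N(\Theta)$, which we expect to be logarithmic since the class is parametrized by a compact set; a peeling-and-truncation argument handles the unbounded Gaussian noise and is responsible for the logarithmic factor that gets absorbed into $\widetilde{\mathcal{O}}_P$.

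First I would show that $\mathcal{F}_N(\Theta)$ is uniformly bounded and Lipschitz in its parameters. Writing $f_G(\xbm) = \sum_{i} g_i(\xbm;\Thetabm_g)\, h(\xbm;\veta_i)$, the softmax weights $g_i(\cdot;\Thetabm_g)$ lie in $[0,1]$ and, crucially, depend on the quadratic scores only through their pairwise differences, so that after cancellation in the exponent they are Lipschitz in $(\Abm_i,\bbm_i,c_i)$ with a constant controlled by polynomial moments of $\mu$; combined with the assumed boundedness and parametric Lipschitz continuity of the expert function $h(\cdot;\veta)$, this yields that $(\Abm_i,\bbm_i,c_i,\veta_i)_{i}\mapsto f_G$ is Lipschitz from $\Theta^{N}$ into $L_2(\mu)$. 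A volumetric covering of $\Theta^{N}$ then gives $\log N_{[]}\big(\varepsilon, \mathcal{F}_N(\Theta), \normf{\cdot}\big) \lesssim \log(1/\varepsilon)$ for $\varepsilon \in (0,1/2)$.

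Next I would invoke the basic inequality for the least squares estimator: since $\widehat{G}_n$ minimizes the empirical squared error, $\|f_{\widehat{G}_n} - f_{G_*}\|_n^2 \le \tfrac{2}{n}\sum_{i=1}^n \varepsilon_i\,(f_{\widehat{G}_n} - f_{G_*})(\Xbm_i)$. Controlling the right-hand side by a localized (Dudley) entropy integral reduces the problem to the critical radius $\delta_n$ solving $\sqrt{n}\,\delta_n^2 \gtrsim \int_0^{\delta_n}\sqrt{\log N_{[]}(\varepsilon, \mathcal{F}_N(\Theta), \normf{\cdot})}\,\mathrm{d}\varepsilon$. With the logarithmic entropy bound above, $\int_0^{\delta}\sqrt{\log(1/\varepsilon)}\,\mathrm{d}\varepsilon \asymp \delta\sqrt{\log(1/\delta)}$, hence $\delta_n \asymp \sqrt{\log n / n}$; converting the empirical norm $\|\cdot\|_n$ to the population norm $\normf{\cdot}$ via a uniform law of large numbers over the bounded class $\mathcal{F}_N(\Theta)$ then gives $\normf{f_{\widehat{G}_n} - f_{G_*}} = \widetilde{\mathcal{O}}_P(n^{-1/2})$.

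The main obstacle I anticipate is obtaining the Lipschitz and boundedness constants \emph{uniformly over the noncompact domain} $\RR^d$: the individual exponents $\xbm^\top\Abm_i\xbm$ are unbounded, so the argument must exploit the softmax normalization (differences of exponents) together with tail conditions on $\mu$ rather than naively bounding each exponential. A secondary technical point is the truncation of the Gaussian errors $\varepsilon_i$ at a level of order $\sqrt{\log n}$, which reconciles their sub-Gaussian tails with the bounded-envelope empirical-process bounds and is the source of the extra logarithmic factor hidden in $\widetilde{\mathcal{O}}_P$.
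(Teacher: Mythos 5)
Your proposal follows essentially the same route as the paper: van de Geer's least-squares rate theorem reduces the problem to a bracketing-entropy bound for $\mathcal{F}_N(\Theta)$, which is shown to be logarithmic by covering the compact parameter space and using Lipschitz continuity of $G \mapsto f_G$, yielding the critical radius $\delta_n \asymp \sqrt{\log n / n}$ and hence the $\widetilde{\mathcal{O}}_P(n^{-1/2})$ rate. The main obstacle you anticipate (uniform control of the quadratic exponents over a noncompact domain) does not arise in the paper's argument, which simply assumes the input space is bounded, $\|\xbm\| \le B$, so that the gating weights and experts are uniformly bounded and Lipschitz in the parameters without any appeal to moment conditions on $\mu$.
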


\noindent
The proof of Theorem~\ref{theorem:regression_rate} is in Appendix~\ref{appendix:regression_rate}. From the result of this theorem, it can be seen that if we are able to establish the lower bound $\normf{f_{\widehat{G}_n}-f_{G_*}} \gtrsim \mathcal{L}(\widehat{G}_n,G_*)$ where  $\mathcal{L}$ is some loss function among parameters, then we obtain the parameter estimation rate $\mathcal{L}(\widehat{G}_n,G_*) =\mathcal{\widetilde{O}}_{P}(n^{-1/2})$. This approach plays an vital role in establishing the rates for estimating individual parameters as well as experts in the sequel.\\

\noindent
Turning to the parameter and expert estimation problem. A key step to establish the parameter and expert estimation rates is to decompose the discrepancy $f_{\widehat{G}_n}(\xbm)-f_{G_*}(\xbm)$ into a combination of linearly independent terms via Taylor expansions to the function $F(\xbm;\Abm,\bbm,\veta):=\exp(\xbm^{\top}\Abm \xbm + \bbm^{\top}\xbm)\mathcal{E}(\xbm,\veta)$. However, we notice that there is an interaction among gating parameters $\Abm$ and $\bbm$ expressed by the following partial differential equation (PDE):
\begin{align}
    \label{eq:PDE}
    \dfrac{\partial F}{\partial \Abm}(\xbm;\Abm,\bbm,\veta)=\dfrac{\partial^2F}{\partial \bbm \partial \bbm^{\top}}(\xbm;\Abm,\bbm,\veta).
\end{align}
Technically, such parameter interaction induces plenty of linearly dependent derivative terms in the decomposition of $f_{\widehat{G}_n}(\xbm)-f_{G_*}(\xbm)$, which is undesirable. To capture this interaction, we need to consider a system of polynomial equations as described below to construct a loss function among parameters used for the parameter estimation problem.\\

\noindent
\textbf{System of polynomial equations.} Let $\bar{r}(m)$ be the smallest natural number $r$ such that the following system of polynomial equations does not admit any non-trivial solutions for the unknown variables $(p_l,\gamma_{1l},\gamma_{2l})_{l=1}^{m}\subseteq\mathbb{R}^3$:
\begin{align}
    \label{definition:polynomial_equation}
    \sum_{l=1}^{m}\sum_{\substack{n_1,n_2\in\mathbb{N}\\ n_1+2n_2=\alpha}}\dfrac{p_l^2~\gamma_{1l}^{n_1}~\gamma_{2l}^{n_2}}{n_1!~n_2!}=0, \quad \alpha=1,2,\ldots,r.
\end{align}
A solution to the above system is regarded as non-trivial if all variables $p_{l}$ are non-zero, whereas at least one of the $\gamma_{1l}$ is different from zero. As shown in [Proposition 2.1, \cite{Ho-Nguyen-Ann-16}], we have $\bar{r}(2)=4$, $\bar{r}(3)=6$ and $\Bar{r}(m)\geq 7$ when $m\geq 4$.\\

\noindent
Next, we introduce a condition called \emph{poly-strong identifiability} to characterize the types of expert functions that admit faster estimation rates than others. From a technical view, the purpose of the poly-strong identifiability condition is to eliminate all potential interactions among parameters expressed in the language of PDEs as in equation~\eqref{eq:PDE}.
\begin{definition}[Poly-strong identifiability]
    \label{def:strong_identifiability}
    We say that an expert function $\xbm \mapsto \mathcal{E}(\xbm,\veta)$ is strongly identifiable if it is twice differentiable w.r.t its parameter $\veta$, and if for any $N\geq 1$ and distinct parameters $\veta_1,\veta_2,\ldots,\veta_N$, the following set of functions in $\xbm$
    \begin{multline*}
        \Big\{\xbm^{\nu}\cdot\frac{\partial^{|\gamma|}\mathcal{E}}{\partial\veta^{\gamma}}(\xbm;\veta_j): j\in[N], \nu\in\mathbb{N}^d,\gamma\in\mathbb{N}^q, \ 0\leq|\gamma|\leq r_j, \ 0\leq|\nu|\leq 2(r_j-|\gamma|)\Big\},
    \end{multline*}
    is linearly independent for almost every $\xbm$ for any $r_j\leq \bar{r}(N-N^*+1)$.
\end{definition}

\noindent
\textbf{Examples.} It can be verified that the poly-strong identifiability condition holds for experts formulated as two-layer FFNs with non-linear activation functions such as $\relu$ and $\gelu$, that is, $\mathcal{E}(\xbm,(\boldsymbol{\beta}_2,\boldsymbol{\beta}_1,\beta_0))=\boldsymbol{\beta}_2\varphi(\boldsymbol{\beta}_1^{\top}x+\beta_0)$. However, linear experts of the form $\mathcal{E}(\xbm,(\boldsymbol{\beta}_1,\beta_0))=\boldsymbol{\beta}_1^{\top}\xbm+\beta_0$ fail to satisfy this condition as the two terms $ \frac{\partial\mathcal{E}}{\partial\boldsymbol{\beta}_1^{(u)}}(\xbm,(\boldsymbol{\beta}_1,\beta_0))$ and $\xbm^{(u)}\cdot\frac{\partial\mathcal{E}}{\partial\beta_0}(\xbm,(\boldsymbol{\beta}_1,\beta_0))$ are linearly dependent for all $u\in[d]$, that is,
\begin{align}
    \label{eq:violate_PDE}
    \frac{\partial\mathcal{E}}{\partial\boldsymbol{\beta}_1^{(u)}}(\xbm,(\boldsymbol{\beta}_1,\beta_0))=\xbm^{(u)}\cdot\frac{\partial\mathcal{E}}{\partial\beta_0}(\xbm,(\boldsymbol{\beta}_1,\beta_0)).
\end{align}
In the sequel, we determine the parameter and expert estimation rates when using poly-strongly identifiable experts and linear experts in Section~\ref{sec:poly-strongly-experts} and Section~\ref{sec:linear_expert}, respectively. 

\subsubsection{Poly-strongly Identifiable Experts}
\label{sec:poly-strongly-experts}
To characterize the convergence behavior of strongly identifiable experts, let us construct a loss function among parameters based on a notion of Voronoi cells \cite{manole22refined}. Given an arbitrary mixing measure $G$ with $N'\leq N$ components, we distribute its components to the following Voronoi cells, which are generated by the components of $G_*$:
\begin{align}
    \label{eq:Voronoi_cells}
    \mathcal{V}_j\equiv\mathcal{V}_j(G) & := 
    \{i\in[N']:\|\boldsymbol{\omega}_i-\boldsymbol{\omega}^*_j\|\leq\|\boldsymbol{\omega}_i-\boldsymbol{\omega}^*_{\ell}\|,\forall \ell\neq j\},
\end{align}
where we denote $\boldsymbol{\omega}_i:=(\Abm_{i},\bbm_i,\veta_i)$ and $\boldsymbol{\omega}^*_j:=(\Abm^*_j,\bbm^*_j,\veta^*_j)$ for any $j\in[N^*]$. Notably, the cardinality of Voronoi cell $\mathcal{V}_j$ is exactly the number of fitted components that approximates $\omega^*_j$. Then, the Voronoi loss function used for our analysis is given by:
\begin{align}
\label{eq:D1_loss}
\mathcal{L}_1(G,G_*):=\sum_{j=1}^{N^*}\Big|\sum_{i\in\mathcal{V}_j}\exp(c_i)&-\exp(\cj)\Big| + \sum_{j:|\mathcal{V}_j|=1}\sum_{i\in\mathcal{V}_j}\exp(c_i)[\|\Delta \Abm_{ij}\|+\|\Delta\bbm_{ij}\|+\|\Delta\eta_{ij}\|] \nonumber \\
    & +\sum_{j:|\mathcal{V}_j|>1}\sum_{i\in\mathcal{V}_j}\exp(c_i) \Big[\|\Delta \Abm_{ij}\|^{\frac{\bar{r}(|\mathcal{V}_j|)}{2}}+\|\Delta\bbm_{ij}\|^{\bar{r}(|\mathcal{V}_j|)}+\|\Delta\eta_{ij}\|^{2}\Big],
\end{align}
where we denote $\Delta \Abm_{ij}:=\Abm_{i}-\Abm^*_{j}$, $\Delta \bbm_{ij}:=\bbm_i-\bbm^*_{j}$, and $\Delta \veta_{ij}:=\veta_i-\veta^*_j$. 
\\

\noindent
Equipped with the above Voronoi loss $\mathcal{L}_1$, we are now ready to capture the parameter estimation rates in Theorem~\ref{theorem:poly_strongly_identifiable}, whose proof can be found in Appendix~\ref{appendix:poly_strongly_identifiable}.
\begin{theorem}
    \label{theorem:poly_strongly_identifiable} 
    Suppose that the expert function $\xbm \mapsto \mathcal{E}(\xbm,\veta)$ is poly-strongly identifiable, then we achieve the following lower bound for any $G\in\mathcal{G}_N(\Theta)$:
    \begin{align*}
         \normf{f_{G}-f_{G_*}}\gtrsim\mathcal{L}_1(G,G_*),
    \end{align*}
    which together with Theorem~\ref{theorem:regression_rate} indicates that $\mathcal{L}_1(\widehat{G}_n,G_*)=\mathcal{O}_{P}(\sqrt{\log(n)/n})$.
\end{theorem}

\noindent
There are two main implications from the results of Theorem~\ref{theorem:poly_strongly_identifiable}. First, it follows from the formulation of the loss function $\mathcal{L}_1$ that exact-specified parameters $\Abm^*_{j}, \bbm^*_{j},\veta^*_{j}$, i.e. $j\in[N^*]:|\mathcal{V}_j(\widehat{G}_n)|=1$, share the same estimation rate of order $\widetilde{\mathcal{O}}_P(n^{-1/2})$. Note that as the expert $\mathcal{E}(\xbm,\veta)$ is a Lipschitz function in $\veta$, then by denoting $\widehat{G}_n:=\sum_{i=1}^{\widehat{N}_n}\exp(\widehat{c}^n_{i})\delta_{(\widehat{\Abm}^n_{i},\widehat{\bbm}^n_i,\widehat{\veta}^n_i)}$, we get 
\begin{align}
    \label{eq:expert_rate}
    \sup_{\xbm} |\mathcal{E}(\xbm,\widehat{\veta}^n_i)-\mathcal{E}(\xbm,\veta^*_j)| 
    \lesssim \|\widehat{\veta}^n_i-\veta^*_j\|= \widetilde{\mathcal{O}}_P(n^{-1/2}),     
\end{align}
for any $i\in\mathcal{V}_j(\widehat{G}_n)$.
The above bound indicates that if the poly-strongly identifiable expert $\mathcal{E}(\xbm,\veta^*_j)$ is fitted by exactly one expert, it has an estimation rate of order $\widetilde{\mathcal{O}}_P(n^{-1/2})$. Second, for over-specified parameters $\Abm^*_{j}, \bbm^*_{j}, \veta^*_{j}$, where $j\in[N^*]:|\mathcal{V}_j(\widehat{G}_n)|>1$, the rates for estimating them are substantially slower. In particular, the estimation rates for $\Abm^*_j$ and $\bbm^*_j$ are of orders $\widetilde{\mathcal{O}}_P(n^{-1/\bar{r}(|\mathcal{V}_j(\widehat{G}_n)|)})$ and $\widetilde{\mathcal{O}}_P(n^{-1/2\bar{r}(|\mathcal{V}_j(\widehat{G}_n)|)})$, respectively, which are determined by the solvability of the system~\eqref{definition:polynomial_equation}. For instance, when those parameters are fitted by three components, the previous rates become $\widetilde{\mathcal{O}}_P(n^{-1/6})$ and $\widetilde{\mathcal{O}}_P(n^{-1/12})$. Meanwhile, parameters $\veta^*_j$ enjoy an estimation rate of order $\widetilde{\mathcal{O}}_P(n^{-1/4})$. By arguing in a similar fashion to equation~\eqref{eq:expert_rate}, the rates for estimating the experts $\mathcal{E}(\xbm,\veta^*_j)$ are also $\widetilde{\mathcal{O}}_P(n^{-1/4})$.

\subsubsection{Linear Experts}
\label{sec:linear_expert}
Subsequently, we provide the convergence rates for parameter estimation and expert estimation under MoE with the quadratic polynomial gate. \\
 
\noindent
It is worth noting that the linear expert function $\mathcal{E}(\xbm,({\bm \beta}_{1},\beta_{0}))={\bm \beta}_{1}^{\top}\xbm+\beta_{0i}$, where $({\bm \beta}_{1},\beta_{0})\in\mathbb{R}^d\times\mathbb{R}$, violates the poly-strong identifiability condition, leading to an interaction among parameters via the following PDE:
\begin{align}
    \label{eq:PDE_linear}
    \frac{\partial^2 F}{\partial \bbm\partial\beta_0}(\xbm;\Abm^*_{i},\bbm^*_{i},{\bm \beta}^*_{1i},\beta^*_{0i})
     =\frac{\partial F}{\partial {\bm \beta}_1}(\xbm;\Abm^*_{i},\bbm^*_{i},{\bm \beta}^*_{1i},\beta^*_{0i}),
\end{align}
where we denote $F(\xbm;\Abm,\bbm,{\bm \beta}_1,\beta_0):=\exp(\xbm^{\top}\Abm\xbm+\bbm^{\top}\xbm)({\bm \beta}_{1}^{\top}\xbm+\beta_{0})$. 
To capture the effects of such parameter interaction on the convergence of parameter estimation, let us design another Voronoi loss tailored to this setting. More specifically, we define for any $r\geq 1$ that
\begin{align}
    \label{eq:poly_linear_experts_loss}
    \mathcal{L}_{2,r}(G,G_*):=\sum_{j=1}^{N^*}\sum_{i\in\mathcal{V}_j}\exp(c_i)\Big[\|\Delta \Abm_{ij}\|^r+\|\Delta \bbm_{ij}\|^r&+\|\Delta {\bm \beta}_{1ij}\|^r+|\Delta \beta_{0ij}|^r\Big]\nonumber\\
    &+\sum_{j=1}^{N^*}\Big|\sum_{i\in\mathcal{V}_j}\exp(c_i)-\exp(c^*_j)\Big|.
\end{align}
Given the above loss function, we demonstrate in the following theorem that the parameter and expert estimation rates are seriously affected by the parameter interaction in equation~\eqref{eq:PDE_linear}.
\begin{theorem}
    \label{theorem:poly_linear_experts}
    Assume that the experts take the linear form $\mathcal{E}(\xbm,(\boldsymbol{\beta}_1,\beta_0))={\bm \beta}_{1}^{\top}\xbm+\beta_{0}$, then we achieve the following minimax lower bound of estimating $G_*$:
    \begin{align*}
        \inf_{\overline{G}_n\in\mathcal{M}_{N}(\Theta)}\sup_{G\in\mathcal{M}_{N}(\Theta)\setminus\mathcal{M}_{N^*-1}(\Theta)}\bbE_{f_{G}}[\mathcal{L}_{2,r}(\overline{G}_n,G)]\gtrsim n^{-1/2},
    \end{align*}
    for any $r\geq 1$, where $\bbE_{f_{G}}$ indicates the expectation taken w.r.t the product measure with $f^n_{G}$.
\end{theorem}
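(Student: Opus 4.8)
The plan is to build a sequence of mixing measures that deliberately excites the parameter interaction~\eqref{eq:new_PDE} and then invoke Le Cam's two-point bound. Fix $r\geq 1$. Since $N>N^{*}$, the measure $G_{*}$ itself (which has exactly $N^{*}$ atoms) and any measure obtained from $G_{*}$ by splitting one atom into a small cluster of at most $N-N^{*}+1$ atoms both lie in $\mathcal{G}_{N}(\Theta)\setminus\mathcal{G}_{N^{*}-1}(\Theta)$. For the Gaussian model~\eqref{eq:regression_framework} one has $\mathrm{KL}\big(P^{n}_{G},P^{n}_{G'}\big)=\tfrac{n}{2\sigma^{2}}\normf{f_{G}-f_{G'}}^{2}$, so if a sequence $G_{n}$ satisfies $\normf{f_{G_{n}}-f_{G_{*}}}\leq c\,n^{-1/2}$ for a sufficiently small constant $c$, the $n$-fold products $P^{n}_{G_{*}}$ and $P^{n}_{G_{n}}$ have total variation distance bounded away from $1$. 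Together with the routine check that $\mathcal{L}_{2,r}$ obeys the weak triangle inequality $\mathcal{L}_{2,r}(\bar G,G_{*})+\mathcal{L}_{2,r}(\bar G,G_{n})\gtrsim\mathcal{L}_{2,r}(G_{*},G_{n})$ for any $\bar G$ near $G_{*}$ (the $\|\cdot\|^{r}$ terms being mutually comparable on a fixed neighbourhood), Le Cam's inequality gives $\inf_{\overline{G}_{n}}\sup_{G}\bbE_{f_{G}}[\mathcal{L}_{2,r}(\overline{G}_{n},G)]\gtrsim\mathcal{L}_{2,r}(G_{n},G_{*})$. It thus suffices to exhibit $G_{n}$ with $\normf{f_{G_{n}}-f_{G_{*}}}\lesssim n^{-1/2}$ and $\mathcal{L}_{2,r}(G_{n},G_{*})\gtrsim n^{-1/2}$.

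\textbf{Construction of the hard sequence.} I would replace one ground-truth atom, say $\omega_{1}^{*}=(\Abm_{1}^{*},\bbm_{1}^{*},{\bm\beta}_{11}^{*},\beta_{01}^{*})$, by a cluster of atoms within distance $\varepsilon$ of $\omega_{1}^{*}$, with weights summing to $\exp(c_{1}^{*})$ so that the zeroth-order term cancels, keeping the other $N^{*}-1$ atoms fixed; call the result $G_{\varepsilon}$. A Taylor expansion around $\omega_{1}^{*}$ writes $f_{G_{\varepsilon}}-f_{G_{*}}$ as $1/(\text{denominator})$ times a linear combination of the functions $\xbm^{\nu}\,\partial^{|\gamma|}_{(\Abm,\bbm,{\bm\beta}_{1},\beta_{0})}F(\xbm;\omega_{1}^{*})$, where $F(\xbm;\Abm,\bbm,{\bm\beta}_{1},\beta_{0})=\exp(\xbm^{\top}\Abm\xbm+\bbm^{\top}\xbm)({\bm\beta}_{1}^{\top}\xbm+\beta_{0})$ and the scalar coefficients are symmetric polynomials in the cluster weights and displacements. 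The interactions~\eqref{eq:PDE} and~\eqref{eq:new_PDE} force many of these functions to coincide, so the linear map from weight/displacement data to function space has a large kernel; choosing the cluster data to solve the attendant homogeneous polynomial system (the system~\eqref{definition:polynomial_equation} enlarged by the extra variables contributed by the $\beta_{1},\beta_{0}$ interaction) annihilates all Taylor contributions of order up to $K-1$ for any prescribed $K$, while keeping at least one displacement, e.g. some $\|\Delta\bbm_{i1}\|$, of exact order $\varepsilon$ and all of them $\lesssim\varepsilon$. Consequently $\normf{f_{G_{\varepsilon}}-f_{G_{*}}}\lesssim\varepsilon^{K}$ whereas $\mathcal{L}_{2,r}(G_{\varepsilon},G_{*})\asymp\varepsilon^{r}$.

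\textbf{Calibration.} Taking $K=r+1$ and $\varepsilon=\varepsilon_{n}$ with $\varepsilon_{n}^{K}\asymp n^{-1/2}$, i.e.\ $\varepsilon_{n}\asymp n^{-1/(2(r+1))}$, yields $\normf{f_{G_{\varepsilon_{n}}}-f_{G_{*}}}\lesssim n^{-1/2}$, so the Le Cam hypothesis is met, while $\mathcal{L}_{2,r}(G_{\varepsilon_{n}},G_{*})\asymp n^{-r/(2(r+1))}\gtrsim n^{-1/2}$, which is the claimed bound; since $r\geq 1$ was arbitrary this finishes the proof modulo the construction.

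\textbf{Main obstacle.} The heart of the argument is the second step: one must (a) push the Taylor expansion through the softmax normalization, tracking the denominator factors attached to each numerator derivative; (b) identify precisely which monomials $\xbm^{\nu}$ times which derivatives of $F$ become linearly dependent under~\eqref{eq:PDE} and~\eqref{eq:new_PDE}, and verify that the resulting homogeneous polynomial system admits non-trivial solutions at \emph{every} order --- this unboundedness of the achievable cancellation order $K$ with only a bounded number of extra atoms is exactly what distinguishes the (weakly identifiable) linear experts from the strongly identifiable case, where the order is capped by $\bar r(\cdot)$ in~\eqref{definition:polynomial_equation}; and (c) bound the $L_{2}(\mu)$-norm of the order-$\geq K$ remainder by $\varepsilon^{K}$, which requires the exponential and polynomial factors to be $\mu$-integrable and the perturbed atoms to remain in a fixed compact set. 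Granting these points, the remaining steps are standard.
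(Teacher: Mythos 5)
Your reduction to a two-point Le Cam bound is the same as the paper's (its Lemma~\ref{lemma:log_n} packages exactly this step, including the weak triangle inequality for $\mathcal{L}_{2,r}$ and the identity $\mathrm{KL}=(f_{G'}(\xbm)-f_{G_*}(\xbm))^2/2\sigma^2$), and your calibration arithmetic is consistent. The gap is in the construction of the hard sequence, which is precisely the step that must use the linearity of the experts, and the mechanism you propose for it does not work as stated. You ask for a cluster whose displacements solve ``the attendant homogeneous polynomial system at every order $K$,'' with the surviving displacement sitting in a gating parameter such as $\bbm$. But for gating-parameter displacements the relevant system is exactly~\eqref{definition:polynomial_equation}, and by the definition of $\bar r(\cdot)$ it admits \emph{no} non-trivial solution once the order exceeds $\bar r(m)$ for a cluster of size $m$; with a bounded number of extra atoms the achievable cancellation order in the $(\Abm,\bbm)$ directions is capped --- that cap is the whole content of Theorem~\ref{theorem:poly_strongly_identifiable}. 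So the unbounded $K$ you need cannot come from that system, and your proposal never exhibits where it does come from.

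The paper's construction shows what is actually needed and is much simpler: split $\omega_1^*$ into exactly two atoms with equal weights $\tfrac12\exp(c_1^*)+\tfrac12 n^{-(r+1)}$, keep $\Abm,\bbm,{\bm\beta}_1$ at their true values, and perturb only the expert intercept, $\beta_{01}^n=\beta_{01}^*+1/n$ and $\beta_{02}^n=\beta_{01}^*-1/n$. Because $F$ is \emph{affine} in $\beta_0$, the term $A_n(\xbm)$ in the decomposition of $Q_n$ vanishes identically (the expansion in $\beta_0$ terminates at first order and the two $\pm1/n$ contributions cancel exactly); $B_n\equiv 0$ since the gating parameters are untouched; and the only residual is the weight-mismatch term $C_n(\xbm)=\mathcal{O}(n^{-(r+1)})$. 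Hence $\normf{f_{G_n}-f_{G_*}}=\mathcal{O}(n^{-(r+1)})$ while $\mathcal{L}_{2,r}(G_n,G_*)\asymp n^{-r}$, so the ratio vanishes --- no polynomial system, no order-$K$ remainder estimates. This is the idea your proposal is missing: the infinite-order cancellation comes from the affine dependence of the linear expert on its parameters (equivalently, from the interaction~\eqref{eq:new_PDE}), not from solvability of~\eqref{definition:polynomial_equation} at high order. If you replace your step (b) with this explicit perturbation and feed the resulting vanishing ratio into your Le Cam step at the loss scale $2\varepsilon=2n^{-1/2}$, the argument closes and coincides with the paper's proof.
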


\noindent
The proof of Theorem~\ref{theorem:poly_linear_experts} is in Appendix~\ref{appendix:poly_linear_experts}. A few remarks on the result of this theorem are in order. First, Theorem~\ref{theorem:poly_linear_experts} reveals that using linear experts make the estimation rates for all the parameters $\Abm^*_{i}$, $\bbm^*_{i}$, ${\bm \beta}^*_{1i}$, and $\beta^*_{0i}$ are slower than polynomial rates $\widetilde{\mathcal{O}}_P(n^{-1/2r})$ for any $r\geq 1$, and, thus, could be as slow as $\mathcal{O}_{P}(1/\log^{\tau}(n))$, for some $\tau>0$, owing to the parameter interaction in equation~\eqref{eq:PDE_linear}. Second, we have that
\begin{align*}
     \sup_{x} \Big|((\widehat{{\bm \beta}}^n_{1i})^{\top}\xbm+\widehat{\beta}^n_{0i})-(({\bm \beta}^*_{1j})^{\top}\xbm+\beta^*_{0j})\Big|\leq \sup_{\xbm} \|\widehat{{\bm \beta}}^n_{1i}-{\bm \beta}^*_{1j}\|\cdot\|\xbm\|+|\widehat{\beta}^n_{0i}-\beta^*_{0j}|.
\end{align*}
Since the input space $\mathcal{X}$ is bounded, the rates for estimating linear experts $({\bm \beta}^*_{1j})^{\top}\xbm+\beta^*_{0j}$ could also be of order $\mathcal{O}_{P}(1/\log^{\tau}(n))$. Hence, combining with the results in Theorem~\ref{theorem:poly_strongly_identifiable}, we deduce that the performance of a mixture of linear experts cannot compare to that of a mixture of non-linear experts in terms of the expert estimation problem. This observation is totally in line with the findings in \cite{chen2022theory}.

\subsection{Quadratic Monomial Gate}
\label{sec:quadratic_monomial}
In this section, we proceed to streamline the analysis of the quadratic monomial gate based on the regression framework in equation~\eqref{eq:regression_framework}. Due to the change in the gating function, the corresponding regression function is reformulated as follows:
\begin{align}
    \tilde{f}_{G_{*}}(\xbm) := \sum_{i=1}^{N^*} \frac{\exp(\xbm^{\top}\Abm^*_i\xbm+c^*_i)}{\sum_{j=1}^{N^*}\exp(\xbm^{\top}\Abm^*_j\xbm+c^*_j)}\cdot \mathcal{E}(\xbm,\veta^*_i).
\end{align}
In comparison with the quadratic polynomial gating, the first-degree monomial term $\bbm^{\top}\xbm$ has been removed from the affinity scores. As a consequence, the least squares estimator under this setting also changes accordingly to
\begin{align}
    \label{eq:mono_least_squared_estimator}
    \widetilde{G}_n:=\argmin_{G\in\mathcal{M}_{N}(\Theta)}\sum_{i=1}^{n}\Big(y_i-\tilde{f}_{G}(\xbm_i)\Big)^2.
\end{align}
Given the above estimator, we provide in Theorem~\ref{theorem:mono_regression_rate} the convergence rate of regression estimation $\tilde{f}_{\widetilde{G}_n}$ to the ground-truth regression function $\tilde{f}_{G_*}$.
\begin{theorem}[Regression Estimation Rate]
    \label{theorem:mono_regression_rate}
     Equipped with a least squares estimator $\widetilde{G}_n$ given in equation~\eqref{eq:mono_least_squared_estimator}, the model estimation $\tilde{f}_{\widetilde{G}_n}$ converges to the true model $\tilde{f}_{G_*}$ at the following rate:
    \begin{align}
        \label{eq:mono_model_bound}
        \normf{\tilde{f}_{\widetilde{G}_n}-\tilde{f}_{G_*}}=\mathcal{O}_{P}(\sqrt{\log(n)/n}).
    \end{align}
\end{theorem}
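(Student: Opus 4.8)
The plan is to follow the standard empirical-process route for least squares over mixing measures, exactly as in the proof of Theorem~\ref{theorem:regression_rate}, since the quadratic monomial gating MoE class $\{\tilde f_G : G \in \mathcal{G}_N(\Theta)\}$ differs from the quadratic polynomial class only by dropping the $\bbm^\top \xbm$ terms, so all the structural ingredients carry over with essentially no change. Concretely, I would first set up the regression problem in terms of the squared Hellinger (or $L_2(\mu)$) distance on the induced density class, observing that because the noise is Gaussian with known variance $\sigma^2$, controlling $\normf{\tilde f_{\widetilde G_n} - \tilde f_{G_*}}$ is equivalent, up to constants, to controlling the Hellinger risk of the conditional densities $p_{G}(y \mid \xbm) = \mathcal{N}(y; \tilde f_G(\xbm), \sigma^2)$. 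The key quantity is then the bracketing entropy integral $\mathcal{J}_B(\delta, \mathcal{P}_N) = \int_0^\delta \sqrt{H_B(t, \mathcal{P}_N, \|\cdot\|)} \, dt$, and the parametric rate $n^{-1/2}$ (up to logarithmic factors) will follow from the generic theorem of van de Geer (\cite{vandeGeer-00}) once I verify that this entropy integral behaves like $\delta \sqrt{\log(1/\delta)}$.

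The main technical step, therefore, is the bracketing-entropy bound: I would show that $\log N_B(\epsilon, \mathcal{P}_N, \|\cdot\|_{L_2(\mu)}) \lesssim \log(1/\epsilon)$ for the quadratic monomial gating MoE class. This requires (i) the parameter space $\Theta$ to be compact (a standing assumption carried from the polynomial case), (ii) Lipschitz continuity of $\xbm \mapsto \tilde f_G(\xbm)$ in the parameters $(\Abm_i, c_i, \veta_i)$ uniformly over $\Theta$ — which holds because the softmax-type gating weights $\exp(\xbm^\top \Abm_i \xbm + c_i)/\sum_j \exp(\xbm^\top \Abm_j \xbm + c_j)$ are smooth and bounded in $[0,1]$, and the expert function $h(\xbm, \veta)$ is assumed parametric and (Lipschitz) regular — and (iii) a suitable envelope/tail condition so that the quadratic form $\xbm^\top \Abm_i \xbm$ does not blow up; under a sub-Gaussian or bounded assumption on $\mu$ this is controlled. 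Granting these, a standard covering argument for the finite-dimensional parameter set $\Theta \subseteq \RR^{N(d^2 + 1 + q)}$ transfers to an $\epsilon$-bracketing of $\mathcal{P}_N$ with $\log$-cardinality $O(\log(1/\epsilon))$, and the entropy integral is then $O(\delta\sqrt{\log(1/\delta)})$ as needed.

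With the entropy bound in hand, I would invoke the master theorem (as in the proof of Theorem~\ref{theorem:regression_rate}) which yields, for the least squares estimator $\widetilde G_n$, a bound of the form $\normf{\tilde f_{\widetilde G_n} - \tilde f_{G_*}} = \widetilde{\mathcal{O}}_P(\psi_n)$ where $\psi_n$ solves $\sqrt{n}\,\psi_n^2 \gtrsim \mathcal{J}_B(\psi_n, \mathcal{P}_N)$; plugging in $\mathcal{J}_B(\psi_n) \asymp \psi_n \sqrt{\log(1/\psi_n)}$ gives $\psi_n \asymp \sqrt{\log n / n} = \widetilde{\mathcal{O}}(n^{-1/2})$, which is the claimed rate~\eqref{eq:mono_model_bound}. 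The only genuine obstacle is the uniform Lipschitz/envelope control of the quadratic gating weights over the (possibly unbounded) support of $\mu$ — this is the step where one must be careful, but it is handled exactly as in the polynomial case since the monomial gating is a strict sub-case, so in practice this proof reduces to citing the argument of Theorem~\ref{theorem:regression_rate} almost verbatim, and the full details can be deferred to the appendix.
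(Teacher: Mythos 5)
Your proposal is correct and follows essentially the same route as the paper: the paper's proof of this theorem is a one-line reduction to the proof of Theorem~\ref{theorem:regression_rate}, which proceeds exactly as you describe — a bracketing-entropy bound $H_B(\varepsilon,\cdot)\lesssim\log(1/\varepsilon)$ obtained from compactness of $\Theta$, Lipschitzness/boundedness of the experts and gating weights, and boundedness of the input space, fed into van de Geer's master theorem with $\Psi(\delta)=\delta[\log(1/\delta)]^{1/2}$ and $\delta_n=\sqrt{\log(n)/n}$. The only cosmetic difference is that you phrase the reduction via Hellinger distance on the induced Gaussian conditional densities, while the paper works directly with the $L_2(\mu)$ norm of the regression functions; these are equivalent here.
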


\noindent
See Appendix~\ref{appendix:mono_regression_rate} for the proof of Theorem~\ref{theorem:mono_regression_rate}. It follows from the bound~\eqref{eq:mono_model_bound} that the regression estimation rate still remains parametric on the sample size, which matches that in Theorem~\ref{theorem:regression_rate} where we use the quadratic polynomial gate in the MoE-type regression function.\\

\noindent
Analogous to Section~\ref{sec:quadratic_polynomial}, we also derive a \emph{mono-strong identifiability} condition in Definition~\ref{def:mono_strong_identifiability} to determine which expert functions will have faster estimation rates than others. 
\begin{definition}[Mono-strong identifiability]
    \label{def:mono_strong_identifiability}
    We say that an expert function $\xbm \mapsto \mathcal{E}(\xbm,\veta)$ is mono-strongly identifiable if it is twice differentiable w.r.t its parameter $\veta$, and if for any $k\geq 1$ and distinct $\veta_1,\veta_2,\ldots,\veta_k$, the following set of functions in $\xbm$
    \begin{align*}
    \left\{\xbm^{\nu}\cdot\frac{\partial^{|\gamma|} \mathcal{E}}{\partial\veta^{\gamma}}(\xbm;\veta_j):j\in[k], \nu\in\mathbb{N}^d,\gamma\in\mathbb{N}^q, \ |\nu|\in\{0,2,4\}, \ 0\leq|\gamma|\leq 2-\frac{|\nu|}{2}\right\},
\end{align*}
    is linearly independent for almost every $\xbm$.
\end{definition}

\noindent
\textbf{Examples.} We can validate that two-layer FFN experts with non-linear activation functions such as $\relu$ and $\gelu$ satisfy the mono-strong identifiability condition. On the other hand, experts of the linear form $\mathcal{E}(\xbm,(\boldsymbol{\beta}_1,\beta_0))=\boldsymbol{\beta}_1^{\top}\xbm+\beta_0$ violate this condition.\\

\noindent
\textbf{Voronoi loss.} Now, we aim to establish the convergence rate of parameter and expert estimation under the mixture of strongly identifiable experts model with the quadratic monomial gating function. For that sake, let us design a new Voronoi loss function among parameters defined as below. 
\begin{align}
\label{eq:D2_loss}
\mathcal{L}_3(G,G_*):=\sum_{j=1}^{N^*}\Big|\sum_{i\in\mathcal{V}_j}\exp(c_i)-\exp(\cj)\Big|&+\sum_{j:|\mathcal{V}_j|=1}\sum_{i\in\mathcal{V}_j}\exp(c_i)\Big[\|\Delta \Abm_{ij}\|+\|\Delta \veta_{ij}\|\Big] \nonumber\\
    &+\sum_{j:|\mathcal{V}_j|>1}\sum_{i\in\mathcal{V}_j}\exp(c_i)\Big[\|\Delta \Abm_{ij}\|^{2}+\|\Delta \veta_{ij}\|^2\Big]. 
\end{align}
Now, we are ready to capture the parameter and expert estimation rates in Theorem~\ref{theorem:mono_strongly_identifiable}.

\begin{theorem}
    \label{theorem:mono_strongly_identifiable} 
    Assume that the expert function $\xbm\mapsto\mathcal{E}(\xbm,\veta)$ is mono-strongly identifiable, then we achieve the following lower bound for any $G\in\mathcal{M}_N(\Theta)$:
    \begin{align*}
         \normf{\tilde{f}_{G}-\tilde{f}_{G_*}}\gtrsim\mathcal{L}_3(G,G_*),
    \end{align*}
    which together with Theorem~\ref{theorem:regression_rate} indicates that $\mathcal{L}_3(\widetilde{G}_n,G_*)=\mathcal{O}_{P}(\sqrt{\log(n)/n})$.
\end{theorem}

\noindent
The proof of Theorem~\ref{theorem:mono_strongly_identifiable} is in Appendix~\ref{appendix:mono_strongly_identifiable}. A few comments regarding this theorem are in order: \\

\noindent
(i) The rates for estimating gating parameters $\Abm^*_j$ fitted by more than one atom, i.e. $|\mathcal{V}_{j}(\widetilde{G}_n)|>1$, are significantly improved to be of order $\widetilde{\mathcal{O}}_{P}(n^{-1/4})$. Those rates are much faster than their counterparts when using the quadratic polynomial gate, which stand at order $\widetilde{\mathcal{O}}_{P}(n^{-1/\bar{r}(|\mathcal{V}_j|)})$ (cf. Theorem~\ref{theorem:poly_strongly_identifiable}). This rate acceleration occurs due to the disappearance of the interaction among gating parameters in equation~\eqref{eq:PDE}. Meanwhile, the estimation rates for expert parameters $\veta^*_j$ remained unchanged at the order of $\widetilde{\mathcal{O}}_{P}(n^{-1/4})$; \\

\noindent
(ii) Model parameters $\Abm^*_j,\veta^*_j$  fitted by exactly one atom, i.e. $|\mathcal{V}_{j}(\widetilde{G}_n)|=1$, enjoy the parametric estimation rates of order $\widetilde{\mathcal{O}}_{P}(n^{-1/2})$, which are comparable to their counterparts in Theorem~\ref{theorem:poly_strongly_identifiable}.\\

\noindent
\textbf{Linear Experts.} Similar to the case of quadratic polynomial gate, the linear expert function $\mathcal{E}(\xbm,({\bm \beta}_{1},\beta_{0}))={\bm \beta}_{1}^{\top}\xbm+\beta_{0}$ in the quadratic monomial gate setting does not satisfy the mono-strong idenfiability condition. That violation yields the parameter and expert estimation rates of order
$\mathcal{O}(1/\log^{\tau}(n))$, for some $\tau>0$. The proof for this result is similar to that of Theorem~\ref{theorem:poly_linear_experts} in Section~\ref{sec:quadratic_polynomial}; therefore, it is omitted here.

\subsection{Practical Implications}
\label{sec:practical_implications}
In this section, we provide two practical implications from the convergence analysis of parameter estimation and expert estimation under the MoE models with the quadratic gating functions in Section~\ref{sec:quadratic_polynomial} and Section~\ref{sec:quadratic_monomial}.\\

\noindent
\textbf{1. New attention mechanism.}  Both the poly-strong identifiability and mono-strong identifiability conditions shed light on the design of new attention mechanism in practice. In particular, we may avoid linear experts as these experts do not satisfy these identifiability conditions and lead to considerably slow rates of parameter and expert estimations. The linear experts correspond to the linear value matrix in the attention mechanism~\eqref{eq:attention}. The poly-strong identifiability and mono-strong identifiability conditions suggest the usage of non-linear experts, which corresponds to the following new attention mechanism:
\begin{align}
    \texttt{Act-Att}(\qbm, \Kbm, \Vbm) = \softmax\left(\frac{\qbm^\top \Kbm^\top}{\sqrt{d_k}}\right) \varphi(\Vbm), \label{eq:novel_attention}
\end{align}
where $\varphi$ is a non-linear activation function. We name the new attention mechanism~\eqref{eq:novel_attention} as \emph{active-attention}. Our experiments with the active-attention in Figure~\ref{fig:image_text} and Table~\ref{tab:time_series} for both classification and time series forecasting tasks with a wide range of non-linear functions demonstrate the favorable performance of active-attention over the standard attention mechanism. \\

\noindent
\textbf{2. Benefits of quadratic monomial gating~\eqref{eq:quadratic_mononial_gate} over quadratic polynomial gating~\eqref{eq:quadratic_gate}.} The remarks after Theorems~\ref{theorem:poly_strongly_identifiable} and~\ref{theorem:mono_strongly_identifiable} indicate that the estimation rates of the  gating parameters are independent of the amount of over-specification of the number of experts and much better than those of the polynomial gating parameters, which become very slow even when we only overspecify the model by a few experts. That theoretical advantage of the monomial gating over the polynomial gating confirms the benefits of the widely used linear embeddings of the keys and queries without bias terms in the attention in practice.

\section{Related Work}
\label{sec:related_work}
\textbf{Quadratic Gating MoE.} Xu et al. \citep{xu1994alternative} introduced an alternative model for MoE formulation, which incorporates a gating network with a distinct parametric structure. Their gating function is defined by
\begin{align}
\label{eq:alternative_gate}
\mathcal{G}(\xbm)_i
&= \frac{\pi_i p(\xbm \mid \vtheta_i)}{\sum_{j=1}^N \pi_j p(\xbm \mid \vtheta_j)},
\quad i = 1, \ldots, N,
\end{align}
where $(\pi_1, \ldots, \pi_N)^{\top}$ belongs to the simplex $\Delta^{N-1}$, and ${(\pi_i, \vtheta_i), i = 1, \ldots, N}$ are learnable parameters. Here, each density $p(\xbm \mid \vtheta_i)$ is from the exponential family. When specifically choosing Gaussian densities with mean $\bm{\mu}_i \in \RR^d$ and covariance matrix $\bm{\Sigma}_i \in \RR^{d\times d}$, the gating function naturally takes a quadratic form:
\begin{align*}
    \mathcal{G}(\xbm)_i = \frac{\exp\left(-\frac12 (\xbm - \bm{\mu}_i)^\top \bm{\Sigma}_{i}^{-1}(\xbm - \bm{\mu}_i) + \log \pi_i\right)}{\sum_{j=1}^N \exp\left(-\frac12 (\xbm - \bm{\mu}_j)^\top \bm{\Sigma}_{j}^{-1}(\xbm - \bm{\mu}_j) + \log \pi_j\right)}.
\end{align*}
This quadratic gating approach enables explicit analytic parameter updates within the Expectation Maximization (EM) algorithm, eliminating the need for iterative nonlinear optimization during training. As a result, the authors introduced a simplified, single-loop EM algorithm, significantly reducing computational complexity and improving convergence speed compared to traditional double-loop EM methods.\\

\noindent
Xu et al. \citep{xu1994alternative} demonstrated their approach on applications such as piecewise polynomial regression and classifier combination, reporting faster convergence and competitive or improved accuracy relative to earlier MoE methods. Their work provides an early theoretical foundation for quadratic gating, directly connecting to modern studies exploring quadratic MoE structures in contemporary deep learning frameworks.\\

\noindent
The gating function $\mathcal{G}(\xbm)_i$ essentially models the posterior probability $\PP(\zeta = i \mid \xbm)$, indicating the likelihood that $\xbm$ is assigned to the partition associated with the $i$-th expert. 
Here, $\zeta \in \{1, \ldots, N\}$ is a latent gating variable that selects a particular expert. 
More precisely, the gating function defined in Equation~\eqref{eq:alternative_gate} interprets this posterior probability when $\zeta$ follows a categorical distribution with parameters $(\pi_1, \ldots, \pi_N)^{\top} \in \Delta^{N-1}$, and conditioned on the event that $\zeta$ selects the $i$-th expert, the distribution of $\xbm$ is modeled by a specific parametric distribution $p(~\cdot \mid \vtheta_i)$.
Motivated by this interpretation of gating function, Liao et al.~\citep{liao2007quadratically} introduced Quadratically Gated Mixture of Experts (QGME), a statistical model for multi-class, nonlinear classification when input features are partially missing.\\

\noindent
\textbf{Theoretical results for MoE models.} Previous works have theoretically investigated MoE models in two main aspects, including the statistical convergence behavior of MoE, and the applications of MoE in deep learning.   
Regarding the statistical convergence behavior of MoE, Mendes et al. \cite{mendes2011convergence} took into account a mixture of polynomial regression experts and studied the maximum likelihood estimator (MLE) of that model. In particular, they established the convergence rate of the Kullback-Leibler (KL) divergence between the estimated density and the true density as the sample size increased, and then offered useful insight into the trade-off between the model complexity and the number of experts based on these results. Next, Ho et al. \cite{ho2022gaussian} and Nguyen et al. \cite{nguyen2023demystifying} continued to investigate the MLE convergence behavior but for Gaussian MoE models with covariate-free and covariate-dependent gating functions, respectively, where the response variable conditioning on the covaritate variable followed from a mixture of Gaussian distributions. Their results indicated that the convergence rates of parameter estimations might be substantially slowed down due to parameter interactions expressed via some partial differential equations (PDEs). Unlike these works, Nguyen et al. \cite{nguyen2024leastsquare} then considered a regression framework where the regression function admitted the form of a softmax gating MoE, and then analyzed the convergence of least squares estimators of parameters and experts. They observed that feed-forward expert networks have much faster estimation rates than polynomial experts.\\

\noindent
Regarding the applications of MoE in deep learning, Chen et al. \cite{chen2022theory} demonstrated that a single expert was incapable of learning a classification problem with intrinsic cluster structures. On the other hand, this problem would be addressed by using a sparsely gated MoE with each expert formulated as a two-layer non-linear convolutional neural networks. Next, MoE was shown to effectively solve the catastrophic forgetting issue in continual learning through overparameterized linear regression tasks \cite{li2025cl,le2024mixture}. More specifically, they illustrated that experts can be trained to specialize in various tasks, while the router can choose appropriate experts for each task. Lastly, MoE and its hierarchical variant with Laplace gating function were also theoretically verified to perform well in multimodal learning where the data included different data modalities, namely text, images, time series, and tabular data in \cite{han2024fusemoe} and \cite{nguyen2024hmoe}, respectively. 

\section{Experiments}
\label{sec:experiments}
In this section, we conduct several experiments to verify: (i) the theoretical results presented in Section~\ref{sec:theory}, (ii) the empirical benefits of quadratic gating over standard linear gating in language modeling, and (iii) the favorable performance of the proposed active-attention mechanism~\eqref{eq:novel_attention} over standard attention mechanism. Furthermore, we also perform an ablation study on the number of heads when using the proposed active-attenion mechanism, which is deferred to Appendix~\ref{appendix:ablation_study} due to the space limit. Meanwhile, all the experimental details are provided in Appendix~\ref{appendix:experimental-details}.

\subsection{Numerical Experiments} We generate synthetic data based on the model described in equation~\eqref{eq:regression_framework}. \\

\noindent
\textbf{Model details.} We now provide the details for the model parameters in model. The variance of Gaussian noise is specified as $\sigma^2 = 0.049$. The true parameters for the gating network, $(\Abm^*_i, \bbm^*_i, c^*_i) \in  \mathbb{R}^{d\times d} \times \mathbb{R}^d \times \mathbb{R}$, are drawn independently of an isotropic Gaussian distribution with zero mean and variance $\sigma_r^2 = 0.01/d$ for $1 \le i \le 7$, and otherwise are set to zero. Similarly, the true parameters of the experts, $(\bm{\beta}_{1i}^*, \bm{\beta}_{0i}^*) \in \mathbb{R}^d \times \mathbb{R}$, are drawn independently of an isotropic Gaussian distribution with zero mean and variance $\sigma_e^2 = 1/d$ for all experts. These parameters remain unchanged for all experiments.
\\

\noindent
\textbf{Training procedure.} For each sample size $n$, spanning from $10^3$ to $10^5$, we perform $20$ experiments. In every experiment, the parameters initialization for the gate's and experts' parameters are adjusted to be near the true parameters, minimizing potential instabilities from the optimization process. Subsequently, we execute gradient descent for $10$ epochs, employing a learning rate of $\eta = 0.1$ to fit a model to the synthetic data. All the numerical experiments are conducted on a MacBook Air equipped with an M1 chip CPU.\\

\noindent
We evaluate the empirical convergence rates of parameter estimation for (1) quadratic polynomial gate and (2) quadratic monomial gate involving linear and $\relu$ experts in an over-specified setting. 
Data for each experiment are produced following equation~\eqref{eq:regression_framework}, based on the true model for each case.
For each experiment, we compute the respective Voronoi losses for each model and present the average values for different sample sizes in Figure~\ref{fig:emp_rates}. 
Error bars representing two standard deviations are also shown. 
Figure~\ref{fig:monomial_plot} investigates the empirical convergence rates of linear and $\relu$ experts within a quadratic monomial gate setting. \\

\noindent
\textbf{Results.} As shown in Figure~\ref{fig:emp_rates}, the empirical convergence rates of the Voronoi losses follow the trends predicted by our theory in Section~\ref{sec:theory}. For the quadratic monomial gate (Figure~\ref{fig:emp_rates}a), the loss for ReLU experts decreases at a rate close to $\mathcal{O}(n^{-0.52})$, while the loss for linear experts shows a much slower rate of around $\mathcal{O}(n^{-0.09})$. For the quadratic polynomial gate (Figure~\ref{fig:emp_rates}b), we again observe a rate of about $\mathcal{O}(n^{-0.49})$ for ReLU experts and a slower rate of $\mathcal{O}(n^{-0.06})$ for linear experts. These results are consistent with our theoretical analysis.

\begin{figure*}[ht]
    \centering
    \begin{subfigure}{.47\textwidth}
        \centering
        \includegraphics[scale = .468]{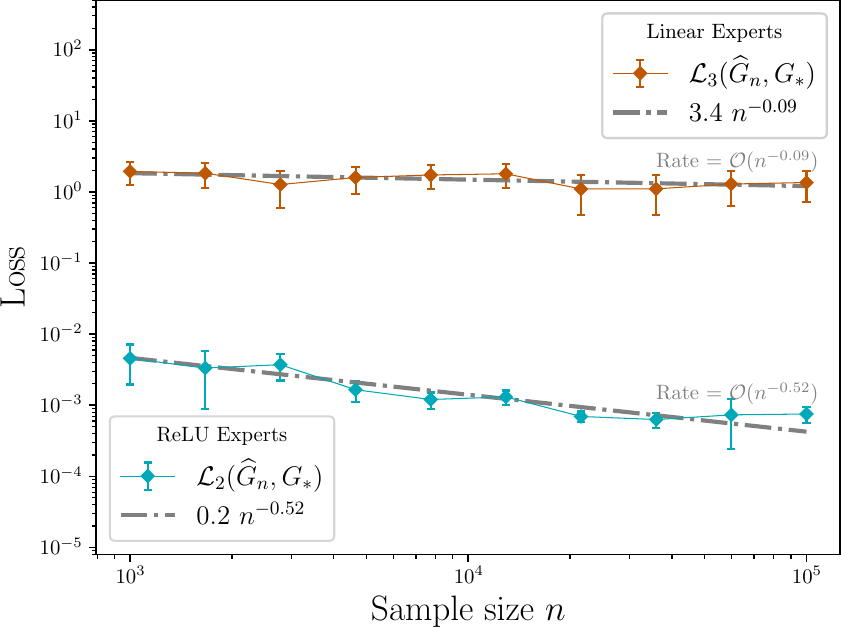}
        \caption{Quadratic monomial gate}
        \label{fig:monomial_plot}
    \end{subfigure}
    \hspace{0.3cm}
    \begin{subfigure}{.47\textwidth}
        \centering
        \includegraphics[scale = .468]{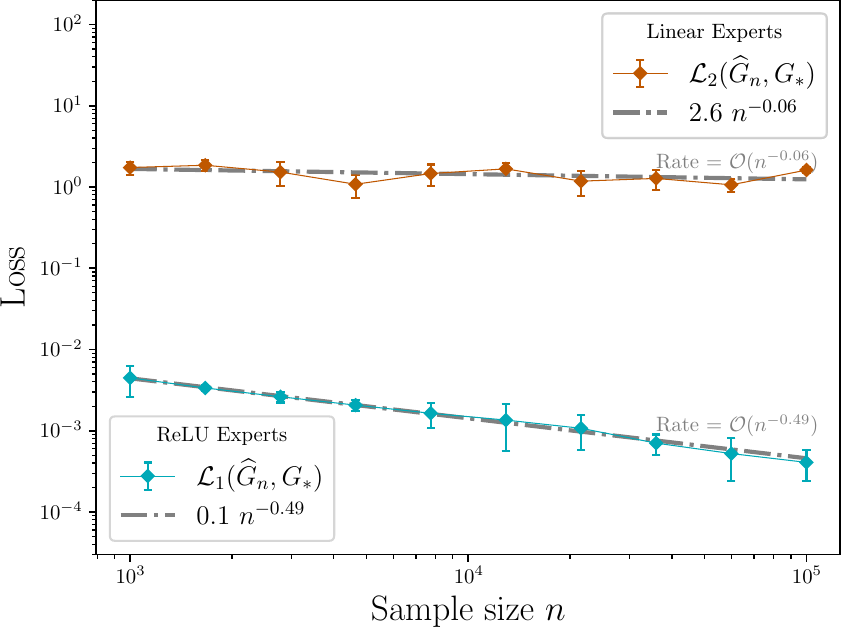}
        \caption{Quadratic polynomial gate}	
        \label{fig:polynomial_plot}
    \end{subfigure}
    \caption{\small Logarithmic plots displaying empirical convergence rates. Subfigures~\ref{fig:monomial_plot} and \ref{fig:polynomial_plot} depict the empirical averages of the corresponding Voronoi losses for the quadratic polynomial and quadratic monomial settings, respectively.
    The orange lines and blue lines respectively depict the Voronoi loss associated with the linear experts and the $\relu$ experts. The gray dash-dotted lines are used to illustrate the fitted lines to indicate the empirical convergence rate.}
    \label{fig:emp_rates}
\end{figure*}

\subsection{Quadratic Gating versus Linear Gating}

To assess the impact of gating function expressiveness on downstream performance, we conduct a set of experiments comparing quadratic and linear gating in GPT2-based MoE models.\\

\noindent
\textbf{Experimental setup.}
We evaluate the impact of gating function design in MoE models using the GPT2 (124M) architecture~\citep{radford2019language}. All MoE models use 8 experts with Top2 routing. We compare three configurations: a standard dense GPT2 model, a GPT2-MoE model with a linear gating network, and a GPT2-MoE model with a quadratic gating network. To ensure fairness, we control for the total number of activated parameters across models by adjusting the hidden sizes of the experts accordingly. All models are trained on a 10 billion token subset of the FineWeb-Edu dataset~\citep{penedo2024fineweb}. 
\\

\noindent
\textbf{Results.}
Table~\ref{tab:quadratic_moe_results} summarizes the performance of each model on validation loss and the HellaSwag~\citep{zellers2019hellaswag} benchmark. The MoE model with a quadratic gating function consistently achieves the best results across both metrics. It shows improved generalization compared to the dense GPT2 baseline and demonstrates more effective expert selection than the linear gating variant. While the linear MoE slightly improves over the dense model in terms of training loss, it underperforms on HellaSwag, suggesting limited transferability. These results highlight the advantage of using a more expressive gating function in downstream tasks.

\begin{table}[ht]
    \caption{Comparison of GPT2-MoE performance (using linear and quadratic gating) against baseline models on the HellaSwag benchmark.}
    \centering
    \begin{tabular}{c|c|c}
        \Xhline{4\arrayrulewidth}
        Model & Val. Loss & HellaSwag (\%)\\
        \hline
        Dense & 3.0211 & 30.74\% \\
        MoE (linear) & 2.9928 & 29.95\% \\
        MoE (quadratic) & \textbf{2.9712} & \textbf{32.11\%} \\
        \Xhline{4\arrayrulewidth}
    \end{tabular}
    \label{tab:quadratic_moe_results}
\end{table}

\subsection{Performance of Active-Attention Mechanism} 

We empirically demonstrate the effects of employing nonlinear activation functions in the proposed active-attention mechanism~\eqref{eq:novel_attention}. Our experiments span large-scale image classification tasks on CIFAR-10 \citep{krizhevsky2009learning} and ImageNet \citep{russakovsky2015imagenet}, language modeling on WikiText-103 \citep{merity2016pointer}, and multivariate time series forecasting across 8 benchmarks. We evaluate the impact of 5 commonly used activation functions, including ReLU \citep{agarap2018deep}, GELU \citep{hendrycks2016gaussian}, SiLU \citep{elfwing2018sigmoid}, Sigmoid, and Tanh \citep{nwankpa2018activation}. 
Figures \ref{fig:image_text} (a) and (b) present the results of image classification using different activation functions in self-attention, with ViT \citep{dosovitskiy2020image} and CaiT \citep{touvron2021going} as the base models. For CIFAR-10, we employed the ViT-Tiny and CaiT-Tiny models, while for ImageNet, we utilized the ViT-Base and CaiT-Medium models. Figure \ref{fig:image_text} (c) displays the results of the language modeling task on WikiText-103, using the standard multi-head self-attention transformer \citep{vaswani2017attention}. Additionally, we tested various activation functions on the Performer model \citep{choromanski2020rethinking} as another backbone. Our findings show that the GELU and ReLU activation functions greatly improve performance compared to linear activation functions. This aligns with prior research, suggesting that these two activation functions are preferred in large-scale deep networks due to their ability to support more efficient and stable training.
\\

\begin{table*}[t]
\caption{\small We further assess the effectiveness of nonlinear activation functions on transformer-based time-series forecasting models across eight forecasting tasks. The results show the averaged mean squared error across five random experiments, with the best results highlighted in \textbf{bold} and the second-best results \underline{underlined}. The results indicate that in most situations, Tanh and Sigmoid functions outperformed other activation functions in these tasks.}
\centering
\renewcommand\arraystretch{1.3}
\resizebox{\textwidth}{!}{%
\begin{tabular}{c|c|c|c|c|c|c|c|c|c} 
\Xhline{4\arrayrulewidth}
\multicolumn{2}{c}{Model $\backslash$ Dataset} & Weather & Traffic & Electricity & Illness & ETTh1 & ETTh2 & ETTm1 & ETTm2 \\ \hline
\multirow{7}{*}{ PatchTST } & \textit{Linear} & \textit{0.197} & \textit{0.383} & \textit{0.152} & \textit{1.474} & \textit{0.414} & \textit{0.338} & \textit{0.331} & \textit{0.220} \\
& GELU & 0.195 & 0.382 & 0.149 & \underline{1.520} & 0.413 & 0.337 & 0.332 & 0.221 \\
& ReLU & 0.196 & \underline{0.380} & 0.150 & 1.551 & 0.413 & 0.336 & 0.331 & 0.218 \\
& Sigmoid & \underline{0.192} & 0.386 & \underline{0.146} & 1.613 & \underline{0.411} & \textbf{0.325} & \underline{0.328} & \underline{0.216} \\
& SiLU & 0.196 & 0.381 & 0.149 & 1.559 & 0.413 & 0.337 & 0.333 & 0.221 \\
& Tanh & \textbf{0.187} & \textbf{0.375} & \textbf{0.141} & \textbf{1.447} & \textbf{0.410} & \underline{0.329} & \textbf{0.325} & \textbf{0.212} \\ \hline
\multirow{7}{*}{ Transformer } & \textit{Linear} & \textit{0.835} & \textit{0.748} & \textit{0.296} & \textit{4.832} & \textit{1.328} & \textit{1.152} & \textit{1.138} & \textit{1.389} \\
& GELU & \underline{0.804} & 0.726 & 0.302 & \textbf{4.129} & \underline{1.269} & 1.134 & 1.134 & \textbf{1.353} \\
& ReLU & 0.839 & 0.735 & \underline{0.272} & 4.224 & 1.314 & \underline{1.102} & \textbf{1.116} & 1.382 \\
& Sigmoid & 0.811 & \textbf{0.714} & 0.278 & 4.972 & 1.285 & \textbf{1.086} & 1.132 & \underline{1.357} \\
& SiLU & 0.823 & 0.756 & 0.293 & 4.535 & 1.334 & 1.157 & 1.153 & 1.379 \\
& Tanh & \textbf{0.797} & \underline{0.721} & \textbf{0.269} & \underline{4.216} & \textbf{1.255} & 1.114 & \underline{1.125} & 1.364 \\ 
\Xhline{4\arrayrulewidth}
\end{tabular}%
}
\label{tab:time_series}
\end{table*}

\noindent
Table \ref{tab:time_series} further evaluates the impact of different activation functions on transformer-based time-series forecasting models across 8 forecasting tasks. In this experiment, we employ the state-of-the-art PatchTST model \citep{Yuqietal-2023-PatchTST} and the standard self-attention transformer as the backbone. Unlike the results observed in Figure \ref{fig:image_text}, in addition to GELU, we find that Tanh and Sigmoid functions also show prominent advantages over linear activation function. We speculate that this is due to the smoothing gradients provided by Tanh and Sigmoid, which may help in capturing subtle patterns in time-series data. Additionally, since these tasks tend to be smaller and more prone to overfitting, the saturation effects of Tanh and Sigmoid could serve as a regularization mechanism by limiting output ranges and avoiding extreme activations.

\begin{figure*}[ht]
  \centering
  \resizebox{\textwidth}{!}{%
    \begin{tabular}{ccc}
      \begin{tabular}{c}
        \includegraphics[width=.33\textwidth]{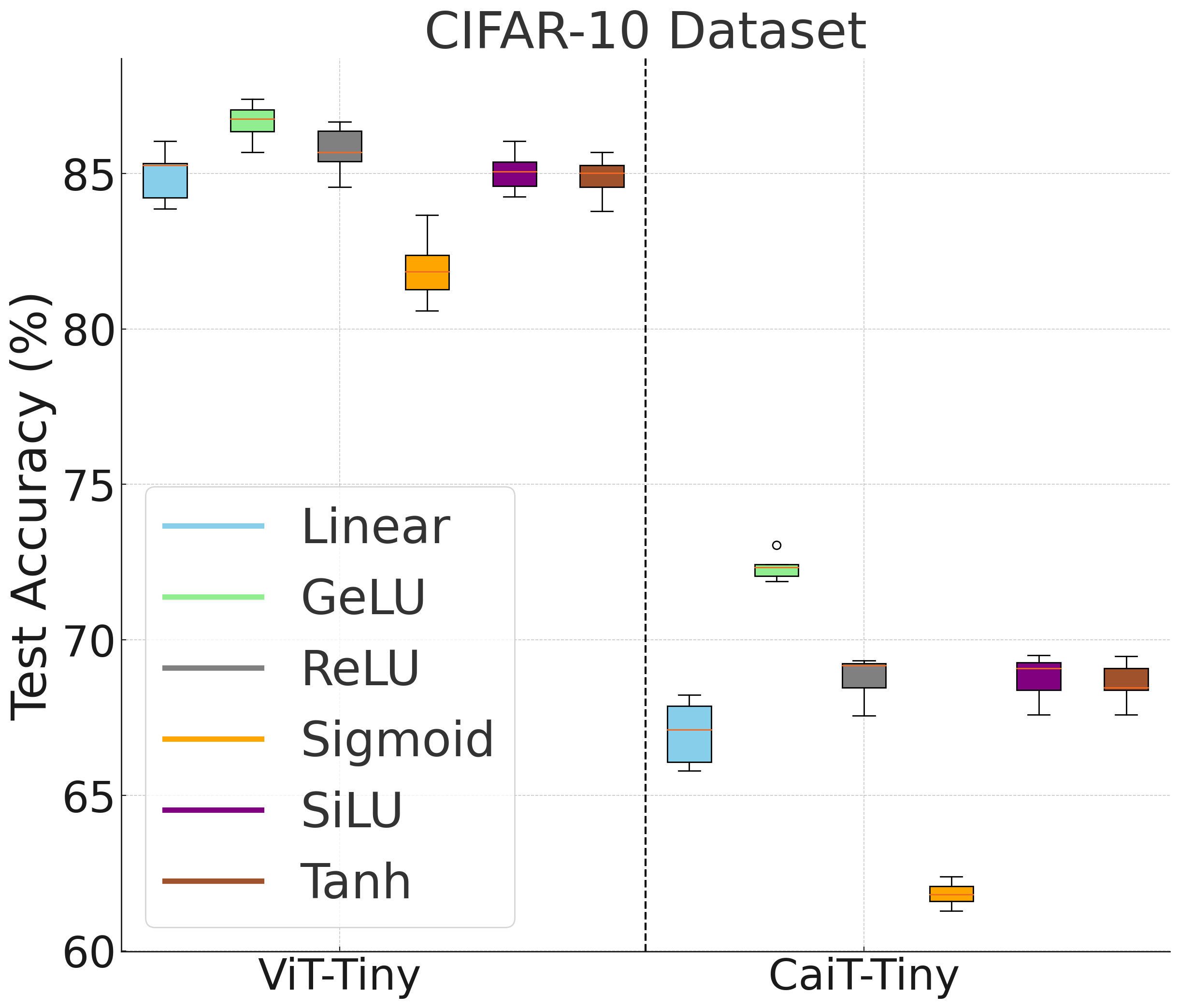}\\
        {\small (a)}
      \end{tabular} &
      \begin{tabular}{c}
        \includegraphics[width=.33\textwidth]{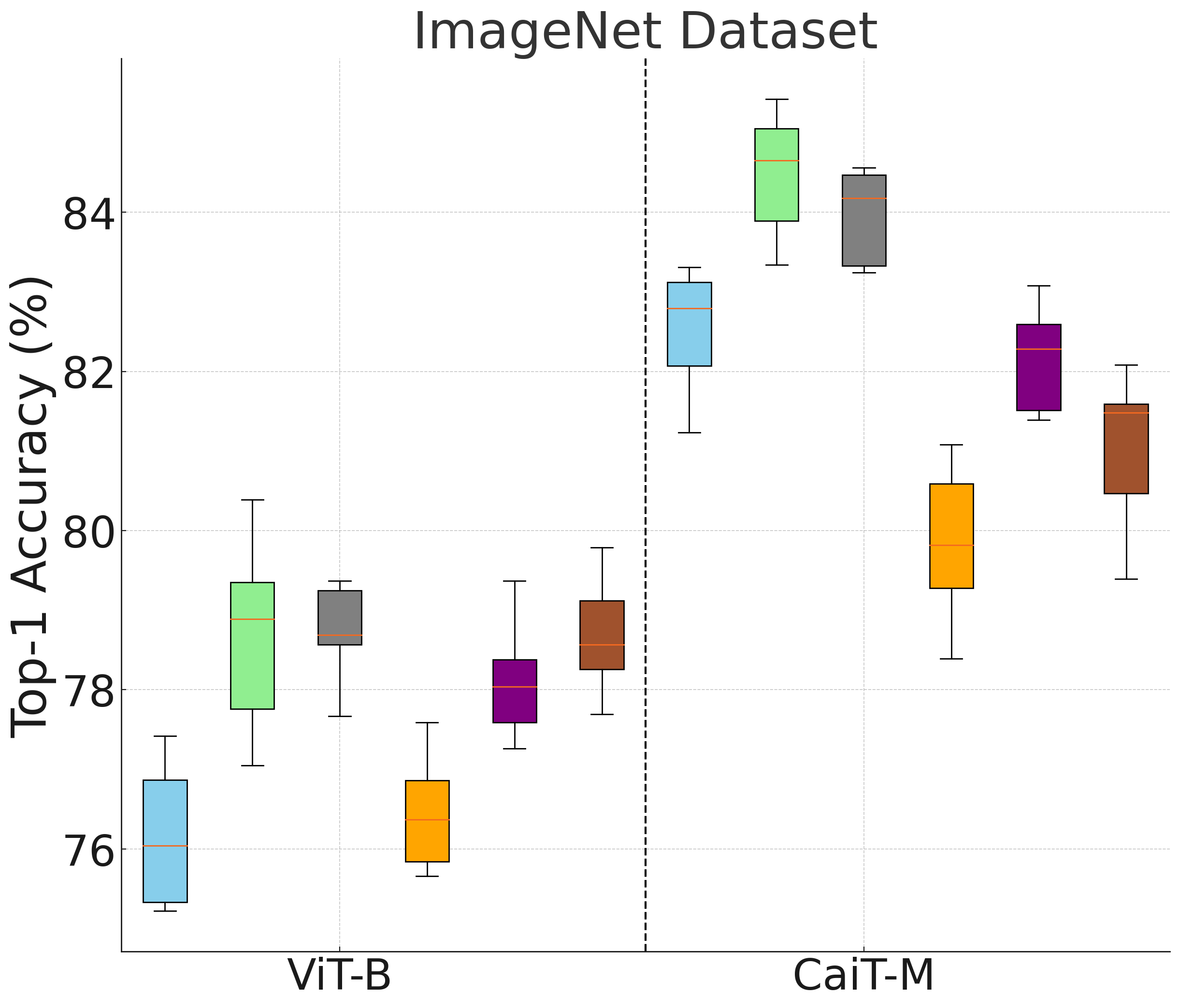}\\
        {\small (b)}
      \end{tabular} &
      \begin{tabular}{c}
        \includegraphics[width=.33\textwidth]{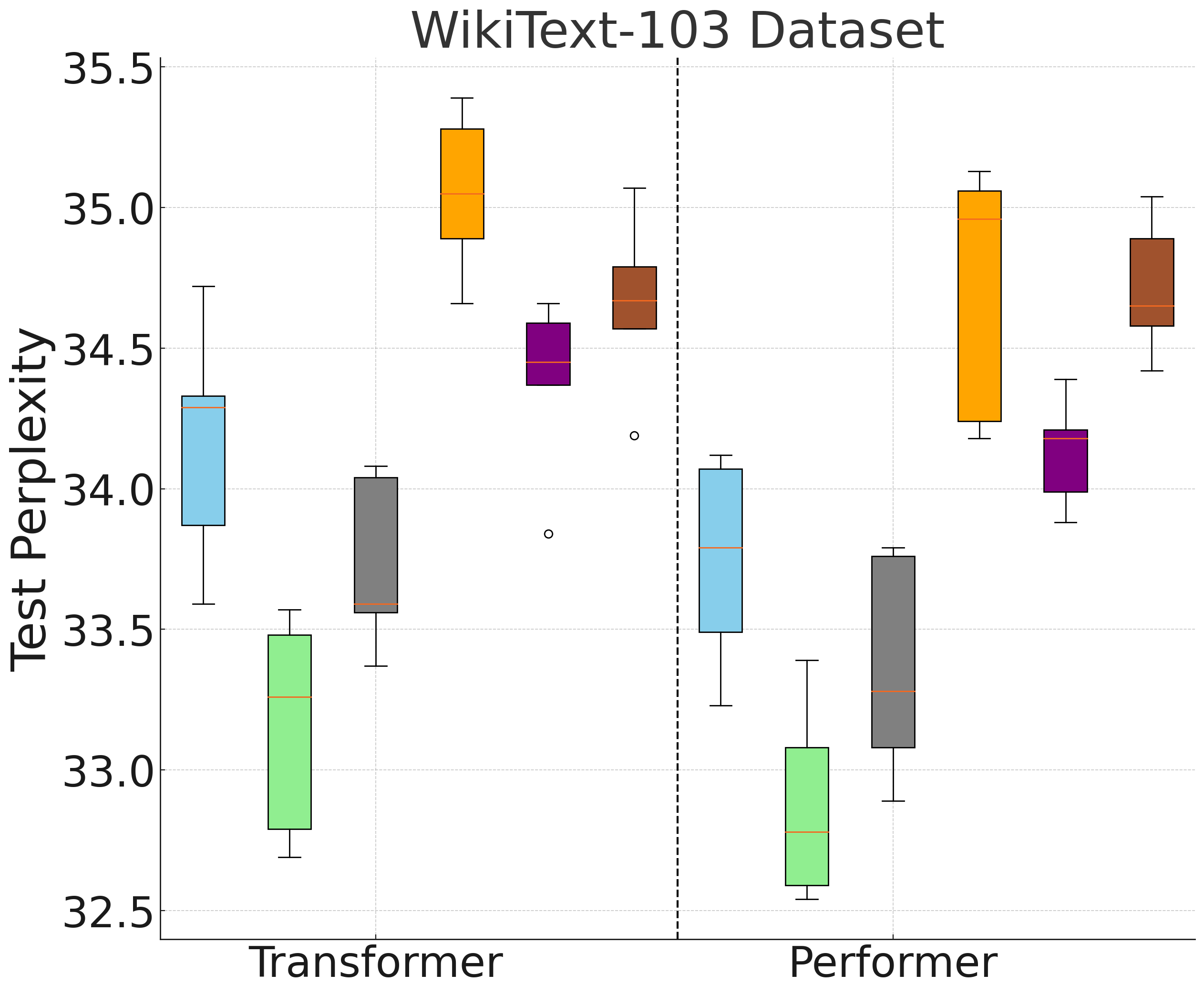}\\
        {\small (c)}
      \end{tabular}
    \end{tabular}%
  }
  \caption{\small We evaluate the effect of employing five different nonlinear activation functions in the self-attention block and compare them with standard linear activation functions on (a) CIFAR-10, (b) ImageNet, and (c) WikiText-103 datasets. All results are averaged across five random experiments. The results demonstrate that using GELU and ReLU activation functions in various transformer backbones noticeably improves performance compared to the linear activation function.}
  \label{fig:image_text}
\end{figure*}

\section{Discussion}
\label{sec:discussion}
In this paper, we first establish a link between the MoE framework and the attention mechanism by showing that each row of a self-attention matrix can be interpreted as a quadratic gating MoE. Inspired by this relation, we carry out the convergence analysis of parameter estimation and expert estimation under MoE models with the quadratic polynomial gate and the quadratic monomial gate, respectively. Our theories indicate that experts formulated as neural networks with non-linear activation functions have significantly faster estimation rates than linear experts. Based on this theoretical insight on MoE and the above connection between MoE and self-attention, we propose a new attention mechanism termed active-attention, where we apply a non-linear activation function to the value matrix. Through extensive real-world experiments, we empirically demonstrate that the proposed active-attention enjoys preferable performance to the standard self-attention in various tasks, including image classification, language modeling, and multivariate time series forecasting. \\

\noindent
\textbf{Limitations.} 
There are some limitations of our work. Firstly, in the convergence analysis of quadratic gating MoE, we take into account a single MoE layer rather than multiple layers as in practice \cite{dai2024deepseekmoe,jiang2024mixtral} for the sake of theory. 
Secondly, as we keep ground-truth parameters in that analysis independent of the sample size, the parameter and expert estimation rates are pointwise but not uniform. However, our results can be naturally extended using the techniques of deriving minimax optimal rates for parameter estimation in MoE from \cite{yan2025contaminated}. 
As these two problems are not the main focus of our work, we leave them for future development.\\

\noindent
\textbf{Future directions.} A potential direction of extending our theories is to investigate quadratic gating MoE in the context of a classification problem as considered in \cite{chen2022theory,nguyen2024general}. Meanwhile, it is of great importance to demystify the expressive power of quadratic gating MoE \cite{wang2025expressivepower}, which has remained elusive in the literature.


\appendix
\begin{center}
\textbf{\Large Supplement to
``Quadratic Gating Mixture of Experts: Statistical Insights into Self-Attention''}
\end{center}

\noindent
In this supplementary material, we first provide rigorous proofs for the theoretical results of Section~\ref{sec:theory} in Appendix~\ref{appendix:proofs}. Next, we study the identifiability of the quadratic gating MoE in Appendix~\ref{appendix:auxiliary_results}. In Appendix~\ref{appendix:experimental-details}, we specify the details of experiments exhibited in Section~\ref{sec:experiments} before discussing the computational overheads of the quadratic gating functions in Appendix~\ref{appendix:overhead}. Lastly, we carry out an empirical ablation study on the number of attention heads when using the proposed active-attention mechanism in Appendix~\ref{appendix:ablation_study}. 

\section{Proof of Theoretical Results}
\label{appendix:proofs}
In this appendix, we present the detailed proofs for the theoretical results introduced in the paper.

\subsection{Proof of Theorem~\ref{theorem:regression_rate}}
\label{appendix:regression_rate}
For the proof of the theorem, we first introduce some notation. Firstly, we denote by $\mathcal{F}_N(\Theta)$ the set of conditional densities of all mixing measures in $\mathcal{M}_N(\Theta)$, that is, $\mathcal{F}_N(\Theta):=\{f_{G}(\xbm):G\in\mathcal{M}_{N}(\Theta)\}$.
Additionally, for each $\delta>0$, the $L_{2}(\mu)$ ball centered around the regression function $f_{G_*}(\xbm)$ and intersected with the set $ \mathcal{F}_N(\Theta)$ is defined as
\begin{align*}   \mathcal{F}_N(\Theta,\delta):=\left\{f \in \mathcal{F}_N(\Theta): \|f -f_{G_*}\|_{L^2(\mu)} \leq\delta\right\}.
\end{align*}
In order to measure the size of the above set, Geer et al. \cite{vandeGeer-00} suggest using the following quantity:
\begin{align}
    \label{eq:bracket_size}
    \mathcal{J}_B(\delta, \mathcal{F}_N(\Theta,\delta)):=\int_{\delta^2/2^{13}}^{\delta}H_B^{1/2}(t, \mathcal{F}_N(\Theta,t),\|\cdot\|_{L^2(\mu)})~\dint t\vee \delta,
\end{align}
where $H_B(t, \mathcal{F}_N(\Theta,t),\|\cdot\|_{L^2(\mu))}$ stands for the bracketing entropy \cite{vandeGeer-00} of $ \mathcal{F}_N(\Theta,u)$ under the $L^{2}$-norm, and $t\vee\delta:=\max\{t,\delta\}$. By using the similar proof argument of Theorem 7.4 and Theorem 9.2 in \cite{vandeGeer-00} with notations being adapted to this work, we obtain the following lemma:
\begin{lemma}
    \label{lemma:density_rate}
    Take $\Psi(\delta)\geq \mathcal{J}_B(\delta, \mathcal{F}_N(\Theta,\delta))$ that satisfies $\Psi(\delta)/\delta^2$ is a non-increasing function of $\delta$. Then, for some universal constant $c$ and for some sequence $(\delta_n)$ such that $\sqrt{n}\delta^2_n\geq c\Psi(\delta_n)$, we achieve that
    \begin{align*}
        \mathbb{P}\Big(\|f_{\widehat{G}_n} - f_{G_*}\|_{L^2(\mu)} > \delta\Big)\leq c \exp\left(-\frac{n\delta^2}{c^2}\right),
    \end{align*}
    for all $\delta\geq \delta_n$.
\end{lemma}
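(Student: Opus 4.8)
The plan is to adapt the empirical-process machinery behind Theorems~7.4 and~9.2 of \cite{vandeGeer-00} to our random-design, Gaussian-noise model. The starting point is the \emph{basic inequality} coming from the optimality of the least squares estimator. Since $\widehat{G}_n$ minimizes $\sum_{i=1}^n (Y_i - f_G(\Xbm_i))^2$ over $\mathcal{G}_N(\Theta)$ and $G_* \in \mathcal{G}_N(\Theta)$, substituting $Y_i = f_{G_*}(\Xbm_i) + \varepsilon_i$ and expanding the square yields
$$\frac{1}{n}\sum_{i=1}^n \big(f_{\widehat{G}_n}(\Xbm_i) - f_{G_*}(\Xbm_i)\big)^2 \;\leq\; \frac{2}{n}\sum_{i=1}^n \varepsilon_i\,\big(f_{\widehat{G}_n}(\Xbm_i) - f_{G_*}(\Xbm_i)\big).$$
Writing $\bar f := f_{\widehat{G}_n} - f_{G_*}$, this reads $\|\bar f\|_n^2 \le \frac{2}{n}\sum_i \varepsilon_i \bar f(\Xbm_i)$ (with $\|\cdot\|_n$ the empirical $L^2$ norm), so the whole difficulty reduces to controlling, uniformly over the centered class $\{f - f_{G_*} : f \in \mathcal{F}_N(\Theta)\}$, the Gaussian-weighted empirical process $\nu_n(g) := \frac{1}{n}\sum_{i=1}^n \varepsilon_i g(\Xbm_i)$.

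The core estimate is a maximal/concentration inequality for $\nu_n$ on a single $L^2(\mu)$-ball $\mathcal{F}_N(\Theta,\delta)$. Because the experts are bounded on the compact domain (so every $f_G$ is uniformly bounded) and the $\varepsilon_i$ are sub-Gaussian with variance proxy $\sigma^2$, the increments of $\nu_n$ are sub-Gaussian and can be bounded by a chaining argument driven by the bracketing entropy $H_B$. This produces $\EE \sup_{f \in \mathcal{F}_N(\Theta,\delta)} |\nu_n(f - f_{G_*})| \lesssim \mathcal{J}_B(\delta, \mathcal{F}_N(\Theta,\delta))/\sqrt n \leq \Psi(\delta)/\sqrt n$, which is exactly the role of the Dudley-type integral in equation~\eqref{eq:bracket_size}. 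A one-sided deviation inequality (Borell--TIS / sub-Gaussian concentration of the supremum around its mean) then upgrades this expectation bound into an exponential tail for the supremum. Along the way one also relates $\|\cdot\|_n$, appearing on the left of the basic inequality, to $\|\cdot\|_{L^2(\mu)}$; uniform boundedness of the class makes these comparable up to constants with exponentially small failure probability, so the argument can be carried out directly in the $L^2(\mu)$ metric.

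To pass from a fixed-ball bound to the global tail bound for $\|\bar f\|_{L^2(\mu)}$, I would invoke the peeling (slicing) device: partition $\{\|\bar f\|_{L^2(\mu)} > \delta\}$ into dyadic shells $S_s = \{2^{s}\delta < \|\bar f\|_{L^2(\mu)} \leq 2^{s+1}\delta\}$ for $s \geq 0$. On $S_s$, compatibility with the basic inequality forces $\nu_n(\bar f)$ to be of order $(2^s\delta)^2$, which the concentration bound of the previous step shows has probability at most $c\exp(-n(2^s\delta)^2/c^2)$ provided $\sqrt n (2^s\delta)^2 \gtrsim \Psi(2^s\delta)$. The hypothesis that $\Psi(\delta)/\delta^2$ is non-increasing is precisely what makes this work: rewriting $\sqrt n \delta_n^2 \ge c\Psi(\delta_n)$ as $\Psi(\delta_n)/\delta_n^2 \le \sqrt n/c$ and using monotonicity gives $\Psi(\delta)/\delta^2 \le \sqrt n/c$, hence $\sqrt n \delta^2 \ge c\Psi(\delta)$, for every $\delta \ge \delta_n$; this is why the single defining inequality for $\delta_n$ suffices at all $\delta \ge \delta_n$. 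Summing the shell probabilities over $s$ yields a convergent geometric series dominated by its first term, giving the stated bound $c\exp(-n\delta^2/c^2)$.

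\textbf{Main obstacle.} The most delicate step is establishing the sub-Gaussian maximal inequality with the correct bracketing-entropy scaling under \emph{unbounded} Gaussian noise: since the weights $\varepsilon_i$ are not bounded, one cannot invoke a bounded-differences concentration directly, and must instead combine a truncation of the $\varepsilon_i$ with a sub-Gaussian chaining bound, tracking constants so that the modulus of continuity is genuinely controlled by $\mathcal{J}_B$ and so that the non-increasing property of $\Psi(\delta)/\delta^2$ can be applied uniformly across all peeling shells. Checking that the centered MoE class admits a finite, well-behaved bracketing integral (so that a valid majorant $\Psi$ exists at all) is the input supplied by the subsequent entropy computation.
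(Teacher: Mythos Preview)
Your proposal is correct and takes essentially the same approach as the paper: the paper does not give an independent proof of this lemma but simply states that it follows by adapting the arguments of Theorems~7.4 and~9.2 in \cite{vandeGeer-00} to the present setting, and your sketch (basic inequality, bracketing-entropy chaining for the weighted empirical process, sub-Gaussian concentration, and peeling using the monotonicity of $\Psi(\delta)/\delta^2$) is precisely the content of those theorems. In other words, you have unpacked what the paper leaves as a citation.
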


\noindent
We now demonstrate that when the expert functions are Lipschitz continuous, the following bound holds:
\begin{align}    
H_B(\varepsilon,\mathcal{F}_N(\Theta),\|.\|_{L^{2}(\mu)}) \lesssim \log(1/\varepsilon), \label{eq:bracket_entropy_bound}
\end{align}
for any $0 < \varepsilon \leq 1/2$. Indeed, for any function $f_{G} \in \mathcal{F}_N(\Theta)$, since the expert functions are bounded, we obtain that $f_{G}(\xbm) \leq M$ for almost everywhere $\xbm$, where $M>0$ is some bounded constant of the expert functions. Let $\tau\leq\varepsilon$ and $\{\xi_1,\ldots,\xi_k\}$ be the $\tau$-cover under the $L^{\infty}$ norm of the set $\mathcal{F}_N(\Theta)$ where $k:=N(\tau,\mathcal{F}_N(\Theta),\|\cdot\|_{L^{\infty}})$ is the $\tau$-covering number of the metric space $(\mathcal{F}_N(\Theta),\|\cdot\|_{L^{\infty}})$. Then, we construct the brackets of the form $[L_i(\xbm),U_i(\xbm)]$ for all $i\in[k]$ as follows:
    \begin{align*}
        L_i(\xbm)&:=\max\{\xi_i(\xbm)-\tau,0\},\\
        U_i(\xbm)&:=\max\{\xi_i(\xbm)+\tau, M \}.
    \end{align*}
From the above construction, we can validate that $\mathcal{F}_{N}(\Theta)\subset\cup_{i=1}^{k}[L_i(\xbm),U_i(\xbm)]$ and $U_i(\xbm)-L_i(\xbm)\leq \min\{2\tau,M\}$. Therefore, it follows that 
\begin{align*}
    \normf{U_i-L_i}^2=\int(U_i-L_i)^2\dint\mu(\xbm)\leq\int 4\tau^2\dint\mu(\xbm)=4\tau^2,
\end{align*}
which implies that $\normf{U_i-L_i}\leq 2\tau$. By definition of the bracketing entropy, we deduce that
\begin{align}
    \label{eq:bracketing_covering}
    H_B(2\tau,\mathcal{F}_{N}(\Theta),\normf{\cdot})\leq\log k=\log N(\tau,\mathcal{F}_N(\Theta),\|\cdot\|_{L^{\infty}}).
\end{align}
Therefore, we need to provide an upper bound for the covering number $N(\tau,\mathcal{F}_N(\Theta),\|\cdot\|_{L^{\infty}})$. In particular, we denote $\Delta:=\{\Abm,\bbm,c)\in\mathbb{R}^{d\times d}\times\mathbb{R}^d\times \mathbb{R}:(\Abm,\bbm,c,\veta)\in\Theta\}$ and $\Omega:=\{\veta\in\mathbb{R}^q:(\Abm,\bbm,c,\veta)\in\Theta\}$. Since $\Theta$ is a compact set, $\Delta$ and $\Omega$ are also compact. Therefore, we can find $\tau$-covers $\Delta_{\tau}$ and ${\Omega}_{\tau}$ for $\Delta$ and $\Omega$, respectively. We can check that 
\begin{align*}
    |\Delta_{\tau}|\leq \mathcal{O}(\tau^{-(d^2+d+1)N}), \quad |\Omega_{\tau}|\lesssim \mathcal{O}(\tau^{-qN}).
\end{align*}
For each mixing measure $G=\sum_{i=1}^{N}\exp(c_i)\delta_{(\Abm_i,\bbm_i,\veta_i)}\in\mathcal{M}_N(\Theta)$, we consider other two mixing measures:
\begin{align*}
    \check{G}:=\sum_{i=1}^N\exp(c_i)\delta_{(\Abm_i,\bbm_i,\overline{\veta}_i)}, \qquad \overline{G}:=\sum_{i=1}^N\exp(\overline{c}_{i})\delta_{(\overline{\Abm}_{i},\overline{\bbm}_i,\overline{\veta}_i)}.
\end{align*}
Here, $\overline{\veta}_i\in{\Omega}_{\tau}$ such that $\overline{\veta}_i$ is the closest to $\veta_i$ in that set, while $(\overline{\Abm}_{i},\overline{\bbm}_{i},\overline{c}_i)\in\Delta_{\tau}$ is the closest to $(\Abm_i,\bbm_i,c_i)$ in that set. From the above formulations, we get that
\begin{align*}
    \|f_{G}-f_{\check{G}}\|_{L^{\infty}}&=\sup_{\xbm\in\mathcal{X}}\Bigg|\sum_{i=1}^{N}\frac{\exp(\xbm^{\top}\Abm_i\xbm+(\bbm_i)^{\top}\xbm+c_i)}{\sum_{j=1}^{N}\exp(\xbm^{\top}\Abm_j\xbm+(\bbm_j)^{\top}\xbm+c_{j})}\cdot[\mathcal{E}(\xbm,\veta_i)-\mathcal{E}(\xbm,\overline{\veta}_i)]\Bigg|\\
    &\leq\sum_{i=1}^{N} \sup_{\xbm\in\mathcal{X}}\Bigg|\frac{\exp(\xbm^{\top}\Abm_i\xbm+(\bbm_i)^{\top}\xbm+c_i)}{\sum_{j=1}^{N}\exp(\xbm^{\top}\Abm_j\xbm+(\bbm_j)^{\top}\xbm+c_{j})}\cdot[\mathcal{E}(\xbm,\veta_i)-\mathcal{E}(\xbm,\overline{\veta}_i)]\Bigg|\\
    &\leq\sum_{i=1}^{N} \sup_{\xbm\in\mathcal{X}}|\mathcal{E}(\xbm,\veta_i)-\mathcal{E}(\xbm,\overline{\veta}_i)|\\
    &\lesssim\sum_{i=1}^{N}\|\veta_i-\overline{\veta}_i\|\lesssim\tau.
\end{align*}
Here, the first inequality is according to the triangle inequality, the second inequality occurs as the softmax weight is bounded by 1, and the third inequality follows from the fact that the expert $\mathcal{E}(\xbm,\cdot)$ is a Lipschitz function. Next, we have
\begin{align*}
    &\|f_{\check{G}}-f_{\overline{G}}\|_{L^{\infty}}\\
    &\hspace{-0.2cm}=\sup_{\xbm\in\mathcal{X}} \Bigg|\sum_{i=1}^{N}\Bigg[\frac{\exp(\xbm^{\top}\Abm_i \xbm+(\bbm_i)^{\top}\xbm+c_i)}{\sum_{j=1}^{N}\exp(\xbm^{\top}\Abm_j \xbm+(\bbm_j)^{\top}\xbm+c_{j})}-\frac{\exp(\xbm^{\top}\overline{\Abm}_i \xbm+(\overline{\bbm}_i)^{\top}\xbm+\overline{c}_i)}{\sum_{j=1}^{N}\exp(\xbm^{\top}\overline{\Abm}_j \xbm+(\overline{\bbm}_{j})^{\top}\xbm+\overline{c}_{j})}\Bigg] \mathcal{E}(\xbm,\overline{\veta}_i)\Bigg|\\
    &\hspace{-0.2cm}\leq\sum_{i=1}^{N}\sup_{\xbm\in\mathcal{X}} \Bigg|\Bigg[\frac{\exp(\xbm^{\top}\Abm_i \xbm+(\bbm_i)^{\top}\xbm+c_i)}{\sum_{j=1}^{N}\exp(\xbm^{\top}\Abm_j \xbm+(\bbm_j)^{\top}\xbm+c_{j})}-\frac{\exp(\xbm^{\top}\overline{\Abm}_i \xbm+(\overline{\bbm}_i)^{\top}\xbm+\overline{c}_i)}{\sum_{j=1}^{N}\exp(\xbm^{\top}\overline{\Abm}_j \xbm+(\overline{\bbm}_{j})^{\top}\xbm+\overline{c}_{j})}\Bigg]\mathcal{E}(\xbm,\overline{\veta}_i)\Bigg|\\
    &\hspace{-0.2cm}\leq\sum_{i=1}^{N}\sup_{\xbm\in\mathcal{X}} \Bigg|\frac{\exp(\xbm^{\top}\Abm_i \xbm+(\bbm_i)^{\top}\xbm+c_i)}{\sum_{j=1}^{N}\exp(\xbm^{\top}\Abm_j \xbm+(\bbm_j)^{\top}\xbm+c_{j})}-\frac{\exp(\xbm^{\top}\overline{\Abm}_i \xbm+(\overline{\bbm}_i)^{\top}\xbm+\overline{c}_i)}{\sum_{j=1}^{N}\exp(\xbm^{\top}\overline{\Abm}_j \xbm+(\overline{\bbm}_{j})^{\top}\xbm+\overline{c}_{j})}\Bigg|\\
    &\hspace{-0.2cm}\lesssim\sum_{i=1}^{N}\sup_{\xbm\in\mathcal{X}} \Big[\|\Abm_{i}-\overline{\Abm}_{i}\|\cdot\|\xbm\|^2+\|\bbm_i-\overline{\bbm}_i\|\cdot\|\xbm\|+|c_i-\overline{c}_{i}|\Big]\\
    &\hspace{-0.2cm}\leq\sum_{i=1}^{N}(\tau B^2+\tau B+\tau)\lesssim\tau.
\end{align*}
Above, the first inequality is due to the triangle inequality, the second inequality happens as the expert function is bounded, the third inequality follows from the fact that the softmax function is Lipschitz, and the fourth inequality occurs as the input space is bounded, that is, $\|\xbm\|\leq B$ for some constant $B>0$.
According to the triangle inequality, we have
\begin{align*}
    \|f_{G}-f_{\overline{G}}\|_{L^{\infty}}\leq \|f_{G}-f_{\check{G}}\|_{L^{\infty}}+\|f_{\check{G}}-f_{\overline{G}}\|_{L^{\infty}}\lesssim\tau.
\end{align*}
By definition of the covering number, we deduce that
\begin{align}
    \label{eq:covering_bound}
    {N}(\tau,\mathcal{F}_N(\Theta),\|\cdot\|_{L^{\infty}})\leq |\Delta_{\tau}|\times|\Omega_{\tau}|\leq \mathcal{O}_{P}(n^{-(d^2+d+1)N})\times\mathcal{O}(n^{-qN})\leq\mathcal{O}(n^{-(d^2+d+1+q)N}).
\end{align}
Combine equations~\eqref{eq:bracketing_covering} and \eqref{eq:covering_bound}, we achieve that
\begin{align*}
    H_B(2\tau,\mathcal{F}_{N}(\Theta),\normf{\cdot})\lesssim \log(1/\tau).
\end{align*}
Let $\tau=\varepsilon/2$, then we obtain that 
\begin{align*}
    H_B(\varepsilon,\mathcal{F}_N(\Theta),\|.\|_{L^{2}(\mu)}) \lesssim \log(1/\varepsilon).
\end{align*}
As a result, it follows that 
\begin{align}
    \label{eq:bracketing_integral}
    \mathcal{J}_B(\delta, \mathcal{F}_N(\Theta,\delta))= \int_{\delta^2/2^{13}}^{\delta}H_B^{1/2}(t, \mathcal{F}_N(\Theta,t),\normf{\cdot})~\dint t\vee \delta\lesssim \int_{\delta^2/2^{13}}^{\delta}\log(1/t)dt\vee\delta.
\end{align}
Let $\Psi(\delta)=\delta\cdot[\log(1/\delta)]^{1/2}$, then $\Psi(\delta)/\delta^2$ is a non-increasing function of $\delta$. Furthermore, equation~\eqref{eq:bracketing_integral} indicates that $\Psi(\delta)\geq \mathcal{J}_B(\delta,\mathcal{F}_N(\Theta,\delta))$. In addition, let $\delta_n=\sqrt{\log(n)/n}$, then we get that $\sqrt{n}\delta^2_n\geq c\Psi(\delta_n)$ for some universal constant $c$. Finally, by applying Lemma~\ref{lemma:density_rate}, we achieve the desired conclusion of the theorem.

\subsection{Proof of Theorem~\ref{theorem:poly_strongly_identifiable}}
\label{appendix:poly_strongly_identifiable}
In this proof, we aim to establish the following inequality:
\begin{align}
    \label{eq:general_universal_inequality}
    \inf_{G\in\mathcal{M}_{N}(\Theta)}\normf{f_{G}-f_{G_*}}/\mathcal{L}_1(G,G_*)>0.
\end{align}
For that purpose, we divide the proof of the above inequality into local and global parts in the sequel.
\\

\noindent
\textbf{Local part:} In this part, we demonstrate that
\begin{align}
    \label{eq:general_local_inequality}
    \lim_{\varepsilon\to0}\inf_{G\in\mathcal{M}_{N}(\Theta):\mathcal{L}_1(G,G_*)\leq\varepsilon}\normf{f_{G}-f_{G_*}}/\mathcal{L}_1(G,G_*)>0.
\end{align}
Assume by contrary that the above inequality does not hold true, then there exists a sequence of mixing measures $G_n=\sum_{i=1}^{N^*}\exp(\cin)\delta_{(\Abm^n_i, \bbm^n_i,\veta^n_i)}$ in $\mathcal{M}_{N}(\Theta)$ such that $\mathcal{L}_{1n}:=\mathcal{L}_1(G_n,G_*)\to0$ and
\begin{align}
    \label{eq:general_ratio_limit}
    \normf{f_{G_n}-f_{G_*}}/\mathcal{L}_{1n}\to0,
\end{align}
as $n\to\infty$. Let us denote by $\mathcal{V}^n_j:=\mathcal{V}_j(G_n)$ a Voronoi cell of $G_n$ generated by the $j$-th components of $G_*$. Since our arguments are asymptotic, we may assume that those Voronoi cells do not depend on the sample size, i.e. $\mathcal{V}_j=\mathcal{V}^n_j$. Thus, the Voronoi loss $\mathcal{L}_{1n}$ can be represented as
\begin{align}
    \label{eq:loss_n1}
   &\mathcal{L}_{1n}:=\sum_{j:|\mathcal{V}_j|>1}\sum_{i\in\mathcal{V}_j}\exp(\cin)\Big[\|\daijn\|^{\frac{\bar{r}(|\mathcal{V}_j|)}{2}}+\|\dbijn\|^{\bar{r}(|\mathcal{V}_j|)}+\|\deijn\|^2\Big]\nonumber\\
    &+\sum_{j:|\mathcal{V}_j|=1}\sum_{i\in\mathcal{V}_j}\exp(\cin)\Big[\|\daijn\|+\|\dbijn\|+\|\deijn\|\Big]+\sum_{j=1}^{N^*}\Big|\sum_{i\in\mathcal{V}_j}\exp(\cin)-\exp(\cj)\Big|,
\end{align}
where we denote $\daijn:=A^n_{i}-\Abm^*_{j}$, $\dbijn:=\bbm^n_i-\bbm^*_j$ and $\deijn:=\veta^n_i-\veta^*_j$.
\\

\noindent
Since $\mathcal{L}_{1n}\to0$, we get that $(\ain,\bin,\ein)\to(\aj,\bj,\ej)$ and $\sum_{i\in\mathcal{V}_j}\exp(\cin)\to\exp(c^*_j)$ as $n\to\infty$ for any $i\in\mathcal{V}_j$ and $j\in[N^*]$. Now, we divide the proof of local part into three steps as follows:
\\

\noindent
\textbf{Step 1 - Taylor expansion.} In this step, we decompose the term 
\begin{align}
    \label{eq:Qn_formulation}
    Q_n(\xbm):=\Big[\sum_{j=1}^{N^*}\exp(\xbm^{\top}\aj \xbm+(\bj)^{\top}\xbm+\cj)\Big]\cdot[f_{G_n}(\xbm)-f_{G_*}(\xbm)]
\end{align}
into a combination of linearly independent elements using Taylor expansion. In particular, 
we have
\begin{align}
    \label{eq:general_Q_n}
    Q_n(\xbm)&=\sum_{j=1}^{N^*}\sum_{i\in\mathcal{V}_j}\exp(\cin)\Big[\exp(\xbm^{\top}\ain \xbm+(\bin)^{\top}\xbm)\mathcal{E}(\xbm;\ein)-\exp(\xbm^{\top}\aj \xbm+(\bj)^{\top}\xbm)\mathcal{E}(\xbm;\ej)\Big]\nonumber\\
    &-\sum_{j=1}^{N^*}\sum_{i\in\mathcal{V}_j}\exp(\cin)\Big[\exp(\xbm^{\top}\ain \xbm+(\bin)^{\top}\xbm)-\exp(\xbm^{\top}\aj \xbm+(\bj)^{\top}\xbm)\Big]f_{G_n}(\xbm)\nonumber\\
    &+\sum_{j=1}^{N^*}\Big(\sum_{i\in\mathcal{V}_j}\exp(\cin)-\exp(c^*_j)\Big)\Big[\exp(\xbm^{\top}\aj \xbm+(\bj)^{\top}\xbm)\mathcal{E}(\xbm;\ej)\nonumber\\
    &\hspace{6cm}-\exp(\xbm^{\top}\aj \xbm+(\bj)^{\top}\xbm)f_{G_n}(\xbm)\Big]\nonumber\\
    &:=A_n(\xbm)-B_n(\xbm)+C_{n}(\xbm).
\end{align}
\textbf{Decomposition of $A_n(\xbm)$.} Next, we continue to separate the term $A_n(\xbm)$ into two parts as follows:
\begin{align*}
    &A_n(\xbm)\\
    &:=\sum_{j:|\mathcal{V}_j|=1}\sum_{i\in\mathcal{V}_j}\exp(\cin)\Big[\exp(\xbm^{\top}\ain \xbm+(\bin)^{\top}\xbm)\mathcal{E}(\xbm;\ein)-\exp(\xbm^{\top}\aj \xbm+(\bj)^{\top}\xbm)\mathcal{E}(\xbm;\ej)\Big]\\
    &+\sum_{j:|\mathcal{V}_j|>1}\sum_{i\in\mathcal{V}_j}\exp(\cin)\Big[\exp(\xbm^{\top}\ain \xbm+(\bin)^{\top}\xbm)\mathcal{E}(\xbm;\ein)-\exp(\xbm^{\top}\aj \xbm+(\bj)^{\top}\xbm)\mathcal{E}(\xbm;\ej)\Big]\\
    &:=A_{n,1}(\xbm)+A_{n,2}(\xbm).
\end{align*}

\noindent
Let $E(\xbm;A,b):=\exp(\xbm^{\top}A\xbm+b^{\top}\xbm)$. By means of the first-order Taylor expansion, we have
\begin{align*}
    A_{n,1}(\xbm)&=\sum_{j:|\mathcal{V}_j|=1}\sum_{i\in\mathcal{V}_j}\frac{\exp(\cin)}{\alpha!}\sum_{|\alpha|=1}(\daijn)^{\alpha_1}(\dbijn)^{\alpha_2}(\deijn)^{\alpha_3}\\
    &\hspace{5cm}\times\frac{\partial^{|\alpha_1|+|\alpha_2|}E}{\partial A^{\alpha_1}\partial b^{\alpha_2}}(\xbm;\aj,\bj)\frac{\partial^{|\alpha_3|}\mathcal{E}}{\partial\veta^{\alpha_3}}(\xbm;\ej)
    +R_{n,1}(\xbm),
\end{align*}
where $R_{n,1}(\xbm)$ is a Taylor remainder such that $R_{n,1}(\xbm)/\mathcal{L}_{1n}\to0$ as $n\to\infty$. Note that
\begin{align*}
   \frac{\partial^{|\alpha_1|+|\alpha_2|}E}{\partial A^{\alpha_1}\partial b^{\alpha_2}}(\xbm;\aj,\bj)=\frac{\partial^{2|\alpha_1|+|\alpha_2|}E}{\partial b^{\tau(\alpha_1,\alpha_2)}}(\xbm;\aj,\bj),
\end{align*}
where $\tau(\alpha_1,\alpha_2):=\Big(\sum_{u=1}^{d}(\alpha_1^{(uv)}+\alpha_1^{(vu)})+\alpha_2^{(v)}\Big)_{v=1}^{d}=\Big(2\sum_{u=1}^{d}\alpha_1^{(uv)}+\alpha_2^{(v)}\Big)_{v=1}^{d}\in\mathbb{N}^d$. Then, $A_{n,1}(\xbm)$ can be rewritten as
\begin{align*}
    A_{n,1}(\xbm)&=\sum_{j:|\mathcal{V}_j|=1}\sum_{|\alpha_3|=0}^{1}\sum_{|\ell_1|=0\vee 1-|\alpha_3|}^{2(1-|\alpha_3|)}\sum_{i\in\mathcal{V}_j}\sum_{\tau(\alpha_1,\alpha_2)=\ell_1}\frac{\exp(\cin)}{\alpha!}(\daijn)^{\alpha_1}(\dbijn)^{\alpha_2}(\deijn)^{\alpha_3}\\
    &\hspace{5cm}\times\frac{\partial^{2|\alpha_1|+|\alpha_2|}E}{\partial b^{\tau(\alpha_1,\alpha_2)}}(\xbm;\aj,\bj)\frac{\partial^{|\alpha_3|}\mathcal{E}}{\partial\veta^{\alpha_3}}(\xbm;\ej)+R_{n,1}(\xbm)\\
    &=\sum_{j:|\mathcal{V}_j|=1}\sum_{|\alpha_3|=0}^{1}\sum_{|\ell_1|=0\vee 1-|\alpha_3|}^{2(1-|\alpha_3|)}S_{n,j,\alpha_3,\ell_1}\cdot\frac{\partial^{|\ell_1|}E}{\partial b^{\ell_1}}(\xbm;\aj,\bj)\frac{\partial^{|\alpha_3|}\mathcal{E}}{\partial\veta^{\alpha_3}}(\xbm;\ej)+R_{n,1}(\xbm),
\end{align*}
where we denote $$S_{n,j,\alpha_3,\ell_1}:=\sum_{i\in\mathcal{V}_j}\sum_{\tau(\alpha_1,\alpha_2)=\ell_1}\frac{\exp(\cin)}{\alpha!}(\daijn)^{\alpha_1}(\dbijn)^{\alpha_2}(\deijn)^{\alpha_3}$$ for any $j\in[N^*]$ and $(\alpha_3,\ell_1)\neq(\zerod,\zerod)$.
\\

\noindent
Analogously, by applying the Taylor expansion of order $\brj:=\bar{r}(|\mathcal{V}_j|)$, we can represent the term $A_{n,2}(\xbm)$ as
\begin{align*}
    A_{n,2}(\xbm)&=\sum_{j:|\mathcal{V}_j|>1}\sum_{|\alpha_3|=0}^{\brj}\sum_{|\ell_1|=0\vee 1-|\alpha_3|}^{2(\brj-|\alpha_3|)}S_{n,j,\alpha_3,\ell_1}\cdot\frac{\partial^{|\ell_1|}E}{\partial b^{\ell_1}}(\xbm;\aj,\bj)\frac{\partial^{|\alpha_3|}\mathcal{E}}{\partial\veta^{\alpha_3}}(\xbm;\ej)+R_{n,2}(\xbm),
\end{align*}
where $R_{n,2}(\xbm)$ is a Taylor remainder such that $R_{n,2}(\xbm)/\mathcal{L}_{1n}\to0$ as $n\to\infty$.
\\

\noindent
\textbf{Decomposition of $B_n(\xbm)$.} Note that $B_n(\xbm)$ can be rewritten as
\begin{align*}
    B_n(\xbm)&=\sum_{j:|\mathcal{V}_j|=1}\sum_{i\in\mathcal{V}_j}\exp(\cin)\Big[E(\xbm;\ain,\bin)-E(\xbm;\aj,\bj)\Big]f_{G_n}(\xbm)\\
    &+\sum_{j:|\mathcal{V}_j|>1}\sum_{i\in\mathcal{V}_j}\exp(\cin)\Big[E(\xbm;\ain,\bin)-E(\xbm;\aj,\bj)\Big]f_{G_n}(\xbm)\\
    &:=B_{n,1}(\xbm) + B_{n,2}(\xbm).
\end{align*}
By reusing the above techniques, we can decompose $B_{n,1}(\xbm)$ as
\begin{align*}
    B_{n,1}(\xbm)&=\sum_{j:|\mathcal{V}_j|=1}\sum_{i\in\mathcal{V}_j}\frac{\exp(\cin)}{\alpha!}\sum_{|\alpha|=1}(\daijn)^{\alpha_1}(\dbijn)^{\alpha_2}\cdot\frac{\partial^{2|\alpha_1|+|\alpha_2|}E}{\partial b^{\tau(\alpha_1,\alpha_2)}}(\xbm;\aj,\bj)f_{G_n}(\xbm)\\
    &\hspace{11cm}+R_{n,3}(\xbm)\\
    &=\sum_{j:|\mathcal{V}_j|=1}\sum_{|\ell_2|=1}^{2}T_{n,j,\ell_2}\cdot\frac{\partial^{|\ell_2|}E}{\partial b^{\ell_2}}(\xbm;\aj,\bj)f_{G_n}(\xbm)+R_{n,3}(\xbm),
\end{align*}
where we denote $$T_{n,j,\ell_2}:=\sum_{i\in\mathcal{V}_j}\sum_{\tau(\alpha_1,\alpha_2)=\ell_2}\frac{\exp(\cin)}{\alpha!}(\daijn)^{\alpha_1}(\dbijn)^{\alpha_2}$$ for any $j\in[N^*]$ and $\ell_2\neq\zerod$. Meanwhile, $R_{n,3}(\xbm)$ is a Taylor remainder such that $R_{n,3}(\xbm)/\mathcal{L}_{1n}\to0$ as $n\to\infty$. Similarly, we also have that
\begin{align*}
    B_{n,2}(\xbm)=\sum_{j:|\mathcal{V}_j|>1}\sum_{|\ell_2|=1}^{2\brj}T_{n,j,\ell_2}\cdot\frac{\partial^{|\ell_2|}E}{\partial b^{\ell_2}}(\xbm;\aj,\bj)f_{G_n}(\xbm)+R_{n,4}(\xbm),
\end{align*}
where $R_{n,4}(\xbm)$ is a Taylor remainder such that $R_{n,4}(\xbm)/\mathcal{L}_{1n}\to0$ as $n\to\infty$.
\\

\noindent
Putting the above results together, we can decompose the term $Q_n(\xbm)$ as
\begin{align}
    \label{eq:Qn_decomposition}
    Q_n(\xbm)&=\sum_{j=1}^{N^*}\sum_{|\alpha_3|=0}^{\brj}\sum_{|\ell_1|=0}^{2(\brj-|\alpha_3|)}S_{n,j,\alpha_3,\ell_1}\cdot\frac{\partial^{|\ell_1|}E}{\partial b^{\ell_1}}(\xbm;\aj,\bj)\frac{\partial^{|\alpha_3|}\mathcal{E}}{\partial\veta^{\alpha_3}}(\xbm;\ej)\nonumber\\
    &-\sum_{j=1}^{N^*}\sum_{|\ell_2|=0}^{2\brj}T_{n,j,\ell_2}\cdot\frac{\partial^{|\ell_2|}E}{\partial b^{\ell_2}}(\xbm;\aj,\bj)f_{G_n}(\xbm)+\sum_{i=1}^{4}R_{n,i}(\xbm),
\end{align}
where we define $S_{n,j,\zeroq,\zerod}=T_{n,j,\zerod}=\sum_{i\in\mathcal{V}_j}\exp(\cin)-\exp(\cj)$ for any $j\in[N^*]$.
\\

\noindent
\textbf{Step 2 - Non-vanishing coefficients.} In this step, we prove by contradiction that at least one among ratios of the forms $S_{n,j,\alpha_3,\ell_1}/\mathcal{L}_{1n}$ and $T_{n,j,\ell_2}/\mathcal{L}_{1n}$ goes to zero as $n$ tends to infinity. Assume that
\begin{align*}
    \frac{S_{n,j,\alpha_3,\ell_1}}{\mathcal{L}_{1n}}\to0, \qquad \frac{T_{n,j,\ell_2}}{\mathcal{L}_{1n}}\to0,
\end{align*}
for any $j\in[N^*]$, $0\leq|\alpha_3|\leq \brj$, $0\leq|\ell_1|\leq2(\brj-|\alpha_3|)$ and $0\leq|\ell_2|\leq 2\brj$.
\\

\noindent
First of all, it is worth noting that as $n\to\infty$,
\begin{align}
    \label{eq:weight_limit}
    \frac{1}{\mathcal{L}_{1n}}\sum_{j=1}^{N^*}\Big|\sum_{i\in\mathcal{V}_j}\exp(\cin)-\exp(\cj)\Big|=\sum_{j=1}^{N^*}\Big|\frac{S_{n,j,\zeroq,\zerod}}{\mathcal{L}_{1n}}\Big|\to0.
\end{align}
Now, let us consider indices $j\in[N^*]$ such that its corresponding Voronoi cell has only one element, i.e. $|\mathcal{V}_j|=1$.
\begin{itemize}
    \item When $\alpha_3=e_{q,u}:=(0,\ldots,0,\underbrace{1}_{\textit{u-th}},0,\ldots,0)\in\mathbb{N}^{q}$ and $\ell_1=\zerod$, we have 
    \begin{align*}
        \frac{1}{\mathcal{L}_{1n}}\cdot\sum_{i\in\mathcal{V}_j}\exp(\cin)|(\deijn)^{(u)}|=|S_{n,j,\alpha_3,\ell_1}|/\mathcal{L}_{1n}\to0 \quad\text{as}\quad n\to\infty.
    \end{align*}
    By taking the summation of the previous term with $u\in[q]$, we achieve that 
    \begin{align*}
        \frac{1}{\mathcal{L}_{1n}}\sum_{i\in\mathcal{V}_j}\exp(\cin)\|\deijn\|_1\to0.
    \end{align*}
    Owing to the topological equivalence between norm-1 and norm-2, it follows that
    \begin{align}
        \label{eq:eta_limit_1}
        \frac{1}{\mathcal{L}_{1n}}\sum_{i\in\mathcal{V}_j}\exp(\cin)\|\deijn\|\to0.
    \end{align}
    \item When $\alpha_3=\zeroq$ and $\ell_1=e_{d,u}:=(0,\ldots,0,\underbrace{1}_{\textit{u-th}},0,\ldots,0)\in\mathbb{N}^{d}$, by using the above arguments, we get that
    \begin{align}
        \label{eq:b_limit_1}
        \frac{1}{\mathcal{L}_{1n}}\sum_{i\in\mathcal{V}_j}\exp(\cin)\|\dbijn\|\to0.
    \end{align}
    \item When $\alpha_3=\mathbf{0}_q$ and $\ell_1=2e_{d,u}$, it follows that
    \begin{align}
        \label{eq:a_limit_1}
        \frac{1}{\mathcal{L}_{1n}}\sum_{i\in\mathcal{V}_j}\exp(\cin)\|\daijn\|\to0.
    \end{align}
\end{itemize}
Combine the limits in equations~\eqref{eq:eta_limit_1}, \eqref{eq:b_limit_1} and \eqref{eq:a_limit_1}, we obtain that 
\begin{align}
    \label{eq:order_1_limit}
    \frac{1}{\mathcal{L}_{1n}}\sum_{j:|\mathcal{V}_j|=1}\sum_{i\in\mathcal{V}_j}\exp(\cin)[\|\daijn\|+\|\dbijn\|+\|\deijn\|]\to0,
\end{align}
as $n\to\infty$. 
\\

\noindent
Next, we consider indices $j\in[N^*]$ such that its corresponding Voronoi cell has more than one element, i.e. $|\mathcal{V}_j|>1$. When $\alpha_3=2e_{q,u}$ and $\ell_1=\zerod$, we get $\frac{1}{\mathcal{L}_{1n}}\cdot\sum_{i\in\mathcal{V}_j}\exp(\cin)|(\deijn)^{(u)}|^2=|2S_{n,j,\alpha_3,\ell_1}|/\mathcal{L}_{1n}\to0$ as $n\to\infty$. By taking the summation of the previous term with $u\in[q]$, we achieve that $\frac{1}{\mathcal{L}_{1n}}\sum_{i\in\mathcal{V}_j}\exp(\cin)\|\deijn\|^2\to0$. This result indicates that
    \begin{align}
        \label{eq:eta_limit_2}
        \frac{1}{\mathcal{L}_{1n}}\sum_{j:|\mathcal{V}_j|>1}\sum_{i\in\mathcal{V}_j}\exp(\cin)\|\deijn\|^2\to0,
    \end{align}
    as $n\to\infty$. It follows from the limits in equations~\eqref{eq:weight_limit}, \eqref{eq:order_1_limit}, \eqref{eq:eta_limit_2} and the formulation of $\mathcal{L}_{1n}$ in equation~\eqref{eq:loss_n1} that 
    \begin{align*}
        \frac{1}{\mathcal{L}_{1n}}\sum_{j:|\mathcal{V}_j|>1}\sum_{i\in\mathcal{V}_j}\exp(\cin)[\|\daijn\|^{\brj/2}+\|\dbijn\|^{\brj}]\to1.
    \end{align*}
    The above limit suggests that there exists an index $j^{\prime}:|\mathcal{V}_{j^{\prime}}|>1$ such that 
    \begin{align}
        \label{eq:non_zero_limit}
         \frac{1}{\mathcal{L}_{1n}}\sum_{i\in\mathcal{V}_{j^{\prime}}}\exp(\cin)[\|\Delta A^n_{ij^{\prime}}\|^{\bar{r}_{j^{\prime}}/2}+\|\Delta b^n_{ij^{\prime}}\|^{\bar{r}_{j^{\prime}}}]\not\to0.
    \end{align}
    Without loss of generality, we may assume that $j^{\prime}=1$.
    \\

\noindent    
    \textbf{Case 1.} $\frac{1}{\mathcal{L}_{1n}}\sum_{i\in\mathcal{V}_{1}}\exp(\cin)[\|((\daione)^{(uu)})_{u=1}^{d}\|^{\brone/2}+\|\dbione\|^{\brone}]\not\to0$.
    \\

    In this case, there exists some $u^{\prime}\in[d]$ such that
    \begin{align*}
        \frac{1}{\mathcal{L}_{1n}}\sum_{i\in\mathcal{V}_{1}}\exp(\cin)[|(\daione)^{(u^{\prime}u^{\prime})}|^{\brone/2}+|(\dbione)^{(u^{\prime})}|^{\brone}]\not\to0.
    \end{align*}
    Again, we may assume WLOG that $u^{\prime}=1$ throughout Case 1, i.e.
    \begin{align}
        \label{eq:non_vanishing_term}
         \frac{1}{\mathcal{L}_{1n}}\sum_{i\in\mathcal{V}_{1}}\exp(\cin)[|(\daione)^{(11)}|^{\brone/2}+|(\dbione)^{(1)}|^{\brone}]\not\to0.
    \end{align}
    Next, let us consider the term 
    \begin{align}
        \label{eq:coefficient_0_ell}
        S_{n,1,\zeroq,\ell_1}=\sum_{i\in\mathcal{V}_1}\sum_{\tau(\alpha_1,\alpha_2)=\ell_1}\frac{\exp(\cin)}{\alpha_1!~\alpha_2!}(\daijn)^{\alpha_1}(\dbijn)^{\alpha_2},
    \end{align}
    where $\ell_1\in\mathcal{N}^d$ such that $\ell_1^{(u)}=0$ for any $u=2,3,\ldots,d$. Then, the constraint $\tau(\alpha_1,\alpha_2)=\ell_1$ holds iff $\alpha_1^{(u1)}=\alpha_1^{(1v)}=\alpha_1^{(uv)}=\alpha_2^{(u)}$ for all $u,v=2,3,\ldots,d$. Thus, by assumption, we get
    \begin{align}
        \label{eq:zero_limit}
        \frac{1}{\mathcal{L}_{1n}}\sum_{i\in\mathcal{V}_1}\sum_{2\alpha_1^{(11)}+\alpha_2^{(1)}=\ell_1^{(1)}}\frac{\exp(\cin)}{\alpha_1^{(11)}!~\alpha_2^{(1)}!}(\daijn)^{\alpha_1^{(11)}}(\dbijn)^{\alpha_2^{(1)}}=\frac{S_{n,1,\zeroq,\ell_1}}{\mathcal{L}_{1n}}\to0.
    \end{align}
        

    \noindent
    By dividing the left hand side of equation~\eqref{eq:zero_limit} by that of equation~\eqref{eq:non_vanishing_term}, we get
    \begin{align}
        \label{eq:fraction_zero}
        \dfrac{\sum_{i\in\mathcal{V}_1}\sum_{2\alpha_1^{(11)}+\alpha_2^{(1)}=\ell_1^{(1)}}\frac{\exp(\cin)}{\alpha_1^{(11)}!~\alpha_2^{(1)}!}(\daijn)^{\alpha_1^{(11)}}(\dbijn)^{\alpha_2^{(1)}}}{\sum_{i\in\mathcal{V}_{1}}\exp(\cin)[|(\daione)^{(11)}|^{\brone/2}+|(\dbione)^{(1)}|^{\brone}]}\to0.
    \end{align}
Subsequently, we define 
\begin{align*}
    {M}_n&:=\max\{|(\daione)^{(11)}|^{1/2},|(\dbione)^{(1)}|:i\in\mathcal{V}_1\},\\
    {\pi}_n&:=\max_{i\in\mathcal{V}_1}\exp(\cin).
\end{align*}
For any $i\in\mathcal{V}_1$, it is clear that the sequence of positive real numbers $(\exp(\cin)/{\pi}_n)$ is bounded, therefore, we can replace it by its subsequence that admits a non-negative limit denoted by $p_i^2=\lim_{n\to\infty}\exp(\cin)/{\pi}_n$. In addition, let us denote $(\daione)^{(11)}/{M}_n^2\to \gamma_{1i}$ and $(\dbione)^{(1)}/{M}_n\to \gamma_{2i}$. Since $\exp(\cin)\geq \beta$ for some $\beta>0$, the real numbers $p_i$ will not vanish, and at least one of them is equal to $1$. Analogously, at least one of the terms $\gamma_{1i}$ and $\gamma_{2i}$ is equal to either $1$ or $-1$. 
\\

\noindent
Note that $\sum_{i\in\mathcal{V}_1}\exp(\cin)\Big(|(\daione)^{(11)}|^{\brone/2}+|(\dbione)^{(1)}|^{\brone}\Big)/({\pi}_n{M}_n^{\ell_1^{(1)}})\not\to 0$ for all $\ell_1^{(1)}\in[\brone]$. Thus, we are able to divide both the numerator and the denominator in equation~\eqref{eq:fraction_zero} by ${\pi}_n{M}_n^{\ell_1^{(1)}}$ and let $n\to\infty$ in order to achieve the following system of polynomial equations:
\begin{align*}
    \sum_{i\in\mathcal{V}_1}\sum_{2\alpha_1^{(11)}+\alpha_2^{(1)}=\ell_1^{(1)}}\dfrac{p_i^2\gamma_{1i}^{\alpha_1^{(11)}}\gamma_{2i}^{\alpha_2^{(1)}}}{\alpha_1^{(11)}!~\alpha_2^{(1)}!}=0, \quad   \ell_1^{(1)}\in[\brone].
\end{align*}
However, by the definition of $\brone$, the above system cannot admit any non-trivial solutions, which is a contradiction. Thus, Case 1 cannot happen.
\\

\noindent
\textbf{Case 2.} $\frac{1}{\mathcal{L}_{1n}}\sum_{i\in\mathcal{V}_{1}}\exp(\cin)\|((\daione)^{(uv)})_{1\leq u\neq v\leq d}\|^{\brone/2}\not\to0$.
\\

\noindent
In this case, there exist some indices $u^{\prime}, v^{\prime}$ such that $u^{\prime}\neq v^{\prime}$ and 
\begin{align*}
    \dfrac{1}{\mathcal{L}_{1n}}\cdot\sum_{i\in\mathcal{V}_1}\exp(\cin)|(\daione)^{(u^{\prime}v^{\prime})}|^{\brone/2}\not\to 0.
\end{align*}
Recall that $|\mathcal{V}_1|>1$, or equivalently, $|\mathcal{V}_1|\geq 2$, we have that $\brone\geq 4$. Therefore, the above equation leads to
\begin{align}
     \label{eq:gamma_not_vanish}
    \dfrac{1}{\mathcal{L}_{1n}}\cdot\sum_{i\in\mathcal{V}_1}\exp(\cin)|(\daione)^{(u^{\prime}v^{\prime})}|^{2}\not\to 0.
\end{align}
WLOG, we assume that $u^{\prime}=1$ and $v^{\prime}=2$ throughout Case 2. We continue to consider the coefficient $S_{n,1,\zeroq,\ell_1}$ in equation~\eqref{eq:coefficient_0_ell} with $\ell_1=(2,2,0,\ldots,0)\in\mathbb{N}^d$. By assumption, we have ${S_{n,1,\zeroq,\ell_1}}/{\mathcal{L}_{1n}}\to 0$, which together with equation~\eqref{eq:gamma_not_vanish} imply that
\begin{align}
    \label{eq:case_12_ratio_vanish}
   \dfrac{\sum_{i\in\mathcal{V}_1}\sum_{ \tau(\alpha_1,\alpha_2)=\ell_1}\dfrac{\exp(\cin)}{\alpha_1!\alpha_2!}(\daione)^{\alpha_1}(\dbione)^{\alpha_2}}{\sum_{i\in\mathcal{V}_1}\exp(\cin)|(\daione)^{(12)}|^{2}}\to 0.
\end{align}
Similarly, by combining the fact that case 1.1 does not hold and the result in equation~\eqref{eq:gamma_not_vanish}, we get
\begin{align*}
    \dfrac{\sum_{i\in\mathcal{V}_1}\exp(\cin)\Big(\norm{((\daione)^{(uu)})_{u=1}^d}^{\brone/2}+\norm{\dbione}^{\brone}\Big)}{\sum_{i\in\mathcal{V}_1}\exp(\cin)|(\daione)^{(12)}|^{2}}\to 0.
\end{align*}
Since $\brone\geq 4$, the above limit indicates that any terms in equation~\eqref{eq:case_12_ratio_vanish} with $\alpha_1^{(uu)}>0$ and $\alpha_2^{(u)}>0$ for $u\in\{1,2\}$ will vanish. Consequently, we deduce from equation~\eqref{eq:case_12_ratio_vanish} that
\begin{align*}
    1=\dfrac{\sum_{i\in\mathcal{V}_1}\exp(\cin)|(\daione)^{(12)}|^{2}}{\sum_{i\in\mathcal{V}_1}\exp(\cin)|(\daione)^{(12)}|^{2}}\to 0,
\end{align*}
which is a contradiction. Thus, Case 2 cannot happen.\\

\noindent
Collect the results from Case 1 and Case 2, we can conclude that the claim in equation~\eqref{eq:non_zero_limit}, which is a contradiction. Therefore, at least one among ratios of the forms $S_{n,j,\alpha_3,\ell_1}/\mathcal{L}_{1n}$ and $T_{n,j,\ell_2}/\mathcal{L}_{1n}$ goes to zero as $n\to\infty$.\\

\noindent
\textbf{Step 3. Application of Fatou's lemma.} In this step, we show that all the ratios $S_{n,j,\alpha_3,\ell_1}/\mathcal{L}_{1n}$ and $T_{n,j,\ell_2}/\mathcal{L}_{1n}$ go to zero as $n\to\infty$, which contradicts to the conclusion in Step 2. In particular, by denoting $m_n$ as the maximum of the absolute values of those ratios. From the result of Step 2, it follows that $1/m_n\not\to\infty$. \\

\noindent
Recall from the hypothesis in equation~\eqref{eq:general_ratio_limit} that $\normf{f_{G_n}-f_{G_*}}/\mathcal{L}_{1n}\to0$ as $n\to\infty$, which indicates that $\|f_{G_n}-f_{G_*}\|_{L^1(\mu)}/\mathcal{L}_{1n}\to0$. Therefore, by applying the Fatou's lemma, we get that
\begin{align*}
    0=\lim_{n\to\infty}\frac{\|f_{G_n}-f_{G_*}\|_{L^1(\mu)}}{m_n\mathcal{L}_{1n}}\geq \int \liminf_{n\to\infty}\frac{|f_{G_n}(\xbm)-f_{G_*}(\xbm)|}{m_n\mathcal{L}_{1n}}\dint\mu(\xbm)\geq 0.
\end{align*}
This result implies that $\frac{1}{m_n\mathcal{L}_{1n}}\cdot[f_{G_n}(\xbm)-f_{G_*}(\xbm)]\to0$ as $n\to\infty$ for $\mu$-almost surely $x$. Looking at the formulation of $Q_n(\xbm)$ in equation~\eqref{eq:Qn_formulation}, since the term $\Big[\sum_{j=1}^{N^*}\exp(\xbm^{\top}\aj \xbm+(\bj)^{\top}\xbm+\cj)\Big]$ is bounded, we deduce that the term $\frac{1}{m_n\mathcal{L}_{1n}}\cdot Q_n(\xbm)\to0$ for $\mu$-almost surely $x$.
\\

\noindent
Let us denote
\begin{align*}
    \frac{S_{n,j,\alpha_3,\ell_1}}{m_n\mathcal{L}_{1n}}\to \phi_{j,\alpha_3,\ell_1}, \qquad \frac{T_{n,j,\ell_2}}{m_n\mathcal{L}_{1n}}\to\varphi_{j,\ell_2},
\end{align*}
with a note that at least one among them is non-zero. Then, from the decomposition of $Q_n(\xbm)$ in equation~\eqref{eq:Qn_decomposition}, we have
\begin{align*}
    \sum_{j=1}^{N^*}\sum_{|\alpha_3|=0}^{\brj}\sum_{|\ell_1|=0}^{2(\brj-|\alpha_3|)}\phi_{j,\alpha_3,\ell_1}\cdot&\frac{\partial^{|\ell_1|}E}{\partial b^{\ell_1}}(\xbm;\aj,\bj)\frac{\partial^{|\alpha_3|}\mathcal{E}}{\partial\veta^{\alpha_3}}(\xbm;\ej)\nonumber\\
    &-\sum_{j=1}^{N^*}\sum_{|\ell_2|=0}^{2\brj}\varphi_{j,\ell_2}\cdot\frac{\partial^{|\ell_2|}E}{\partial b^{\ell_2}}(\xbm;\aj,\bj)f_{G_*}(\xbm)=0,
\end{align*}
for $\mu$-almost surely $x$. Since the expert function $h$ satisifes the condition in Definition~\ref{def:strong_identifiability}, we obtain that $\phi_{j,\alpha_3,\ell_1}=\varphi_{j,\ell_2}=0$ for all $j\in[N^*]$, $0\leq|\alpha_3|\leq \brj$, $0\leq|\ell_1|\leq 2(\brj-|\alpha_3|)$ and $0\leq|\ell_2|\leq 2\brj$. This result turns out to contradict the fact that at least one among them is different from zero. Hence, we achieve the inequality in equation~\eqref{eq:general_local_inequality}.
\\

\noindent
\textbf{Global part.} It is worth noting that the inequality~\eqref{eq:general_local_inequality} suggests that there exists a positive constant $\varepsilon'$ such that
\begin{align*}
    \inf_{G\in\mathcal{M}_{N}(\Theta):\mathcal{L}_1(G,G_*)\leq\varepsilon'}\normf{f_{G}-f_{G_*}}/\mathcal{L}_1(G,G_*)>0.
\end{align*}
Therefore, it is sufficient to prove that
\begin{align}
    \label{eq:general_global_inequality}
    \inf_{G\in\mathcal{M}_{N}(\Theta):\mathcal{L}_1(G,G_*)>\varepsilon'}\normf{f_{G}-f_{G_*}}/\mathcal{L}_1(G,G_*)>0.
\end{align}
Assume by contrary that the inequality~\eqref{eq:general_global_inequality} does not hold true, then we can find a sequence of mixing measures $G'_n\in\mathcal{M}_{N}(\Theta)$ such that $\mathcal{L}_1(G'_n,G_*)>\varepsilon'$ and
\begin{align*}
    \lim_{n\to\infty}\frac{\normf{f_{G'_n}-f_{G_*}}}{\mathcal{L}_1(G'_n,G_*)}=0,
\end{align*}
which indicates that $\normf{f_{G'_n}-f_{G_*}}\to0$ as $n\to\infty$. Recall that $\Theta$ is a compact set, therefore, we can replace the sequence $G'_n$ by one of its subsequences that converges to a mixing measure $G'\in\mathcal{M}_{N}(\Omega)$. Since $\mathcal{L}_1(G'_n,G_*)>\varepsilon'$, we deduce that $\mathcal{L}_1(G',G_*)>\varepsilon'$. \\

\noindent
Next, by invoking the Fatou's lemma, we have that
\begin{align*}
    0=\lim_{n\to\infty}\normf{f_{G'_n}-f_{G_*}}^2\geq \int\liminf_{n\to\infty}\Big|f_{G'_n}(\xbm)-f_{G_*}(\xbm)\Big|^2~\dint\mu(\xbm).
\end{align*}
Thus, we get that $f_{G'}(\xbm)=f_{G_*}(\xbm)$ for $\mu$-almost surely $x$. From Proposition~\ref{prop:general_identifiability}, we deduce that $G'\equiv G_*$. Consequently, it follows that $\mathcal{L}_1(G',G_*)=0$, contradicting the fact that $\mathcal{L}_1(G',G_*)>\varepsilon'>0$. Hence, the proof is completed.

\subsection{Proof of Theorem~\ref{theorem:poly_linear_experts}}
\label{appendix:poly_linear_experts}
In this proof, we first introduce the following lemma which will be used for our subsequent main proof of Theorem~\ref{theorem:poly_linear_experts}.
\begin{lemma}
\label{lemma:log_n}
    Suppose that the following holds for any $r\geq 1$:
    \begin{align}
        \label{eq:assumption}
         \lim_{\varepsilon\to0}\inf_{G\in\mathcal{M}_{N}(\Theta):\mathcal{L}_{2,r}(G,G_*)\leq\varepsilon}\frac{\normf{f_{G}-f_{G_*}}}{\mathcal{L}_{2,r}(G,G_*)}=0.
    \end{align}
    Then, we achieve that  for any $r\geq 1$:
    \begin{align}
        \label{eq:desired_minimax_bound}
         \inf_{\overline{G}_n\in\mathcal{M}_{N}(\Theta)}\sup_{G\in\mathcal{M}_{N}(\Theta)\setminus\mathcal{M}_{N^*-1}(\Theta)}\bbE_{f_{G}}[\mathcal{L}_{2,r}(\overline{G}_n,G)]\gtrsim n^{-1/2},
    \end{align}
   where where $\bbE_{f_{G}}$ indicates the expectation taken w.r.t the product measure with $f^n_{G}$.
\end{lemma}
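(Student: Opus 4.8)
The plan is to prove \eqref{eq:desired_minimax_bound} by Le Cam's two-point method, converting the degeneracy expressed in \eqref{eq:assumption} into a sequence of alternatives that are statistically indistinguishable from $G_*$ at the relevant sample size yet remain bounded away from it in $\mathcal{L}_{2,r}$. First I would unpack the hypothesis: since the infimum over the shrinking neighborhoods in \eqref{eq:assumption} tends to $0$, there is a sequence $G_k\in\mathcal{G}_N(\Theta)$ with $\ell_k:=\mathcal{L}_{2,r}(G_k,G_*)\to 0$ and $r_k:=\normf{f_{G_k}-f_{G_*}}$ satisfying $r_k/\ell_k\to 0$, hence $r_k\to 0$ as well. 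Two structural remarks make the two-point family admissible for the sup in \eqref{eq:desired_minimax_bound}: $G_*$ has exactly $N^*$ atoms, so $G_*\in\mathcal{G}_N(\Theta)\setminus\mathcal{G}_{N^*-1}(\Theta)$; and each $G_k$ must have at least $N^*$ atoms, since otherwise some Voronoi cell $\mathcal{V}_j(G_k)$ is empty and the weight term $\sum_{j}\bigl|\sum_{i\in\mathcal{V}_j}\exp(c_i)-\exp(\cj)\bigr|$ in $\mathcal{L}_{2,r}$ is bounded below by $\min_j\exp(\cj)>0$, contradicting $\ell_k\to 0$; thus $G_k\in\mathcal{G}_N(\Theta)\setminus\mathcal{G}_{N^*-1}(\Theta)$ too.

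Next is the Le Cam step. For the two hypotheses $G_*$ and $G_k$, a standard two-point reduction gives, for every estimator $\overline{G}_n$,
\[
\sup_{G\in\{G_*,G_k\}}\bbE_{f_{G}}[\mathcal{L}_{2,r}(\overline{G}_n,G)]\ \gtrsim\ \mathcal{L}_{2,r}(G_k,G_*)\,\bigl(1-\mathrm{TV}(f^n_{G_*},f^n_{G_k})\bigr),
\]
and, because the noise is Gaussian with variance $\sigma^2$, the per-sample Kullback--Leibler divergence is $\mathrm{KL}(f_{G_*}\|f_{G_k})=\tfrac{1}{2\sigma^2}\normf{f_{G_k}-f_{G_*}}^2$, so after tensorization $1-\mathrm{TV}(f^n_{G_*},f^n_{G_k})\gtrsim\exp\!\bigl(-\tfrac{n}{2\sigma^2}r_k^2\bigr)$ (e.g.\ by the Bretagnolle--Huber inequality). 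Combining the two displays, the minimax risk at sample size $n$ is $\gtrsim \ell_k\exp(-\tfrac{n}{2\sigma^2}r_k^2)$ for every $k$.

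It remains to choose $k=k(n)$. Taking $k(n)$ so that $n\,r_{k(n)}^2\asymp 1$ makes the exponential factor of order one, while $\ell_{k(n)}\gg r_{k(n)}\asymp n^{-1/2}$ — the first relation precisely because the ratio $r_k/\ell_k\to 0$ — which yields $\inf_{\overline{G}_n}\sup_{G}\bbE_{f_{G}}[\mathcal{L}_{2,r}(\overline{G}_n,G)]\gtrsim n^{-1/2}$. The subtle point, and the step I expect to be the main obstacle, is that \eqref{eq:assumption} only supplies the $G_k$ at a discrete, possibly sparse set of scales, so one cannot literally arrange $r_{k(n)}\asymp n^{-1/2}$ for every $n$. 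This is resolved by observing that the family used to \emph{verify} \eqref{eq:assumption} for linear experts (built from the parameter interaction \eqref{eq:new_PDE}) is in fact a one-parameter family $G_t$ along which $\normf{f_{G_t}-f_{G_*}}$ and $\mathcal{L}_{2,r}(G_t,G_*)$ vary continuously and polynomially in $t$, so the scale can be matched to any $n$; alternatively, one runs the two-point bound along the subsequence $n_k:=\lceil\sigma^2/r_k^2\rceil$ and propagates it to all $n$ using that the minimax risk is non-increasing in the sample size. The Le Cam reduction and the Gaussian divergence computation are routine; the only real care is in this scaling/transfer argument.
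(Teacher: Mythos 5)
Your proposal is correct and follows essentially the same route as the paper: a Le Cam two-point reduction in which the hypothesis supplies an alternative whose $L_2(\mu)$ distance to $f_{G_*}$ is negligible relative to $\mathcal{L}_{2,r}$, combined with the Gaussian KL identity and a choice of scale $\asymp n^{-1/2}$. The paper sidesteps your discreteness-of-scales worry by asserting, for every small $\varepsilon$, an alternative $G_*'$ with $\mathcal{L}_{2,r}(G_*',G_*)=2\varepsilon$ and $\normf{f_{G_*'}-f_{G_*}}\le C_1\varepsilon$ and then setting $\varepsilon=n^{-1/2}$ --- which is exactly your continuous one-parameter-family fix.
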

\begin{proof}[Proof of Lemma~\ref{lemma:log_n}]
\noindent
Firstly, note that from the Gaussian assumption on the noise variables, we obtain that $Y_{i}|X_{i} \sim \mathcal{N}(f_{G_{*}}(\xbm_{i}), \sigma^2)$ for all $i \in [n]$. Next, it follows from the assumption in equation~\eqref{eq:assumption} that for sufficiently small $\varepsilon>0$ and a fixed constant $C_1>0$ which we will choose later, there exists a mixing measure $G'_* \in \mathcal{M}_{N}(\Theta)$ such that $\mathcal{L}_{2,r}(G'_*,G_*)=2 \varepsilon$ and $\|f_{G'_*} - f_{G_*}\|_{L_2(\mu)} \leq C_1\varepsilon$. According to Le Cam's lemma~\cite{yu97lecam}, since the Voronoi loss function $\mathcal{L}_{2,r}$ satisfies the weak triangle inequality, we get that
\begin{align}
    \inf_{\overline{G}_n\in\mathcal{M}_{N}(\Theta)}&\sup_{G\in\mathcal{M}_{N}(\Theta)\setminus\mathcal{M}_{N^*-1}(\Theta)}\bbE_{f_{G}}[\mathcal{L}_{2,r}(\overline{G}_n,G)]\nonumber \\
    & \gtrsim \frac{\mathcal{L}_{2,r}(G'_*,G_*)}{8} \text{exp}(- n \mathbb{E}_{X \sim \mu}[\text{KL}(\mathcal{N}(f_{G'_{*}}(\xbm), \sigma^2),\mathcal{N}(f_{G_{*}}(\xbm), \sigma^2))]) \nonumber \\
    & \gtrsim \varepsilon \cdot \text{exp}(-n \|f_{G'_*} - f_{G_*}\|_{L_2(\mu)}^2), \nonumber \\
    & \gtrsim \varepsilon \cdot \text{exp}(-C_{1} n \varepsilon^2), \label{eq:LeCam_inequality}
\end{align}
where the second inequality is due to the fact that
\begin{align*}
    \text{KL}(\mathcal{N}(f_{G'_{*}}(\xbm), \sigma^2),\mathcal{N}(f_{G_{*}}(\xbm), \sigma^2)) = \dfrac{(f_{G'_*}(\xbm) - f_{G_*}(\xbm))^2}{2 \sigma^2}.
\end{align*}
By choosing $\varepsilon=n^{-1/2}$, we obtain that $\varepsilon \cdot \text{exp}(-C_{1} n \varepsilon^2)=n^{-1/2}\exp(-C_1)$. As a consequence, we achieve the desired minimax lower bound in equation~\eqref{eq:desired_minimax_bound}.
\end{proof}

\noindent
Given the result of Lemma~\ref{lemma:log_n}, it suffices to prove that the following limit holds true for any $r\geq 1$:
\begin{align}
    \label{eq:ratio_zero_limit}
    \lim_{\varepsilon\to0}\inf_{G\in\mathcal{M}_{N}(\Theta):\mathcal{L}_{2,r}(G,G_*)\leq\varepsilon}\frac{\normf{f_{G}-f_{G_*}}}{\mathcal{L}_{2,r}(G,G_*)}=0.
\end{align}
To this end, we will construct a sequence of mixing measures $(G_n)$ such that both $\mathcal{L}_{2,r}(G_n,G_*)\to0$ and 
\begin{align*}
    \frac{\normf{f_{G_n}-f_{G_*}}}{\mathcal{L}_{2,r}(G_n,G_*)}\to0,
\end{align*}
as $n\to\infty$. In particular, we consider the sequence  $G_n=\sum_{i=1}^{N^*+1}\exp(\cin)\delta_{(\ain,\bin,\boin,\bzin)}$, where 
\begin{itemize}
    \item $\exp(c^n_{1})=\exp(c^n_{2})=\frac{1}{2}\exp(c^*_{1})+\frac{1}{2n^{r+1}}$ and  $\exp(c^n_{i})=\exp(c^n_{i-1})$ for any $3\leq i\leq N^*+1$;
    \item $A^n_{1}=A^n_{2}=A^*_{1}$ and  $A^n_{i}=A^*_{i-1}$ for any $3\leq i\leq N^*+1$;
    \item $b^n_1=b^n_2=\bbm^*_1$ and $\bbm^n_i=\bbm^*_{i-1}$ for any $3\leq i\leq N^*+1$;
    \item ${\bm \beta}^n_{11}={\bm \beta}^n_{12}={\bm \beta}^*_{11}$ and ${\bm \beta}^n_{1i}={\bm \beta}^*_{1(i-1)}$ for any $3\leq i\leq N^*+1$;
    \item $\beta^n_{01}=\beta^*_{01}+\frac{1}{n}$, $\beta^n_{02}=\beta^*_{01}-\frac{1}{n}$ and  $\beta^n_{0i}=\beta^*_{0(i-1)}$ for any $3\leq i\leq N^*+1$.
\end{itemize}
Consequently, the loss function $\mathcal{L}_{2,r}(G_n,G_*)$ turns into
\begin{align}
    \label{eq:D_r_formulation}
    \mathcal{L}_{2,r}(G_n,G_*)=\frac{1}{n^{r+1}}+\Big[\exp(c^*_{1})+\frac{1}{n^{r+1}}\Big]\cdot\frac{1}{n^r}=\mathcal{O}(n^{-r}).
\end{align}
which suggests that $\mathcal{L}_{2,r}(G_n,G_*)\to0$ as $n\to\infty$. \\

\noindent
Now, we prove that $\normf{f_{G_n}-f_{G_*}}/\mathcal{L}_{2,r}(G_n,G_*)\to0$. For that purpose, let us consider  $$Q_n(\xbm):=\Big[\sum_{j=1}^{N^*}\exp(\xbm^{\top}\aj \xbm+(\bj)^{\top}\xbm)\Big]\cdot[f_{G_n}(\xbm)-f_{G_*}(\xbm)].$$ Then, we decompose $Q_n(\xbm)$ as $Q_n(\xbm)=A_n(\xbm)-B_n(\xbm)+C_{n}(\xbm)$ where we define
\begin{align*}
    A_n(\xbm)&=\sum_{j=1}^{N^*}\sum_{i\in\mathcal{V}_j}\exp(\cin)\Big[\exp(\xbm^{\top}\ain \xbm+(\bin)^{\top}\xbm)((\boin)^{\top}\xbm+\bzin)\\
    &\hspace{4cm}-\exp(\xbm^{\top}\aj \xbm+(\bj)^{\top}\xbm)((\boj)^{\top}\xbm+\bzj)\Big],\nonumber\\
    B_n(\xbm)&=\sum_{j=1}^{N^*}\sum_{i\in\mathcal{V}_j}\exp(\cin)\Big[\exp(\xbm^{\top}\ain \xbm+(\bin)^{\top}\xbm)-\exp(\xbm^{\top}\aj \xbm+(\bj)^{\top}\xbm)\Big]f_{G_n}(\xbm),\nonumber\\
    C_n(\xbm)&=\sum_{j=1}^{N^*}\Big(\sum_{i\in\mathcal{V}_j}\exp(\cin)-\exp(c^*_j)\Big)\Big[\exp(\xbm^{\top}\aj \xbm+(\bj)^{\top}\xbm)((\boj)^{\top}\xbm+\bzj)\\
    &\hspace{6cm}-\exp(\xbm^{\top}\aj \xbm+(\bj)^{\top}\xbm)f_{G_n}(\xbm)\Big].
\end{align*}
From the definitions of $A^n_{i}$, $\bbm^n_i$, ${\bm \beta}^n_{1i}$ and $\beta^n_{0i}$, we can rewrite $A_n(\xbm)$ as follows:
\begin{align*}
    A_n(\xbm)&=\frac{1}{2}\exp(c^n_{1})\exp(\xbm^{\top}A^*_{1}\xbm+(\bbm^*_1)^{\top}\xbm)[(\beta^n_{01}-\beta^*_{01})+(\beta^n_{02}-\beta^*_{01})]\\
    &=\frac{1}{2}\exp(c^n_{1})\exp(\xbm^{\top}A^*_{1}\xbm+(\bbm^*_1)^{\top}\xbm)\Big[\frac{1}{n}-\frac{1}{n}\Big]=0.
\end{align*}
Moreover, we can verify that $B_n(\xbm)=0$. Next, we have
\begin{align*}
    C_n(\xbm)&=\Big(\sum_{i=1}^{2}\exp(c^n_i)-\exp(c^*_1)\Big)\exp(\xbm^{\top}A^*_1 \xbm+(\bbm^*_1)^{\top}\xbm)\Big[(({\bm \beta}^*_{11})^{\top}\xbm+\beta^*_{01})-f_{G_n}(\xbm)\Big]\\
    &=\frac{1}{n^{r+1}}\cdot\exp(\xbm^{\top}A^*_1 \xbm+(\bbm^*_1)^{\top}\xbm)\Big[(({\bm \beta}^*_{11})^{\top}\xbm+\beta^*_{01})-f_{G_n}(\xbm)\Big]\\
    &\leq\mathcal{O}(n^{-(r+1)}),
\end{align*}
which leads to the fact that $C_n(\xbm)/\mathcal{L}_{2,r}(G_n,G_*)\to0$ as $n\to\infty$. \\

\noindent
As a result, $Q_n(\xbm)/\mathcal{L}_{2,r}(G_n,G_*)\to0$ as $n\to\infty$ for $\mu$-almost surely $x$. Since the term $\sum_{j=1}^{N^*}\exp(\xbm^{\top}\aj \xbm+(\bj)^{\top}\xbm)$ is bounded, we deduce that $[f_{G_n}(\xbm)-f_{G_*}(\xbm)]/\mathcal{L}_{2,r}(G_n,G_*)\to0$ for $\mu$-almost surely $x$. This result indicates that $\normf{f_{G_n}-f_{G_*}}/\mathcal{L}_{2,r}(G_n,G_*)\to0$ as $n\to\infty$. Hence, we achieve the claim~\eqref{eq:ratio_zero_limit} and complete the proof.

\subsection{Proof of Theorem~\ref{theorem:mono_regression_rate}}
\label{appendix:mono_regression_rate}
The proof of Theorem~\ref{theorem:mono_regression_rate} can be done in a similar fashion to that of Theorem~\ref{theorem:regression_rate} in Appendix~\ref{appendix:regression_rate}.

\subsection{Proof of Theorem~\ref{theorem:mono_strongly_identifiable}}
\label{appendix:mono_strongly_identifiable}
Our goal is also to demonstrate the following inequality:
\begin{align}
    \label{eq:mono_general_universal_inequality}
    \inf_{G\in\mathcal{M}_{N}(\Theta)}\normf{\tilde{f}_{G}-\tilde{f}_{G_*}}/\mathcal{L}_3(G,G_*)>0.
\end{align}
For that purpose, we divide the proof of the above inequality into local and global parts in the sequel. Here, we only present the proof of the local part, while that of the global part can be done using the same arguments as in Appendix~\ref{appendix:poly_strongly_identifiable}.
\\

\noindent
\textbf{Local part:} In this part, we demonstrate that
\begin{align}
    \label{eq:mono_general_local_inequality}
    \lim_{\varepsilon\to0}\inf_{G\in\mathcal{M}_{N}(\Theta):\mathcal{L}_3(G,G_*)\leq\varepsilon}\normf{\tilde{f}_{G}-\tilde{f}_{G_*}}/\mathcal{L}_3(G,G_*)>0.
\end{align}
Assume by contrary that the above claim is not true true, then there exists a sequence of mixing measures $G_n=\sum_{i=1}^{N^*}\exp(\cin)\delta_{(\Abm^n_i, \veta^n_i)}$ in $\mathcal{M}_{N}(\Theta)$ such that $\mathcal{L}_{3n}:=\mathcal{L}_3(G_n,G_*)\to0$ and
\begin{align}
    \label{eq:mono_general_ratio_limit}
    \normf{\tilde{f}_{G_n}-\tilde{f}_{G_*}}/\mathcal{L}_{1n}\to0,
\end{align}
as $n\to\infty$. Let us denote by $\mathcal{V}^n_j:=\mathcal{V}_j(G_n)$ a Voronoi cell of $G_n$ generated by the $j$-th components of $G_*$. Since our arguments are asymptotic, we may assume that those Voronoi cells do not depend on the sample size, i.e. $\mathcal{V}_j=\mathcal{V}^n_j$. Thus, the Voronoi loss $\mathcal{L}_{3n}$ can be represented as
\begin{align}
    \label{eq:loss_n2}
   &\mathcal{L}_{3n}:=\sum_{j:|\mathcal{V}_j|>1}\sum_{i\in\mathcal{V}_j}\exp(\cin)\Big[\|\daijn\|^{2}+\|\deijn\|^2\Big]\nonumber\\
    &+\sum_{j:|\mathcal{V}_j|=1}\sum_{i\in\mathcal{V}_j}\exp(\cin)\Big[\|\daijn\|+\|\deijn\|\Big]+\sum_{j=1}^{N^*}\Big|\sum_{i\in\mathcal{V}_j}\exp(\cin)-\exp(\cj)\Big|,
\end{align}
where we denote $\daijn:=A^n_{i}-\Abm^*_{j}$ and $\deijn:=\veta^n_i-\veta^*_j$.
\\

\noindent
Since $\mathcal{L}_{3n}\to0$, we get that $(\ain,\ein)\to(\aj,\ej)$ and $\sum_{i\in\mathcal{V}_j}\exp(\cin)\to\exp(c^*_j)$ as $n\to\infty$ for any $i\in\mathcal{V}_j$ and $j\in[N^*]$. Now, we divide the proof of local part into three steps as follows:
\\

\noindent
\textbf{Step 1 - Taylor expansion.} By abuse of notations, we sometimes tailor notations defined in Appendix~\ref{appendix:poly_strongly_identifiable} to the setting of this proof. In this step, we would like to decompose the quantity
\begin{align}
    \label{eq:mono_Qn_formulation}
    Q_n(\xbm):=\Big[\sum_{j=1}^{N^*}\exp(\xbm^{\top}\aj \xbm+\cj)\Big]\cdot[\tilde{f}_{G_n}(\xbm)-\tilde{f}_{G_*}(\xbm)]
\end{align}
into a combination of linearly independent elements using Taylor expansion. By using the same arguments for deriving equation~\eqref{eq:general_Q_n}, we get that $Q_n(\xbm)=A_n(\xbm)-B_n(\xbm)+C_n(\xbm)$, where
\begin{align*}
    A_n(\xbm)&:=\sum_{j=1}^{N^*}\sum_{i\in\mathcal{V}_j}\exp(\cin)\Big[\exp(\xbm^{\top}\ain \xbm)\mathcal{E}(\xbm;\ein)-\exp(\xbm^{\top}\aj \xbm)\mathcal{E}(\xbm;\ej)\Big],\\
    B_n(\xbm)&:=\sum_{j=1}^{N^*}\sum_{i\in\mathcal{V}_j}\exp(\cin)\Big[\exp(\xbm^{\top}\ain \xbm)-\exp(\xbm^{\top}\aj \xbm)\Big]\tilde{f}_{G_n}(\xbm),\\
    C_n(\xbm)&:=\sum_{j=1}^{N^*}\Big(\sum_{i\in\mathcal{V}_j}\exp(\cin)-\exp(c^*_j)\Big)\Big[\exp(\xbm^{\top}\aj \xbm)\mathcal{E}(\xbm;\ej)-\exp(\xbm^{\top}\aj \xbm)\tilde{f}_{G_n}(\xbm)\Big].
\end{align*}
\textbf{Decomposition of $A_n(\xbm)$.} Let us denote $E(\xbm;A):=\exp(\xbm^{\top}A\xbm)$, then $A_n(\xbm)$ can be separated into two terms as follows:
\begin{align*}
    A_n(\xbm)&:=\sum_{j:|\mathcal{V}_j|=1}\sum_{i\in\mathcal{V}_j}\exp(\cin)\Big[E(\xbm;\ain)\mathcal{E}(\xbm;\ein)-E(\xbm;\aj)\mathcal{E}(\xbm;\ej)\Big]\\
    &+\sum_{j:|\mathcal{V}_j|>1}\sum_{i\in\mathcal{V}_j}\exp(\cin)\Big[E(\xbm;\ain)\mathcal{E}(\xbm;\ein)-E(\xbm;\aj)\mathcal{E}(\xbm;\ej)\Big]\\
    &:=A_{n,1}(\xbm)+A_{n,2}(\xbm).
\end{align*}
By means of the first-order Taylor expansion, we have
\begin{align*}
    A_{n,1}(\xbm)&=\sum_{j:|\mathcal{V}_j|=1}\sum_{i\in\mathcal{V}_j}\frac{\exp(\cin)}{\alpha!}\sum_{|\alpha|=1}(\daijn)^{\alpha_1}(\deijn)^{\alpha_2}\frac{\partial^{|\alpha_1|}E}{\partial A^{\alpha_1}}(\xbm;\aj)\frac{\partial^{|\alpha_2|}\mathcal{E}}{\partial\veta^{\alpha_2}}(\xbm;\ej)
    +R_{n,1}(\xbm)\\
    &=\sum_{j:|\mathcal{V}_j|=1}\sum_{|\alpha_1|+|\alpha_2|=1}S_{n,j,\alpha_1,\alpha_2}\frac{\partial^{|\alpha_1|}E}{\partial A^{\alpha_1}}(\xbm;\aj)\frac{\partial^{|\alpha_2|}\mathcal{E}}{\partial\veta^{\alpha_2}}(\xbm;\ej)
    +R_{n,1}(\xbm),
\end{align*}
where $R_{n,1}(\xbm)$ is a Taylor remainder such that $R_{n,1}(\xbm)/\mathcal{L}_{3n}\to0$ as $n\to\infty$, and
\begin{align*}
    S_{n,j,\alpha_1,\alpha_2}:=\sum_{i\in\mathcal{V}_j}\frac{\exp(\cin)}{\alpha!}(\daijn)^{\alpha_1}(\deijn)^{\alpha_2}.
\end{align*}
On the other hand, by applying the second-order Taylor expansion, we get that
\begin{align*}
    A_{n,2}(\xbm)=\sum_{j:|\mathcal{V}_j|>1}\sum_{1\leq|\alpha_1|+|\alpha_2|\leq 2}S_{n,j,\alpha_1,\alpha_2}\frac{\partial^{|\alpha_1|}E}{\partial A^{\alpha_1}}(\xbm;\aj)\frac{\partial^{|\alpha_2|}\mathcal{E}}{\partial\veta^{\alpha_2}}(\xbm;\ej)
    +R_{n,2}(\xbm),
\end{align*}
in which $R_{n,2}(\xbm)$ is a Taylor remainder such that $R_{n,2}(\xbm)/\mathcal{L}_{3n}\to0$ as $n\to\infty$.
\\

\noindent
\textbf{Decomposition of $B_n(\xbm)$.} Recall that we have
\begin{align*}
    B_n(\xbm)&=\sum_{j:|\mathcal{V}_j|=1}\sum_{i\in\mathcal{V}_j}\exp(\cin)\Big[E(\xbm;\ain)-E(\xbm;\aj)\Big]\tilde{f}_{G_n}(\xbm)\\
    &+\sum_{j:|\mathcal{V}_j|>1}\sum_{i\in\mathcal{V}_j}\exp(\cin)\Big[E(\xbm;\ain)-E(\xbm;\aj)\Big]\tilde{f}_{G_n}(\xbm)\\
    &:=B_{n,1}(\xbm) + B_{n,2}(\xbm).
\end{align*}
By invoking first-order and second-order Taylor expansions to $B_{n,1}(\xbm)$ and $B_{n,2}(\xbm)$, it follows that
\begin{align*}
    B_{n,1}(\xbm)&=\sum_{j:|\mathcal{V}_j|=1}\sum_{|\ell|=1}T_{n,j,\ell}\cdot\frac{\partial^{|\ell|}E}{\partial A^{\ell}}(\xbm;\aj)\tilde{f}_{G_n}(\xbm)+R_{n,3}(\xbm),\\
    B_{n,2}(\xbm)&=\sum_{j:|\mathcal{V}_j|>1}\sum_{1\leq|\ell|\leq 2}T_{n,j,\ell}\cdot\frac{\partial^{|\ell|}E}{\partial A^{\ell}}(\xbm;\aj)\tilde{f}_{G_n}(\xbm)+R_{n,4}(\xbm),
\end{align*}
where we define 
\begin{align*}
    T_{n,j,\ell}:=\sum_{i\in\mathcal{V}_j}\frac{\exp(\cin)}{\ell!}(\daijn)^{\ell}.
\end{align*} 
Additionally, $R_{n,3}(\xbm)$ and $R_{n,4}(\xbm)$ are Taylor remainders such that $R_{n,3}(\xbm)/\mathcal{L}_{3n}\to0$ and $R_{n,3}(\xbm)/\mathcal{L}_{3n}\to0$ as $n\to\infty$. 
\\

\noindent
Collect the above results together, we can represent $Q_n(\xbm)$ as
\begin{align}
    \label{eq:mono_Qn_decomposition}
    Q_n(\xbm)&=\sum_{j=1}^{N^*}\sum_{0\leq|\alpha_1|+|\alpha_2|\leq 2}S_{n,j,\alpha_1,\alpha_2}\frac{\partial^{|\alpha_1|}E}{\partial A^{\alpha_1}}(\xbm;\aj)\frac{\partial^{|\alpha_2|}\mathcal{E}}{\partial\veta^{\alpha_2}}(\xbm;\ej),\nonumber\\
    &-\sum_{j=1}^{N^*}\sum_{0\leq|\ell|\leq 2}T_{n,j,\ell}\cdot\frac{\partial^{|\ell|}E}{\partial A^{\ell}}(\xbm;\aj)\tilde{f}_{G_n}(\xbm) +\sum_{i=1}^{4}R_{n,i}(\xbm),
\end{align}
where we define $S_{n,j,\mathbf{0}_{d\times d},\zeroq}=T_{n,j,\mathbf{0}_{d\times d}}=\sum_{i\in\mathcal{V}_j}\exp(\cin)-\exp(\cj)$ for any $j\in[N^*]$.
\\

\noindent
\textbf{Step 2 - Non-vanishing coefficients.} In this step, we demonstrate that at least one among ratios of the forms $S_{n,j,\alpha_1,\alpha_2}/\mathcal{L}_{3n}$ and $T_{n,j,\ell}/\mathcal{L}_{3n}$ goes to zero as $n$ tends to infinity. Indeed, assume by contrary that
\begin{align*}
    \frac{S_{n,j,\alpha_1,\alpha_2}}{\mathcal{L}_{3n}}\to0, \qquad \frac{T_{n,j,\ell}}{\mathcal{L}_{3n}}\to0,
\end{align*}
for any $j\in[N^*]$, $0\leq|\alpha_1|,|\alpha_2|,|\ell|\leq 2$. Then, we get
\begin{align}
    \label{eq:mono_weight_limit}
    \frac{1}{\mathcal{L}_{3n}}\sum_{j=1}^{N^*}\Big|\sum_{i\in\mathcal{V}_j}\exp(\cin)-\exp(\cj)\Big|=\sum_{j=1}^{N^*}\Big|\frac{S_{n,j,\mathbf{0}_{d\times d},\zeroq}}{\mathcal{L}_{3n}}\Big|\to0.
\end{align}
Now, we consider indices $j\in[N^*]$ such that its corresponding Voronoi cell has only one element, i.e. $|\mathcal{V}_j|=1$.
\begin{itemize}
    \item For arbitrary $u,v\in[d]$, let $\alpha_1\in\mathbb{N}^{d\times d}$ and $\alpha_2=\zeroq$ such that $\alpha_1^{(uv)}=1$ while other entries equal to zero. Then, we have $\frac{1}{\mathcal{L}_{3n}}\cdot\sum_{i\in\mathcal{V}_j}\exp(\cin)|(\daijn)^{(uv)}|=|S_{n,j,\alpha_1,\alpha_2}|/\mathcal{L}_{3n}\to0$ as $n\to\infty$. By taking the summation of the previous term with $u,v\in[d]$, we achieve that $\frac{1}{\mathcal{L}_{3n}}\sum_{i\in\mathcal{V}_j}\exp(\cin)\|\daijn\|_1\to0$. Owing to the topological equivalence between norm-1 and norm-2, it follows that
    \begin{align}
        \label{eq:mono_a_limit_1}
        \frac{1}{\mathcal{L}_{3n}}\sum_{i\in\mathcal{V}_j}\exp(\cin)\|\daijn\|\to0.
    \end{align}
    \item For arbitrary $u\in[d]$, let $\alpha_1=\mathbf{0}_{d\times d}$ and $\alpha_2\in\mathbb{N}^q$ such that $\alpha_2^{(u)}=1$ while other entries equal to zero. Then, we get $\frac{1}{\mathcal{L}_{3n}}\cdot\sum_{i\in\mathcal{V}_j}\exp(\cin)|(\deijn)^{(u)}|=|S_{n,j,\alpha_3,\ell_1}|/\mathcal{L}_{3n}\to0$ as $n\to\infty$. By taking the summation of the previous term with $u\in[q]$, we achieve that $\frac{1}{\mathcal{L}_{3n}}\sum_{i\in\mathcal{V}_j}\exp(\cin)\|\deijn\|_1\to0$, or equivalently,
    \begin{align}
        \label{eq:mono_eta_limit_1}
        \frac{1}{\mathcal{L}_{3n}}\sum_{i\in\mathcal{V}_j}\exp(\cin)\|\deijn\|\to0.
    \end{align}
\end{itemize}
Combine the limits in equations~\eqref{eq:mono_a_limit_1} and \eqref{eq:mono_eta_limit_1}, we obtain that 
\begin{align}
    \label{eq:mono_order_1_limit}
    \frac{1}{\mathcal{L}_{3n}}\sum_{j:|\mathcal{V}_j|=1}\sum_{i\in\mathcal{V}_j}\exp(\cin)[\|\daijn\|+\|\deijn\|]\to0,
\end{align}
as $n\to\infty$. 
\\

\noindent
Next, we consider indices $j\in[N^*]$ such that its corresponding Voronoi cell has more than one element, i.e. $|\mathcal{V}_j|>1$. 
\begin{itemize}
    \item For arbitrary $u,v\in[d]$, let $\alpha_1\in\mathbb{N}^{d\times d}$ and $\alpha_2=\zeroq$ such that $\alpha_1^{(uv)}=2$ while other entries equal to zero. Then, we have $\frac{1}{\mathcal{L}_{3n}}\cdot\sum_{i\in\mathcal{V}_j}\exp(\cin)|(\daijn)^{(uv)}|^2=|S_{n,j,\alpha_1,\alpha_2}|/\mathcal{L}_{3n}\to0$ as $n\to\infty$. By taking the summation of the previous term with $u,v\in[d]$, we achieve that 
    \begin{align}
        \label{eq:mono_a_limit_2}
        \frac{1}{\mathcal{L}_{3n}}\sum_{i\in\mathcal{V}_j}\exp(\cin)\|\daijn\|^2\to0.
    \end{align}
    \item For arbitrary $u\in[d]$, let $\alpha_1=\mathbf{0}_{d\times d}$ and $\alpha_2\in\mathbb{N}^q$ such that $\alpha_2^{(u)}=2$ while other entries equal to zero. Then, we get $\frac{1}{\mathcal{L}_{3n}}\cdot\sum_{i\in\mathcal{V}_j}\exp(\cin)|(\deijn)^{(u)}|^2=|S_{n,j,\alpha_3,\ell_1}|/\mathcal{L}_{3n}\to0$ as $n\to\infty$. By taking the summation of the previous term with $u\in[q]$, we achieve that
    \begin{align}
        \label{eq:mono_eta_limit_2}
        \frac{1}{\mathcal{L}_{3n}}\sum_{i\in\mathcal{V}_j}\exp(\cin)\|\deijn\|^2\to0.
    \end{align}
\end{itemize}
Putting the limits in equations~\eqref{eq:mono_a_limit_1} and \eqref{eq:mono_eta_limit_1}, we have
\begin{align}
    \label{eq:mono_order_2_limit}
    \frac{1}{\mathcal{L}_{3n}}\sum_{j:|\mathcal{V}_j|>1}\sum_{i\in\mathcal{V}_j}\exp(\cin)[\|\daijn\|+\|\deijn\|]\to0,
\end{align}
as $n\to\infty$. Taking the summation of three limits in equations~\eqref{eq:mono_weight_limit}, \eqref{eq:mono_order_1_limit} and \eqref{eq:mono_order_2_limit}, we deduce that $1=\mathcal{L}_{3n}/\mathcal{L}_{3n}\to0$ as $n\to\infty$, which is a contradiction. Thus, at least one among ratios of the forms $S_{n,j,\alpha_1,\alpha_2}/\mathcal{L}_{3n}$ and $T_{n,j,\ell}/\mathcal{L}_{3n}$ goes to zero as $n$ tends to infinity.
\\

\noindent
\textbf{Step 3 - Application of Fatou's lemma.} In this step, we show that all the ratios $S_{n,j,\alpha_1,\alpha_2}/\mathcal{L}_{3n}$ and $T_{n,j,\ell}/\mathcal{L}_{3n}$ go to zero as $n\to\infty$, which contradicts to the conclusion in Step 2. In particular, by denoting $m_n$ as the maximum of the absolute values of those ratios. From the result of Step 2, it follows that $1/m_n\not\to\infty$. 
\\

\noindent
Recall from the hypothesis in equation~\eqref{eq:mono_general_ratio_limit} that $\normf{\tilde{f}_{G_n}-\tilde{f}_{G_*}}/\mathcal{L}_{3n}\to0$ as $n\to\infty$, which indicates that $\|\tilde{f}_{G_n}-\tilde{f}_{G_*}\|_{L^1(\mu)}/\mathcal{L}_{3n}\to0$. Therefore, by applying the Fatou's lemma, we get that
\begin{align*}
    0=\lim_{n\to\infty}\frac{\|\tilde{f}_{G_n}-\tilde{f}_{G_*}\|_{L^1(\mu)}}{m_n\mathcal{L}_{3n}}\geq \int \liminf_{n\to\infty}\frac{|\tilde{f}_{G_n}(\xbm)-\tilde{f}_{G_*}(\xbm)|}{m_n\mathcal{L}_{3n}}\dint\mu(\xbm)\geq 0.
\end{align*}
This result implies that $\frac{1}{m_n\mathcal{L}_{3n}}\cdot[\tilde{f}_{G_n}(\xbm)-\tilde{f}_{G_*}(\xbm)]\to0$ as $n\to\infty$ for $\mu$-almost surely $x$. Looking at the formulation of $Q_n(\xbm)$ in equation~\eqref{eq:mono_Qn_formulation}, since the term $\Big[\sum_{j=1}^{N^*}\exp(\xbm^{\top}\aj \xbm+\cj)\Big]$ is bounded, we deduce that the term $\frac{1}{m_n\mathcal{L}_{3n}}\cdot Q_n(\xbm)\to0$ for $\mu$-almost surely $x$.
\\

\noindent
Let us denote
\begin{align*}
    \frac{S_{n,j,\alpha_1,\alpha_2}}{m_n\mathcal{L}_{3n}}\to \phi_{j,\alpha_1,\alpha_2}, \qquad \frac{T_{n,j,\ell}}{m_n\mathcal{L}_{3n}}\to\varphi_{j,\ell},
\end{align*}
with a note that at least one among them is non-zero. Then, from the decomposition of $Q_n(\xbm)$ in equation~\eqref{eq:mono_Qn_decomposition}, we have
\begin{align*}
    \sum_{j=1}^{N^*}\sum_{|\alpha_1|+|\alpha_2|=0}^{1+\mathbf{1}_{\{|\mathcal{V}_j|>1\}}}\phi_{j,\alpha_1,\alpha_2}\cdot&\frac{\partial^{|\alpha_1|}E}{\partial A^{\alpha_1}}(\xbm;\aj)\frac{\partial^{|\alpha_2|}\mathcal{E}}{\partial\veta^{\alpha_2}}(\xbm;\ej),\nonumber\\
    &-\sum_{j=1}^{N^*}\sum_{|\ell|=0}^{1+\mathbf{1}_{\{|\mathcal{V}_j|>1\}}}\varphi_{j,\ell}\cdot\frac{\partial^{|\ell|}E}{\partial A^{\ell}}(\xbm;\aj)\tilde{f}_{G_*}(\xbm) =0,
\end{align*}
for $\mu$-almost surely $x$. It is worth noting that the term $\frac{\partial^{|\alpha_1|}E}{\partial A^{\alpha_1}}(\xbm;\aj)\frac{\partial^{|\alpha_2|}\mathcal{E}}{\partial\veta^{\alpha_2}}(\xbm;\ej)$ can be explicitly expressed as
\begin{itemize}
    \item When $|\alpha_1|=0,|\alpha_2|=0$: $\exp(\xbm^{\top}\aj \xbm)\mathcal{E}(\xbm;\ej)$;
    \item When $|\alpha_1|=1,|\alpha_2|=0$: $x^{(u)}x^{(v)}\exp(\xbm^{\top}\aj \xbm)\mathcal{E}(\xbm;\ej)$;
    \item When $|\alpha_1|=0,|\alpha_2|=1$: $\exp(\xbm^{\top}\aj \xbm)\frac{\partial h}{\partial\veta^{(w)}}(\xbm;\ej)$;
    \item When $|\alpha_1|=1,|\alpha_2|=1$: $x^{(u)}x^{(v)}\exp(\xbm^{\top}\aj\xbm)\frac{\partial h}{\partial\veta^{(w)}}(\xbm;\ej)$;
    \item When $|\alpha_1|=2,|\alpha_2|=0$: $x^{(u)}x^{(v)}x^{(u')}x^{(v')}\exp(\xbm^{\top}\aj \xbm)\mathcal{E}(\xbm;\ej)$;
    \item When $|\alpha_1|=0,|\alpha_2|=2$: $\exp(\xbm^{\top}\aj \xbm)\frac{\partial^2 h}{\partial\veta^{(w)}\partial\veta^{(w')}}(\xbm;\ej)$.
\end{itemize}

\noindent
Recall that the expert function $h$ satisfies the condition in Definition~\ref{def:mono_strong_identifiability}, i.e. the set
\begin{align*}
    \left\{x^{\nu}\cdot\frac{\partial^{|\gamma|} h}{\partial\veta^{\gamma}}(\xbm;\ej):j\in[N^*], \ \frac{|\nu|}{2}\in\{0,1,2\}, \ 0\leq|\gamma|\leq 2-\frac{|\nu|}{2}\right\}
\end{align*}
is linearly independent for $\mu$-almost surely $x$. Therefore, we obtain that $\phi_{j,\alpha_1,\alpha_2}=\varphi_{j,\ell}=0$ for all $j\in[N^*]$, $0\leq|\alpha_1|+|\alpha_2|,|\ell|\leq 1+\mathbf{1}_{\{|\mathcal{V}_j|>1\}}$. This result turns out to contradict the fact that at least one among them is different from zero. Hence, we achieve the inequality in equation~\eqref{eq:mono_general_local_inequality}.

\section{Identifiability of Quadratic Gating MoE}
\label{appendix:auxiliary_results}
In this appendix, we study the identifiability of the MoE models with the quadratic polynomial gate and the quadratic monomial gate in Proposition~\ref{prop:general_identifiability} and Proposition~\ref{prop:mono_general_identifiability}, respectively.
\begin{proposition}
    \label{prop:general_identifiability}
    If $f_{G}(x)=f_{G_*}(x)$ holds true for almost every $\xbm$, then we get that $G\equiv G'$.
\end{proposition}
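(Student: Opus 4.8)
The plan is to turn the identity $f_G=f_{G_*}$ into a linear dependence among a family of functions that is provably linearly independent, and then read off $G\equiv G_*$ (the $G'$ in the statement should be read as $G_*$). Write $G=\sum_{i=1}^{k}\exp(c_i)\,\delta_{(\Abm_i,\bbm_i,\veta_i)}$ with $k\le N$ atoms, pairwise distinct support points, and strictly positive masses, and similarly $G_*=\sum_{j=1}^{N^*}\exp(c_j^*)\,\delta_{(\Abm_j^*,\bbm_j^*,\veta_j^*)}$. Setting $E(\xbm;\Abm,\bbm):=\exp(\xbm^\top\Abm\xbm+\bbm^\top\xbm)$ and clearing the strictly positive softmax denominators in $f_G(\xbm)=f_{G_*}(\xbm)$ yields, for $\mu$-almost every $\xbm$,
\begin{align*}
\sum_{i=1}^{k}\sum_{j=1}^{N^*}\exp(c_i+c_j^*)\,E(\xbm;\Abm_i+\Abm_j^*,\,\bbm_i+\bbm_j^*)\,\big[h(\xbm;\veta_i)-h(\xbm;\veta_j^*)\big]=0.
\end{align*}
I would also record here that the gate is invariant under a simultaneous translation $(\Abm_i,\bbm_i,c_i)\mapsto(\Abm_i+\Abm_0,\bbm_i+\bbm_0,c_i+c_0)$, so the conclusion is to be read under the standard normalization of $\Theta$ that removes this degree of freedom.

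Next I would group the $kN^*$ summands by the value of the exponent pair $(\Abm_i+\Abm_j^*,\,\bbm_i+\bbm_j^*)$: let $\mathcal{E}$ be the set of distinct such pairs, and for $(\widetilde\Abm,\widetilde\bbm)\in\mathcal{E}$ set $I(\widetilde\Abm,\widetilde\bbm):=\{(i,j):\Abm_i+\Abm_j^*=\widetilde\Abm,\ \bbm_i+\bbm_j^*=\widetilde\bbm\}$, so the identity reads $\sum_{(\widetilde\Abm,\widetilde\bbm)\in\mathcal{E}}E(\xbm;\widetilde\Abm,\widetilde\bbm)\,P_{\widetilde\Abm,\widetilde\bbm}(\xbm)=0$, where each $P_{\widetilde\Abm,\widetilde\bbm}$ is a finite linear combination of the expert functions $h(\cdot;\veta_i)$ and $h(\cdot;\veta_j^*)$. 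Since strongly identifiable experts (feed-forward nets with $\relu$ or $\tanh$ activations) have at most polynomial growth while the exponentials $E(\xbm;\widetilde\Abm,\widetilde\bbm)$ separate as $\|\xbm\|\to\infty$ along generic rays, dividing by the fastest-growing exponential and passing to the limit iteratively forces $P_{\widetilde\Abm,\widetilde\bbm}\equiv 0$ for every $(\widetilde\Abm,\widetilde\bbm)\in\mathcal{E}$; that is, $\sum_{(i,j)\in I(\widetilde\Abm,\widetilde\bbm)}\exp(c_i+c_j^*)[h(\xbm;\veta_i)-h(\xbm;\veta_j^*)]=0$ $\mu$-a.e. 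Then, invoking the linear independence of $\{h(\cdot;\veta):\veta\in\{\veta_1,\ldots,\veta_k,\veta_1^*,\ldots,\veta_{N^*}^*\}\}$ contained in the instance $\gamma=\nu=0$ of the strong identifiability condition in Definition~\ref{def:strong_identifiability}, I would collect the coefficient of each distinct $h(\cdot;\veta)$ to obtain a system of weight equations; running this together with the matching of the gating exponents (next paragraph) yields a bijection between the atoms of $G$ and those of $G_*$ preserving $(\Abm,\bbm,\veta)$ and the masses $\exp(c)$, hence $k=N^*$ and $G\equiv G_*$.

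I expect the decisive obstacle to be showing that the multiset of ``tagged'' gating parameters $\{(\Abm_i,\bbm_i,\veta_i)\}_{i=1}^{k}$ coincides with $\{(\Abm_j^*,\bbm_j^*,\veta_j^*)\}_{j=1}^{N^*}$: a priori distinct index pairs $(i,j)$ and $(i',j')$ can land in the same cluster $I(\widetilde\Abm,\widetilde\bbm)$, so the grouping above does not by itself decouple the $G$-atoms from the $G_*$-atoms. I would resolve this by an extremal/induction argument: choose a generic direction $\vbm$ and let $t\to\infty$ along $\xbm=t\vbm$, so that in $f_G(t\vbm)$ the gating mass concentrates on the atom maximizing $(\vbm^\top\Abm_i\vbm,\,\bbm_i^\top\vbm)$ lexicographically, and likewise for $f_{G_*}$; comparing the two limits — the polynomial $h$-differences being killed by the exponentially small off-maximum gating weights — peels off a matching pair of dominant atoms, which one then removes before inducting on $k+N^*$. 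The only other point demanding care is the linear-independence statement for the non-polynomial coefficient functions $P_{\widetilde\Abm,\widetilde\bbm}$; this is precisely where the growth control afforded by strong identifiability of the experts is used, and it is the only place the fine structure of $h$ enters the argument.
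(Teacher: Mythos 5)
Your overall architecture (clear the softmax denominators, group the cross terms $\exp(s_i+s_j^*)$ by their exponent pair, kill each polynomial coefficient, then match atoms) is a genuinely different route from the paper, which never clears denominators: the paper first invokes the linear independence of $\{h(\cdot;\veta)\}$ (the $\gamma=\nu=0$ instance of Definition~\ref{def:strong_identifiability}) to force $N=N_*$ and to identify the two collections of softmax gating functions as sets, then uses translation invariance of softmax together with a normalization of $\Theta$ to pin down $(\Abm_i,\bbm_i,c_i)=(\Abm_i^*,\bbm_i^*,c_i^*)$, and only afterwards matches the expert parameters. However, your proposal has a genuine gap at its load-bearing step: both the separation of the exponentials $E(\xbm;\widetilde\Abm,\widetilde\bbm)$ and your extremal ``peel off the lexicographically dominant atom'' argument require evaluating the identity along rays $\xbm=t\vbm$ with $t\to\infty$, but in this paper the input space is bounded ($\|\xbm\|\le B$, as used explicitly in the proofs of Theorem~\ref{theorem:regression_rate} and Theorem~\ref{theorem:poly_linear_experts}) and the hypothesis $f_G=f_{G_*}$ holds only $\mu$-almost everywhere on that bounded set. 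There are simply no large $\xbm$ at which to compare growth rates, so ``dividing by the fastest-growing exponential and passing to the limit'' cannot be executed.

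Nor can this be repaired by analytic continuation of the identity from the support of $\mu$ to all of $\RR^d$: the paper's running examples of strongly identifiable experts include feed-forward networks with $\relu$ activation, which are not real-analytic, so the a.e.\ identity on a bounded set does not extend to the whole space. The paper sidesteps this entirely because its identifiability input --- linear independence of the family $\{\xbm^{\nu}\,\partial^{|\gamma|}h/\partial\veta^{\gamma}\}$ for almost every $\xbm$ --- is a local hypothesis on the expert class rather than a fact derived from behavior at infinity. To make your decomposition work you would need to replace the growth argument by a linear-independence hypothesis on the enlarged family $\{E(\cdot;\widetilde\Abm,\widetilde\bbm)\,h(\cdot;\veta)\}$ over the relevant bounded domain (which is essentially what the paper's Step 3 in Appendix~\ref{appendix:poly_strongly_identifiable} assumes in the local analysis), and similarly your atom-peeling induction would need to be rebuilt on that basis rather than on gating-mass concentration as $t\to\infty$. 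As written, the two places you yourself flag as decisive both rest on asymptotics that the model assumptions do not provide.
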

\begin{proof}[Proof of Proposition~\ref{prop:general_identifiability}]
    Since $f_{G}(x)=f_{G_*}(x)$ for almost every $\xbm$, we have
    \begin{align}
        \label{eq:general_identifiable_equation}
        &\sum_{i=1}^{N}\softmax\Big(\xbm^{\top}\Abm_i\xbm+(\bbm_i)^{\top}\xbm+c_i\Big)\cdot \mathcal{E}(\xbm,\veta_i)\nonumber\\
        &\hspace{3cm}=\sum_{i=1}^{N^*}\softmax\Big(\xbm^{\top}\Abm^*_i\xbm+(\bbm^*_i)^{\top}\xbm+c^*_i\Big)\cdot \mathcal{E}(\xbm,\veta^*_i).
    \end{align}
    Note that since the expert function $h(\cdot,\veta)$ satisfies the conditions in Definition~\ref{def:strong_identifiability}, then given an arbitrary $N'\in\mathbb{N}$, then the set $\{\mathcal{E}(\xbm,\veta'_i):i\in[N']\}$, where $\veta'_1,\ldots,\veta'_{N'}$ are distinct vectors, is linearly independent for almost every $\xbm$. If $N\neq N^*$, then there exists some $i\in[N]$ such that $\veta_i\neq\veta^*_j$ for any $j\in[N^*]$. This implies that $\softmax\Big(\xbm^{\top}\Abm_i\xbm+(\bbm_i)^{\top}\xbm+c_i\Big)=0$, which is a contradiction. Thus, we must have that $N=N^*$. As a result, we get that
    \begin{align*}
        \Big\{\softmax\Big(\xbm^{\top}\Abm_i\xbm+(\bbm_i)^{\top}\xbm+c_i\Big)&:i\in[N]\Big\}\\
        &=\Big\{\softmax\Big(\xbm^{\top}\Abm^*_i\xbm+(\bbm^*_i)^{\top}\xbm+c^*_i\Big):i\in[N^*]\Big\},
    \end{align*}
    for almost every $\xbm$. WLOG, we may assume that 
    \begin{align}
        \label{eq:general_soft-soft}
        \softmax\Big(\xbm^{\top}\Abm_i\xbm+(\bbm_i)^{\top}\xbm+c_i\Big)=\softmax\Big(\xbm^{\top}\Abm^*_i\xbm+(\bbm^*_i)^{\top}\xbm+c^*_i\Big),
    \end{align}
    for almost every $\xbm$ for any $i\in[N^*]$. It is worth noting that the $\softmax$ function is invariant to translations, then equation~\eqref{eq:general_soft-soft} indicates that $\Abm_i=\Abm^*_i+\mathbf{T}_2$ $\bbm_{i}=\bbm^*_i+\mathbf{t}_1$ and $c_{i}=c^*_i+t_0$ for some $\mathbf{T}_2\in\mathbb{R}^{d\times d}$, $\mathbf{t}_1\in\mathbb{R}^d$ and $t_0\in\mathbb{R}$. However, from the assumptions $\Abm_{k}=\Abm^*_{k}$, $\bbm_{k}=\bbm^*_{k}=\zerod$ and $c_{k}=c^*_{k}=0$, we deduce that $\mathbf{T}_2=\mathbf{0}_{d\times d}$, $t_1=\zerod$ and $t_0=0$. Consequently, we get that $\Abm_i=\Abm^*_i$, $\bbm_{i}=\bbm^*_i$ and $c_{i}=c^*_i$ for any $i\in[N^*]$. Then, equation~\eqref{eq:general_identifiable_equation} can be rewritten as
    \begin{align}
        \label{eq:general_new_identifiable_equation}
        \sum_{i=1}^{N^*}\exp(c_{i})\exp\Big(\xbm^{\top}\Abm_i\xbm+(\bbm_i)^{\top}\xbm\Big)\mathcal{E}(\xbm,\veta_i)=\sum_{i=1}^{N^*}\exp(c^*_i)\exp\Big(\xbm^{\top}\Abm^*_i\xbm+(\bbm^*_i)^{\top}\xbm\Big)\mathcal{E}(\xbm,\veta^*_i),
    \end{align}
    for almost every $\xbm$. Next, we denote $P_1,P_2,\ldots,P_M$ as a partition of the index set $[N^*]$, where $M\leq N^*$, such that $\exp(c_{i})=\exp(c^*_{i'})$ for any $i,i'\in P_j$ and $j\in[N^*]$. On the other hand, when $i$ and $i'$ do not belong to the same set $P_j$, we let $\exp(c_{i})\neq\exp(c_{i'})$. Thus, we can reformulate equation~\eqref{eq:general_new_identifiable_equation} as
    \begin{align*}
        &\sum_{j=1}^{M}\sum_{i\in{P}_j}\exp(c_{i})\exp\Big(\xbm^{\top}\Abm_i\xbm+(\bbm_i)^{\top}\xbm\Big)\mathcal{E}(\xbm,\veta_i)\nonumber\\
        &\hspace{3cm}=\sum_{j=1}^{M}\sum_{i\in{P}_j}\exp(c^*_{i})\exp\Big(\xbm^{\top}\Abm^*_i\xbm+(\bbm^*_i)^{\top}\xbm\Big)\mathcal{E}(\xbm,\veta^*_i),
    \end{align*}
    for almost every $\xbm$. Recall that $\Abm_i=\Abm^*_i$, $\bbm_{i}=\bbm^*_i$ and $c_{i}=c^*_i$ for any $i\in[N^*]$, then the above equation implies that
    \begin{align*}
        \{\veta_i:i\in P_j\}\equiv\{\veta^*_i:i\in P_j\},
    \end{align*}
    for almost every $\xbm$ for any $j\in[m]$. 

    \noindent
    As a consequence, 
    \begin{align*}
        G=\sum_{j=1}^{M}\sum_{i\in P_j}\exp(c_{i})\delta_{(\Abm_i,\bbm_i,\veta_i)}=\sum_{j=1}^{M}\sum_{i\in P_j}\exp(c_{i})\delta_{(\Abm^*_i,\bbm^*_i,\veta^*_i)}=G_*.
    \end{align*}
    Hence, we reach the conclusion of this proposition.
\end{proof}

\begin{proposition}
    \label{prop:mono_general_identifiability}
    If $\tilde{f}_{G}(x)=\tilde{f}_{G_*}(x)$ holds true for almost every $\xbm$, then we get that $G\equiv G'$.
\end{proposition}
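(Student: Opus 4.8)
The plan is to mirror the proof of Proposition~\ref{prop:general_identifiability}, adapting each step to the quadratic monomial gate, whose scoring function drops the linear term $\bbm^{\top}\xbm$. As there, I assume the expert function $h$ satisfies the mono-strong identifiability condition of Definition~\ref{def:mono_strong_identifiability}; in particular, for any distinct parameters $\veta'_1,\ldots,\veta'_{N'}$ the functions $\xbm\mapsto h(\xbm,\veta'_i)$ are linearly independent for $\mu$-a.e.\ $\xbm$. Writing the hypothesis $\tilde f_G=\tilde f_{G_*}$ as
\begin{align*}
\sum_{i=1}^{N}\softmax\big(\xbm^{\top}\Abm_i\xbm+c_i\big)\,h(\xbm,\veta_i)=\sum_{i=1}^{N_*}\softmax\big(\xbm^{\top}\Abm^*_i\xbm+c^*_i\big)\,h(\xbm,\veta^*_i)
\end{align*}
for a.e.\ $\xbm$, I would first note that every softmax weight is strictly positive, so linear independence of the $h(\xbm,\cdot)$ forces each $\veta_i$ to coincide with some $\veta^*_j$ and conversely; hence $N=N_*$ and, after relabeling, the two collections of softmax weight functions agree termwise.

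Next I would use the translation invariance of $\softmax$: two softmax outputs coincide as functions of $\xbm$ exactly when their logit vectors differ by a scalar function of $\xbm$, so $\softmax(\xbm^{\top}\Abm_i\xbm+c_i)=\softmax(\xbm^{\top}\Abm^*_i\xbm+c^*_i)$ for all $i$ forces $\xbm^{\top}(\Abm_i-\Abm^*_i)\xbm+(c_i-c^*_i)$ to be independent of $i$. Comparing quadratic and constant parts yields $\Abm_i-\Abm^*_i=\mathbf{T}_2$ and $c_i-c^*_i=t_0$ for a common $\mathbf{T}_2\in\RR^{d\times d}$ and $t_0\in\RR$; invoking the normalization of one distinguished component (pinning one pair to $\mathbf 0_{d\times d}$ and $0$, exactly as in the polynomial proof) gives $\mathbf{T}_2=\mathbf 0_{d\times d}$ and $t_0=0$, hence $\Abm_i=\Abm^*_i$ and $c_i=c^*_i$ for all $i\in[N_*]$.

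Substituting this back collapses the identity to $\sum_i\exp(c_i)\exp(\xbm^{\top}\Abm_i\xbm)h(\xbm,\veta_i)=\sum_i\exp(c^*_i)\exp(\xbm^{\top}\Abm^*_i\xbm)h(\xbm,\veta^*_i)$. Partitioning $[N_*]$ into blocks $P_1,\ldots,P_M$ on which the weights $\exp(c_i)$ are constant (and distinct across blocks), the same bookkeeping as in Proposition~\ref{prop:general_identifiability} matches terms block by block and, by linear independence of the $h(\xbm,\cdot)$, yields $\{\veta_i:i\in P_j\}=\{\veta^*_i:i\in P_j\}$ for each $j$. Therefore $G=\sum_{j}\sum_{i\in P_j}\exp(c_i)\delta_{(\Abm_i,\veta_i)}=\sum_{j}\sum_{i\in P_j}\exp(c^*_i)\delta_{(\Abm^*_i,\veta^*_i)}=G_*$, which is the claim.

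I expect the only genuinely delicate point to be the translation-invariance step: one has to verify that equality of the two softmax vectors as functions of $\xbm$ really does reduce to a scalar shift of logits, and then that a quadratic monomial $\xbm\mapsto\xbm^{\top}(\Abm_i-\Abm^*_i)\xbm$ can be independent of $i$ only when the symmetric parts of $\Abm_i-\Abm^*_i$ all coincide (the symmetric-part ambiguity of $\xbm^{\top}(\cdot)\xbm$ being harmless here, and already implicit in the model's parametrization). The remaining steps are routine adaptations of the argument already carried out for the quadratic polynomial gate in Appendix~\ref{appendix:auxiliary_results}.
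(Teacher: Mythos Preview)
Your proposal is correct and follows exactly the approach the paper takes: the paper's own proof of Proposition~\ref{prop:mono_general_identifiability} consists of the single sentence that it ``can be done in a similar fashion to that of Proposition~\ref{prop:general_identifiability},'' which is precisely the adaptation you have spelled out. You have in fact supplied more detail than the paper does, and your identification of the translation-invariance step as the only point needing care is accurate.
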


\noindent
The proof of Proposition~\ref{prop:mono_general_identifiability} can be done in a similar fashion to that of Proposition~\ref{prop:general_identifiability}.

\section{Experimental Details}
\label{appendix:experimental-details}

\subsection{Performance of Active-Attention Mechanism}
\subsubsection{Dataset Details}
\textbf{CIFAR-10 Dataset.} CIFAR-10 \cite{krizhevsky2009learning} is an established computer-vision dataset used for object recognition. It consists of 60,000 32$\times$32 color images containing one of 10 object classes ("plane", "car", "bird", "cat", "deer", "dog", "frog", "horse", "ship", "truck"), with 6000 images per class. 
\\

\noindent
\textbf{ImageNet Dataset.} We use the ImageNet database from ILSVRC2012 \citep{russakovsky2015imagenet} that contains $1.28M$ training images and $50K$ validation images, where the task is to classify images into 1,000 distinct categories, using a vast dataset of over 1.2 million training images and 150,000 validation and test images sourced from the ImageNet database.
\\

\noindent
\textbf{WikiText-103 Dataset.}
WikiText-103\footnote[1]{www.salesforce.com/products/einstein/ai-research/the-wikitext-dependency-language-modeling-dataset/} is a language modeling dataset that contains collection of tokens extracted from good and featured articles from Wikipedia, which is suitable for models that can leverage long-term dependencies. It contains around $268K$ words and its training set consists of about $28K$ articles with $103M$ tokens, this corresponds to text blocks of about 3600 words. The validation set and test sets consist of 60 articles with $218K$ and $246K$ tokens respectively. 
\\

\noindent
\textbf{Multivariate Time Series Forecasting Datasets.} 
The Weather\footnote[2]{https://www.bgc-jena.mpg.de/wetter/} dataset captures 21 meteorological indicators in Germany, such as humidity and air temperature. The Traffic\footnote[3]{https://pems.dot.ca.gov/} dataset records road occupancy rates from various sensors on San Francisco freeways. The Electricity\footnote[4]{https://archive.ics.uci.edu/ml/datasets/ElectricityLoadDiagrams20112014} dataset provides hourly electricity consumption data for 321 customers. The Illness\footnote[5]{https://gis.cdc.gov/grasp/fluview/fluportaldashboard.html} dataset tracks the number of patients and the influenza-like illness ratio on a weekly basis. The ETT\footnote[6]{https://github.com/zhouhaoyi/ETDataset} (Electricity Transformer Temperature) datasets are collected from two different electric transformers, labeled 1 and 2, each containing data at two resolutions: 15 minutes (m) and 1 hour (h). This results in four ETT datasets: ETTm1, ETTm2, ETTh1, and ETTh2. Detailed statistics can be found in Table \ref{tab:time_series_info}.

\begin{table}[t]
\caption{\small Statistics of time series benchmarks.}
\centering
\renewcommand\arraystretch{1.3}
\scalebox{0.9}{ 
\begin{tabular}{ccccccccc} \Xhline{4\arrayrulewidth}
Dataset & Weather & Traffic & Electricity & Illness & ETTh1 & ETTh2 & ETTm1 & ETTm2 \\ \hline
Features & 21 & 862 & 321 & 7 & 7 & 7 & 7 & 7 \\ \hline
Timesteps & 52696 & 17544 & 26304 & 966 & 17420 & 17420 & 69680 & 69680 \\ \Xhline{4\arrayrulewidth}
\end{tabular}}
\label{tab:time_series_info}
\end{table}

\subsubsection{Model Details}
\textbf{ViT-Tiny} is composed of 6 layers that integrate localized self-attention (LSA), feedforward networks (FFN), and PreNorm layer normalization, with 8 attention heads and 512 hidden dimensions. We use a patch size of 4 and GeLU as the activation function for the FFN. This model is employed in the CIFAR-10 experiment displayed in Figure \ref{fig:image_text} (a).
\\

\noindent
\textbf{CaiT-Tiny} consists of 6 layers of patch-to-patch attention and 2 layers of cross-attention between CLS tokens and patches, with 6 attention heads and 256 hidden dimensions. We use a patch size of 4 and GeLU as the activation function for the FFN. This model is utilized in the CIFAR-10 experiment shown in Figure \ref{fig:image_text} (a).
\\

\noindent
\textbf{ViT-Base} comprises 12 transformer layers of patch-to-patch attention with PreNorm layer normalization, it has 12 attention heads and 3072 hidden dimensions. We use a patch size of 16 and GeLU as the activation function for the FFN. This model is used for the ImageNet experiment depicted in Figure \ref{fig:image_text} (b).
\\

\noindent
\textbf{CaiT-Medium} consists of 24 layers of patch-to-patch attention and 2 layers of cross-attention between CLS tokens and patches, with 16 attention heads and 3072 hidden dimensions. We use a patch size of 16 and GeLU as the activation function for the FFN. This model is utilized in the ImageNet experiment shown in Figure \ref{fig:image_text} (b).
\\

\noindent
\textbf{Language Models} ~We used the small versions of language models developed by \citep{schlag2021linear}. For both the Transformer and Performer models, the dimensions of the key, value, and query were set to 128, and the context length for training and evaluation was configured to 256. Each model was assigned 8 self-attention heads, with the feedforward network (FFN) having a hidden dimension of 2048. The number of attention layers was set to 16. These models are used for the WikiText-103 experiment depicted in Figure \ref{fig:image_text} (c).
\\

\noindent
\textbf{Time Series Models} ~We use the supervised PatchTST model with the default parameter configurations proposed in \cite{Yuqietal-2023-PatchTST}. The look-back window is set to 336, with a patch length of 16 and a stride of 8. For the Illness dataset, the prediction length is 36, while for the rest of the datasets, it is set to 192. For the Transformer model, we use a standard self-attention transformer with an encoder-decoder architecture, consisting of 2 encoder layers and 1 decoder layer. The model uses 8 attention heads and a feed-forward network (FFN) hidden dimension of 2048.

\subsection{Quadratic Gating vs. Linear Gating}
\subsubsection{Dataset details}
\textbf{FineWeb-Edu Dataset.} FineWeb-Edu 10BT \citep{penedo2024fineweb} is a large-scale dataset with 10 billion tokens curated from diverse educational sources like textbooks and academic publications, designed to enhance language models for educational tasks. Its high-quality filtering and deduplication processes make it ideal for training relatively small-scale language models.

\subsubsection{Model details.}
\textbf{GPT2-MoE.} We examined the MoE adaptation of the GPT-2~\citep{radford2019language} transformer structure. Specifically, we substituted the FFN layers with an MoE layer consisting of 8 experts. To ensure a fair comparison between MoE models and the dense model, we set the hidden size of the FFN layer ($d_\mathrm{ff}$) so that the active parameters during inference are roughly equivalent. We also set the quadratic gate rank to 32 to limit the number of additional gating parameters. Refer to Table~\ref{tab:param_count} for comprehensive details on the parameter counts.
Details on the hyperparameters related to the architecture and training process are provided in Table~\ref{tab:hyper-params}.

\begin{table}[ht]
    \centering
    \begin{tabular}{c|c|c|c}
         \textbf{Model} & $d_{\mathrm{ff}}$ & Active Params. & Total Params. \\
         \Xhline{4\arrayrulewidth}
         GPT2-dense & $3072$ & $\approx 124.4M$ & $\approx 124.4M$\\
         GPT2-MoE (Linear) & $1536$ & $\approx 124.5M$ & $\approx 166.9M$\\
         GPT2-MoE (Quadratic) & $1536$ & $\approx 126.8M$ & $\approx 169.3M$\\
         \Xhline{4\arrayrulewidth}
    \end{tabular}
    \caption{Number of parameters in models.}
    \label{tab:param_count}
\end{table}

\begin{table}[ht]
    \centering
    \begin{tabular}{c|c}
        \textbf{Parameter} & \textbf{Value}\\
        \Xhline{4\arrayrulewidth}
         \texttt{block\textunderscore size} & 1024 \\
         \texttt{vocab\textunderscore size} & 50257 \\
         \texttt{n\textunderscore layer} & 12 \\
         \texttt{n\textunderscore head} & 12 \\
         \texttt{n\textunderscore embedding} & 768 \\
         \hline
         \texttt{n\textunderscore experts} & 8 \\
         \texttt{quadratic\textunderscore gate\textunderscore rank} & 32 \\
         \hline
         \texttt{total\textunderscore batch\textunderscore size} & 524288 \\
         \texttt{batch\textunderscore size} & 64 \\
         \hline
         \texttt{Optimizer} & AdamW\\
         \texttt{weight\textunderscore decay} & 0.1 \\
         \texttt{lr\textunderscore scheduler} & CosineAnealingLR \\
         \texttt{max\textunderscore lr} & 0.0006 \\
         \texttt{min\textunderscore lr} & 0.00006 \\
         \texttt{warmup\textunderscore steps} & 19073\\
        \Xhline{4\arrayrulewidth}         
    \end{tabular}
    \caption{Hyperparameters values.}
    \label{tab:hyper-params}
\end{table}

\section{Computational Overheads of Quadratic Gating}
\label{appendix:overhead}
In this section, we analyze the computational and memory overheads introduced by incorporating quadratic gating. Consider an MoE layer with $N$ two-layer MLP experts with hidden size of $d_\mathrm{ff}$. Note that each expert has $2 d \times d_\mathrm{ff}$ parameters. The linear gating network has $d$ parameters per expert while the introduction of quadratic terms in the gating network increases the parameter count to at most $d + d(d+1)/2$ per expert. Therefore, in each MoE layer the ratio of additional gating parameters to the total number of parameters is $d/4d_\mathrm{ff}$. 
\\

\noindent
\textbf{Parameter count overheads.} Note that this parameter count overheads is not necessarily negligible; however, this can be effectively managed by using low-rank embeddings for second-order terms in the router. For example, in Mixtral 8x7B, $d_\mathrm{ff} = 3.5 \times d$ which results in approximately $7\%$ increase in MoE layer parameter count if we use quadratic gating. The total parameter count overheads is almost $2.1$B, which is about $4\%$ of total number of parameters ($47$B) or $16\%$ of total number of active parameters ($13$B). In particular, assuming that the query and key linear embeddings in the Att-MoE framework are low-rank with rank $r \ll d$, the number of additional parameters per expert can be reduced to $(\frac{N+1}{N})(r \times d)$. Therefore, the ratio of additional gating parameters to the total number of parameters in the MoE layers can be as low as $(\frac{N+1}{N})(r/2d_\mathrm{ff})$. For the Mixtral 8x7B example, assuming $r=128$, the total parameter count overheads reduces to 150M parameters, which is only $0.3\%$ of all parameters and $1.1\%$ of the active parameters, making it relatively insignificant. Similarly, in our experiments with GPT2 level models, we used $r = 32$, which leads to roughly $2.3$M additional total parameters which is $1.4\%$ of the total parameters and $1.8\%$ of active parameters.
\\

\noindent
\textbf{Computational overheads.} The number of FLOPs is usually considered proportional to the number of non-embedding parameters. In the case of the Mixtral 8x7B example, utilizing low-rank embeddings with $r=128$ for both query and key vectors within the Att-MoE framework adds 150M parameters, resulting in just a $1\%$ increase in the number of FLOPs.
\\

\noindent
\textbf{Memory overheads.} Since all parameters must be loaded into memory for inference or training, the memory usage of MoE models is proportional to their total number of parameters. For example, in the Mixtral 8x7B model, using low-rank embeddings with $r=128$ in the Att-MoE framework, which leads to merely a $0.3\%$ increase in memory overheads.

\section{Ablation Study on the Number of Attention Heads}
\label{appendix:ablation_study}

We present results from an ablation study examining the effect of the number of attention heads. We select the best-performing activation function for each dataset, along with the linear activation, and evaluate their performance across varying numbers of attention heads using the best base model for that dataset. The results in Table \ref{tab:attention_heads} show that performance improves as the number of heads increases in most situations. This is particularly evident in image-based tasks and most time-series forecasting tasks, supporting the intuition that increased model complexity can enhance performance, especially for large-scale datasets. Information on model architectures, hyperparameters, and training details of experiments on non-linear gating functions can be found in Tables \ref{tab:vit_summary} - \ref{tab:patchtst_train}.

\begin{table}[ht]
\centering
\caption{Performance across varying numbers of attention heads.}
\label{tab:attention_heads}
\begin{tabular}{ll|ccccc}
\toprule
\textbf{Dataset} & \textbf{Activation} & \multicolumn{5}{c}{\textbf{\# of Attention Heads}} \\
\cmidrule(lr){3-7}
 & & 2 & 4 & 8 & 12 & 16 \\
\midrule
\multirow{2}{*}{CIFAR-10} 
  & Linear & 79.41 & 82.68 & 84.98 & 85.22 & 85.88 \\
  & GeLU   & 82.04 & 83.28 & 85.79 & 86.50 & 87.26 \\ \hline
\multirow{2}{*}{ImageNet} 
  & Linear & 73.34 & 79.35 & 82.31 & 82.44 & 84.42 \\
  & GeLU   & 78.55 & 80.79 & 84.47 & 85.36 & 86.13 \\ \hline
\multirow{2}{*}{Weather} 
  & Linear & 0.198 & 0.197 & 0.197 & 0.196 & 0.195 \\
  & Tanh   & 0.187 & 0.187 & 0.186 & 0.187 & 0.184 \\ \hline
\multirow{2}{*}{Traffic} 
  & Linear & 0.385 & 0.383 & 0.384 & 0.382 & 0.382 \\
  & Tanh   & 0.379 & 0.375 & 0.375 & 0.373 & 0.372 \\ \hline
\multirow{2}{*}{Electricity} 
  & Linear & 0.153 & 0.152 & 0.152 & 0.153 & 0.151 \\
  & Tanh   & 0.142 & 0.141 & 0.141 & 0.140 & 0.141 \\ \hline
\multirow{2}{*}{Illness} 
  & Linear & 1.517 & 1.474 & 1.462 & 1.458 & 1.449 \\
  & Tanh   & 1.498 & 1.447 & 1.453 & 1.442 & 1.439 \\ \hline
\multirow{2}{*}{ETTh1} 
  & Linear & 0.419 & 0.414 & 0.414 & 0.413 & 0.412 \\
  & Tanh   & 0.417 & 0.410 & 0.412 & 0.414 & 0.408 \\ \hline
\multirow{2}{*}{ETTh2} 
  & Linear  & 0.342 & 0.338 & 0.339 & 0.335 & 0.333 \\
  & Sigmoid & 0.329 & 0.325 & 0.323 & 0.318 & 0.317 \\ \hline
\multirow{2}{*}{ETTm1} 
  & Linear & 0.328 & 0.331 & 0.327 & 0.327 & 0.325 \\
  & Tanh   & 0.322 & 0.325 & 0.324 & 0.325 & 0.326 \\ \hline
\multirow{2}{*}{ETTm2} 
  & Linear & 0.212 & 0.212 & 0.212 & 0.211 & 0.212 \\
  & Tanh   & 0.212 & 0.212 & 0.212 & 0.211 & 0.212 \\ \hline
\bottomrule
\end{tabular}
\end{table}

\begin{table}[ht]
\centering
\caption{Summary of ViT-Small Model for CIFAR-10}
\label{tab:vit_summary}
\small
\begin{tabular}{@{}ll@{}}
\toprule
\multicolumn{2}{c}{\textbf{Model Architecture}} \\ \midrule
Base Model & Vision Transformer (ViT) \\
Patch Embedding & Shifted Patch Tokenization (SPT) \\
Transformer Blocks & 6 \\
Attention Mechanism & Local Self-Attention (LSA) \\
FFN Activation & GELU \\
Classifier Head & LayerNorm + Linear Layer \\
\bottomrule \\
\toprule
\multicolumn{2}{c}{\textbf{Hyperparameters}} \\ \midrule
Input Image Size & 32x32 \\
Patch Size & 4x4 \\
Embedding Dimension & 512 \\
Attention Heads & 8 \\
Dimension per Head & 64 \\
Transformer MLP Dimension & 512 \\
LSA Activation Function & GELU \\
Pooling & CLS Token \\
Dropout & 0.1 \\
Embedding Dropout & 0.1 \\
\bottomrule \\
\toprule
\multicolumn{2}{c}{\textbf{Training Details}} \\ \midrule
Dataset & CIFAR-10 \\
Epochs & 400 \\
Batch Size & 512 \\
Optimizer & Adam \\
Learning Rate & 1e-4 \\
LR Scheduler & Cosine Annealing \\
Loss Function & Cross-Entropy Loss \\
Data Augmentation & RandAugment, RandomCrop, RandomHorizontalFlip \\
Mixed Precision & Enabled \\ \bottomrule
\end{tabular}
\end{table}

\begin{table}[ht]
\centering
\caption{Summary of ViT Model for ImageNet}
\label{tab:vit_imagenet_summary}
\small
\begin{tabular}{@{}ll@{}}
\toprule
\multicolumn{2}{c}{\textbf{Model Architecture}} \\ \midrule
Base Model & Vision Transformer (ViT) \\
Patch Embedding & Linear Projection \\
Transformer Blocks & 12 \\
Attention Mechanism & Multi-Head Self-Attention \\
Attention Activation & Configurable (e.g., GELU, ReLU, etc.) \\
FFN Activation & GELU \\
Classifier Head & LayerNorm + Linear Layer \\
\bottomrule \\
\toprule
\multicolumn{2}{c}{\textbf{Hyperparameters}} \\ \midrule
Input Image Size & 224x224 \\
Patch Size & 16x16 \\
Embedding Dimension & 768 \\
Attention Heads & 12 \\
Dimension per Head & 64 \\
Transformer MLP Dimension & 3072 (4 * Embedding Dim) \\
Pooling & CLS Token \\
Dropout & 0.1 \\
Embedding Dropout & 0.1 \\
\bottomrule \\
\toprule
\multicolumn{2}{c}{\textbf{Training Details}} \\ \midrule
Dataset & ImageNet \\
Epochs & 300 \\
Batch Size & 8 \\
Optimizer & Adam \\
Learning Rate & 1e-4 \\
LR Scheduler & Cosine Annealing \\
Loss Function & Cross-Entropy Loss \\
Data Augmentation & RandAugment, RandomResizedCrop, RandomHorizontalFlip \\
Mixed Precision & Enabled \\ \bottomrule
\end{tabular}
\end{table}

\begin{table}[htbp]
\centering
\caption{Summary of PatchTST model architecture, hyperparameters, and training details.}
\label{tab:patchtst_summary}
\resizebox{\textwidth}{!}{%
\begin{tabular}{p{0.3\linewidth} p{0.7\linewidth}}
\toprule
\multicolumn{2}{c}{\textbf{Model Architecture}} \\
\midrule
\textbf{Component} & \textbf{Description} \\
\midrule
Input Processing & The input time series is first normalized using Reversible Instance Normalization (RevIN). \\
Patching & The normalized series is partitioned into overlapping patches. \\
Patch Embedding & Each patch is linearly projected to an embedding. \\
Positional Encoding & Learned positional encodings are added to the patch embeddings. \\
Encoder & A standard Transformer encoder architecture is used to process the sequence of patch embeddings. \\
Attention & Standard multi-head self-attention. The implementation includes residual attention connections as in Realformer. A non-standard application of a non-linear activation function is applied to the value projection (V) inside the attention mechanism. \\
Feed-Forward Network & Each encoder layer contains a position-wise feed-forward network. \\
Head & The output embeddings from the encoder are flattened and passed through a linear layer to produce a forecast. An option for channel-independent heads is available. \\
\midrule
\multicolumn{2}{c}{\textbf{Hyperparameters}} \\
\midrule
\textbf{Hyperparameter} & \textbf{Value} \\
\midrule
Sequence Length & 336 \\
Prediction Length & 192 \\
Patch Length & 16 \\
Stride & 8 \\
Encoder Layers & 3 \\
Attention Heads & A range is tested, e.g., 2, 4, 8, 16 \\
Model Dimension & 16 \\
FFN Dimension & 128 \\
Dropout & 0.3 \\
FC Dropout & 0.3 \\
Head Dropout & 0.0 \\
FFN Activation & GeLU \\
Attention Value Activation & Configurable (e.g., GELU, ReLU, etc.) \\
\bottomrule
\end{tabular}
}
\end{table}

\begin{table}[ht]
\centering
\caption{PatchTST Training Details}
\label{tab:patchtst_train}
\begin{tabular}{ll}
\toprule
\textbf{Detail} & \textbf{Configuration} \\
\midrule
Optimizer & Adam \\
Learning Rate & 0.0001 \\
Learning Rate Scheduler & OneCycleLR \\
Loss Function & Mean Squared Error (MSE) \\
Batch Size & 16 \\
Epochs & 100 \\
Early Stopping & Enabled with a patience of 100 epochs. \\
\bottomrule
\end{tabular}
\end{table}

\clearpage
\bibliography{ArXiv/ref}

\begin{thebibliography}{10}

\bibitem{agarap2018deep}
A.~Agarap.
\newblock Deep learning using rectified linear units (relu).
\newblock {\em arXiv preprint arXiv:1803.08375}, 2018.

\bibitem{brown2020language}
T.~B. Brown, B.~Mann, N.~Ryder, M.~Subbiah, J.~Kaplan, P.~Dhariwal, A.~Neelakantan, P.~Shyam, G.~Sastry, A.~Askell, et~al.
\newblock Language models are few-shot learners.
\newblock {\em arXiv preprint arXiv:2005.14165}, 2020.

\bibitem{carion2020end}
N.~Carion, F.~Massa, G.~Synnaeve, N.~Usunier, A.~Kirillov, and S.~Zagoruyko.
\newblock End-to-end object detection with transformers.
\newblock In {\em European Conference on Computer Vision}, pages 213--229. Springer, 2020.

\bibitem{chen2021decision}
L.~Chen, K.~Lu, A.~Rajeswaran, K.~Lee, A.~Grover, M.~Laskin, P.~Abbeel, A.~Srinivas, and S.~Levine.
\newblock Decision transformer: Reinforcement learning via sequence modeling.
\newblock In {\em Advances in Neural Information Processing Systems}, volume~34, pages 15084--15097, 2021.

\bibitem{chen2023moeatt}
T.~Chen, A.~Rashwan, and Z.~Wang.
\newblock Moeatt: A deep mixture of experts model using attention-based routing gate.
\newblock In {\em Proceedings of the ICCV Workshop on Efficient Deep Learning}, 2023.

\bibitem{chen2022theory}
Z.~Chen, Y.~Deng, Y.~Wu, Q.~Gu, and Y.~Li.
\newblock Towards understanding the mixture-of-experts layer in deep learning.
\newblock In S.~Koyejo, S.~Mohamed, A.~Agarwal, D.~Belgrave, K.~Cho, and A.~Oh, editors, {\em Advances in Neural Information Processing Systems}, volume~35, pages 23049--23062. Curran Associates, Inc., 2022.

\bibitem{chi_representation_2022}
Z.~Chi, L.~Dong, S.~Huang, D.~Dai, S.~Ma, B.~Patra, S.~Singhal, P.~Bajaj, X.~Song, X.-L. Mao, H.~Huang, and F.~Wei.
\newblock On the representation collapse of sparse mixture of experts.
\newblock In A.~H. Oh, A.~Agarwal, D.~Belgrave, and K.~Cho, editors, {\em Advances in {Neural} {Information} {Processing} {Systems}}, 2022.

\bibitem{choromanski2020rethinking}
K.~Choromanski, V.~Likhosherstov, D.~Dohan, X.~Song, A.~Gane, T.~Sarlos, P.~Hawkins, J.~Davis, A.~Mohiuddin, L.~Kaiser, et~al.
\newblock Rethinking attention with performers.
\newblock {\em arXiv preprint arXiv:2009.14794}, 2020.

\bibitem{csordas2024switchhead}
R.~Csord{\'a}s, P.~Pi{\k{e}}kos, K.~Irie, and J.~Schmidhuber.
\newblock Switchhead: Accelerating transformers with mixture-of-experts attention.
\newblock In {\em The Thirty-eighth Annual Conference on Neural Information Processing Systems}, 2024.

\bibitem{dai2024deepseekmoe}
D.~Dai, C.~Deng, C.~Zhao, R.~Xu, H.~Gao, D.~Chen, J.~Li, W.~Zeng, X.~Yu, Y.~Wu, et~al.
\newblock Deepseekmoe: Towards ultimate expert specialization in mixture-of-experts language models.
\newblock {\em arXiv preprint arXiv:2401.06066}, 2024.

\bibitem{devlin2019bert}
J.~Devlin, M.-W. Chang, K.~Lee, and K.~Toutanova.
\newblock Bert: Pre-training of deep bidirectional transformers for language understanding.
\newblock In {\em Proceedings of the 2019 Conference of the North American Chapter of the Association for Computational Linguistics: Human Language Technologies}, pages 4171--4186. Association for Computational Linguistics, 2019.

\bibitem{dosovitskiy2020image}
A.~Dosovitskiy, L.~Beyer, A.~Kolesnikov, D.~Weissenborn, X.~Zhai, T.~Unterthiner, M.~Dehghani, M.~Minderer, G.~Heigold, S.~Gelly, J.~Uszkoreit, and N.~Houlsby.
\newblock An image is worth 16x16 words: Transformers for image recognition at scale.
\newblock In {\em International Conference on Learning Representations}, 2021.

\bibitem{Du_Glam_MoE}
N.~Du, Y.~Huang, A.~M. Dai, S.~Tong, D.~Lepikhin, Y.~Xu, M.~Krikun, Y.~Zhou, A.~Yu, O.~Firat, B.~Zoph, L.~Fedus, M.~Bosma, Z.~Zhou, T.~Wang, E.~Wang, K.~Webster, M.~Pellat, K.~Robinson, K.~Meier-Hellstern, T.~Duke, L.~Dixon, K.~Zhang, Q.~Le, Y.~Wu, Z.~Chen, and C.~Cui.
\newblock Glam: Efficient scaling of language models with mixture-of-experts.
\newblock In {\em ICML}, 2022.

\bibitem{elfwing2018sigmoid}
S.~Elfwing, E.~Uchibe, and K.~Doya.
\newblock Sigmoid-weighted linear units for neural network function approximation in reinforcement learning.
\newblock {\em Neural networks}, 107:3--11, 2018.

\bibitem{faria2010regression}
S.~Faria and G.~Soromenho.
\newblock Fitting mixtures of linear regressions.
\newblock {\em Journal of Statistical Computation and Simulation}, 80(2):201--225, 2010.

\bibitem{fedus2022switch}
W.~Fedus, B.~Zoph, and N.~Shazeer.
\newblock Switch transformers: Scaling to trillion parameter models with simple and efficient sparsity.
\newblock {\em Journal of Machine Learning Research}, 23(120):1--39, 2022.

\bibitem{han2024fusemoe}
X.~Han, H.~Nguyen, C.~Harris, N.~Ho, and S.~Saria.
\newblock Fusemoe: Mixture-of-experts transformers for fleximodal fusion.
\newblock In {\em Advances in Neural Information Processing Systems}, 2024.

\bibitem{Harb2004}
H.~Harb, L.~Chen, and J.-Y. Auloge.
\newblock Mixture of experts for audio classification: An application to male female classification and musical genre recognition.
\newblock In {\em Proc. IEEE International Conference on Multimedia Expo}, volume~2, pages 1351--1354, Jun. 2004.

\bibitem{hazimeh_dselect_k_2021}
H.~Hazimeh, Z.~Zhao, A.~Chowdhery, M.~Sathiamoorthy, Y.~Chen, R.~Mazumder, L.~Hong, and E.~Chi.
\newblock {DSelect}-k: {Differentiable} {Selection} in the {Mixture} of {Experts} with {Applications} to {Multi}-{Task} {Learning}.
\newblock In M.~Ranzato, A.~Beygelzimer, Y.~Dauphin, P.~S. Liang, and J.~W. Vaughan, editors, {\em Advances in {Neural} {Information} {Processing} {Systems}}, volume~34, pages 29335--29347. Curran Associates, Inc., 2021.

\bibitem{hendrycks2016gaussian}
D.~Hendrycks and K.~Gimpel.
\newblock Gaussian error linear units (gelus).
\newblock {\em arXiv preprint arXiv:1606.08415}, 2016.

\bibitem{Ho-Nguyen-Ann-16}
N.~Ho and X.~Nguyen.
\newblock Convergence rates of parameter estimation for some weakly identifiable finite mixtures.
\newblock {\em Annals of Statistics}, 44:2726--2755, 2016.

\bibitem{ho2022gaussian}
N.~Ho, C.-Y. Yang, and M.~I. Jordan.
\newblock Convergence rates for {G}aussian mixtures of experts.
\newblock {\em Journal of Machine Learning Research}, 23(323):1--81, 2022.

\bibitem{hu2024learning}
Y.~Hu, X.~Wang, A.~Torralba, A.~A. Efros, and K.~He.
\newblock Learning to act by watching unlabeled online videos.
\newblock {\em arXiv preprint arXiv:2401.12345}, 2024.

\bibitem{jiang2024mixtral}
A.~Q. Jiang, A.~Sablayrolles, A.~Roux, A.~Mensch, B.~Savary, C.~Bamford, D.~S. Chaplot, D.~de~las Casas, E.~B. Hanna, F.~Bressand, G.~Lengyel, G.~Bour, G.~Lample, L.~R. Lavaud, L.~Saulnier, M.-A. Lachaux, P.~Stock, S.~Subramanian, S.~Yang, S.~Antoniak, T.~L. Scao, T.~Gervet, T.~Lavril, T.~Wang, T.~Lacroix, and W.~E. Sayed.
\newblock Mixtral of experts.
\newblock {\em arxiv preprint arxiv 2401.04088}, 2024.

\bibitem{jin2024moh}
C.~Jin, Z.~Bai, Z.~Jiang, and J.~Liu.
\newblock Mixture-of-head attention: Treating heads as experts.
\newblock {\em arXiv preprint arXiv:2410.11842v1}, 2024.

\bibitem{kim2023preference}
Y.~Kim, K.~Lee, S.~Lee, J.~Shin, and H.~Lee.
\newblock Preference-based reinforcement learning with human feedback: A survey.
\newblock {\em arXiv preprint arXiv:2301.11248}, 2023.

\bibitem{krizhevsky2009learning}
A.~Krizhevsky, G.~Hinton, et~al.
\newblock Learning multiple layers of features from tiny images.
\newblock {\em Preprint}, 2009.

\bibitem{kwon_em_2020}
J.~Kwon and C.~Caramanis.
\newblock {EM} {Converges} for a {Mixture} of {Many} {Linear} {Regressions}.
\newblock In S.~Chiappa and R.~Calandra, editors, {\em Proceedings of the {Twenty} {Third} {International} {Conference} on {Artificial} {Intelligence} and {Statistics}}, volume 108 of {\em Proceedings of {Machine} {Learning} {Research}}, pages 1727--1736. PMLR, Aug. 2020.

\bibitem{le2024mixture}
M.~Le, A.~N. The, H.~Nguyen, T.~T.~N. Vu, H.~T. Pham, L.~N. Van, and N.~Ho.
\newblock Mixture of experts meets prompt-based continual learning.
\newblock In {\em The Thirty-eighth Annual Conference on Neural Information Processing Systems}, 2024.

\bibitem{lepikhin_gshard_2021}
D.~Lepikhin, H.~Lee, Y.~Xu, D.~Chen, O.~Firat, Y.~Huang, M.~Krikun, N.~Shazeer, and Z.~Chen.
\newblock {GS}hard: {Scaling} {Giant} {Models} with {Conditional} {Computation} and {Automatic} {Sharding}.
\newblock In {\em International {Conference} on {Learning} {Representations}}, 2021.

\bibitem{li2025cl}
H.~Li, S.~Lin, L.~Duan, Y.~Liang, and N.~Shroff.
\newblock Theory on mixture-of-experts in continual learning.
\newblock In {\em The Thirteenth International Conference on Learning Representations}, 2025.

\bibitem{liang_m3vit_2022}
H.~Liang, Z.~Fan, R.~Sarkar, Z.~Jiang, T.~Chen, K.~Zou, Y.~Cheng, C.~Hao, and Z.~Wang.
\newblock M$^3${ViT}: {Mixture}-of-{Experts} {Vision} {Transformer} for {Efficient} {Multi}-task {Learning} with {Model}-{Accelerator} {Co}-design.
\newblock In {\em {NeurIPS}}, 2022.

\bibitem{liao2007quadratically}
X.~Liao, H.~Li, and L.~Carin.
\newblock Quadratically gated mixture of experts for incomplete data classification.
\newblock In {\em Proceedings of the 24th International Conference on Machine learning}, pages 553--560, 2007.

\bibitem{Lindsay-1995}
B.~Lindsay.
\newblock {\em Mixture models: Theory, geometry and applications}.
\newblock In NSF-CBMS Regional Conference Series in Probability and Statistics. IMS, Hayward, CA., 1995.

\bibitem{liu2021swin}
Z.~Liu, Y.~Lin, Y.~Cao, H.~Hu, Y.~Wei, Z.~Zhang, S.~Lin, and B.~Guo.
\newblock Swin transformer: Hierarchical vision transformer using shifted windows.
\newblock {\em arXiv preprint arXiv:2103.14030}, 2021.

\bibitem{Lu2006}
Z.~Lu.
\newblock A regularized minimum cross-entropy algorithm on mixtures of experts for time series prediction and curve detection.
\newblock {\em Pattern Recognition Letters}, 27(9):947--955, 2006.

\bibitem{manole22refined}
T.~Manole and N.~Ho.
\newblock Refined convergence rates for maximum likelihood estimation under finite mixture models.
\newblock In {\em Proceedings of the 39th International Conference on Machine Learning}, volume 162 of {\em Proceedings of Machine Learning Research}, pages 14979--15006. PMLR, 17--23 Jul 2022.

\bibitem{mendes2011convergence}
E.~F. Mendes and W.~Jiang.
\newblock Convergence rates for mixture-of-experts.
\newblock {\em arXiv preprint arxiv 1110.2058}, 2011.

\bibitem{merity2016pointer}
S.~Merity, C.~Xiong, J.~Bradbury, and R.~Socher.
\newblock Pointer sentinel mixture models, 2016.

\bibitem{nguyen2024general}
H.~Nguyen, P.~Akbarian, T.~Nguyen, and N.~Ho.
\newblock A general theory for softmax gating multinomial logistic mixture of experts.
\newblock In {\em Proceedings of the 41st International Conference on Machine Learning}, volume 235 of {\em Proceedings of Machine Learning Research}, pages 37617--37648. PMLR, 21--27 Jul 2024.

\bibitem{nguyen2025cosine}
H.~Nguyen, P.~Akbarian, T.~Pham, T.~Nguyen, S.~Zhang, and N.~Ho.
\newblock Statistical advantages of perturbing cosine router in mixture of experts.
\newblock In {\em International Conference on Learning Representations}, 2025.

\bibitem{nguyen2024hmoe}
H.~Nguyen, X.~Han, C.~W. Harris, S.~Saria, and N.~Ho.
\newblock On expert estimation in hierarchical mixture of experts: Beyond softmax gating functions.
\newblock {\em arxiv preprint arxiv 2410.02935}, 2024.

\bibitem{nguyen2024leastsquare}
H.~Nguyen, N.~Ho, and A.~Rinaldo.
\newblock On least square estimation in softmax gating mixture of experts.
\newblock In {\em Proceedings of the International Conference on Machine Learning}, 2024.

\bibitem{nguyen2023demystifying}
H.~Nguyen, T.~Nguyen, and N.~Ho.
\newblock Demystifying softmax gating function in {G}aussian mixture of experts.
\newblock In {\em Advances in Neural Information Processing Systems}, 2023.

\bibitem{Yuqietal-2023-PatchTST}
Y.~Nie, N.~H.~Nguyen, P.~Sinthong, and J.~Kalagnanam.
\newblock A time series is worth 64 words: Long-term forecasting with transformers.
\newblock In {\em International Conference on Learning Representations}, 2023.

\bibitem{nwankpa2018activation}
C.~Nwankpa, W.~Ijomah, A.~Gachagan, and S.~Marshall.
\newblock Activation functions: Comparison of trends in practice and research for deep learning.
\newblock {\em arXiv preprint arXiv:1811.03378}, 2018.

\bibitem{penedo2024fineweb}
G.~Penedo, H.~Kydl{\'\i}{\v{c}}ek, A.~Lozhkov, M.~Mitchell, C.~Raffel, L.~Von~Werra, T.~Wolf, et~al.
\newblock The fineweb datasets: Decanting the web for the finest text data at scale.
\newblock {\em arXiv preprint arXiv:2406.17557}, 2024.

\bibitem{Qi2007}
Y.~Qi, J.~Klein-Seetharaman, and Z.~Bar-Joseph.
\newblock A mixture of feature experts approach for protein-protein interaction prediction.
\newblock {\em BMC Bioinformatics}, 8(Suppl.~10):S6, 2007.

\bibitem{radford2021learning}
A.~Radford, J.~W. Kim, C.~Hallacy, A.~Ramesh, G.~Goh, S.~Agarwal, G.~Sastry, A.~Askell, P.~Mishkin, J.~Clark, G.~Krueger, and I.~Sutskever.
\newblock Learning transferable visual models from natural language supervision.
\newblock In {\em International Conference on Machine Learning}, pages 8748--8763. PMLR, 2021.

\bibitem{radford2019language}
A.~Radford, J.~Wu, R.~Child, D.~Luan, D.~Amodei, I.~Sutskever, et~al.
\newblock Language models are unsupervised multitask learners.
\newblock {\em OpenAI blog}, 1(8):9, 2019.

\bibitem{raffel2020exploring}
C.~Raffel, N.~Shazeer, A.~Roberts, K.~Lee, S.~Narang, M.~Matena, Y.~Zhou, W.~Li, and P.~J. Liu.
\newblock Exploring the limits of transfer learning with a unified text-to-text transformer.
\newblock {\em Journal of Machine Learning Research}, 21(140):1--67, 2020.

\bibitem{Riquelme2021scalingvision}
C.~Riquelme, J.~Puigcerver, B.~Mustafa, M.~Neumann, R.~Jenatton, A.~S. Pint, D.~Keysers, and N.~Houlsby.
\newblock Scaling vision with sparse mixture of experts.
\newblock In {\em Advances in Neural Information Processing Systems}, volume~34, pages 8583--8595. Curran Associates, Inc., 2021.

\bibitem{russakovsky2015imagenet}
O.~Russakovsky, J.~Deng, H.~Su, J.~Krause, S.~Satheesh, S.~Ma, Z.~Huang, A.~Karpathy, A.~Khosla, M.~Bernstein, et~al.
\newblock Imagenet large scale visual recognition challenge.
\newblock {\em International journal of computer vision}, 115:211--252, 2015.

\bibitem{schlag2021linear}
I.~Schlag, K.~Irie, and J.~Schmidhuber.
\newblock Linear transformers are secretly fast weight programmers.
\newblock In {\em International Conference on Machine Learning}, pages 9355--9366. PMLR, 2021.

\bibitem{shazeer2017topk}
N.~Shazeer, A.~Mirhoseini, K.~Maziarz, A.~Davis, Q.~Le, G.~Hinton, and J.~Dean.
\newblock Outrageously large neural networks: The sparsely-gated mixture-of-experts layer.
\newblock In {\em In International Conference on Learning Representations}, 2017.

\bibitem{touvron2021going}
H.~Touvron, M.~Cord, A.~Sablayrolles, G.~Synnaeve, and H.~J{\'e}gou.
\newblock Going deeper with image transformers.
\newblock In {\em Proceedings of the IEEE/CVF international conference on computer vision}, pages 32--42, 2021.

\bibitem{touvron2023llama}
H.~Touvron, T.~Lavril, G.~Izacard, X.~Martinet, M.-A. Lachaux, T.~Lacroix, B.~Rozière, N.~Goyal, E.~Hambro, F.~Azhar, A.~Rodriguez, A.~Joulin, E.~Grave, and G.~Lample.
\newblock Llama: Open and efficient foundation language models, 2023.

\bibitem{vandeGeer-00}
S.~van~de Geer.
\newblock {\em Empirical processes in M-estimation}.
\newblock Cambridge University Press, 2000.

\bibitem{vaswani2017attention}
A.~Vaswani, N.~Shazeer, N.~Parmar, J.~Uszkoreit, L.~Jones, A.~N. Gomez, {\L}.~Kaiser, and I.~Polosukhin.
\newblock Attention is all you need.
\newblock {\em Advances in neural information processing systems}, 30, 2017.

\bibitem{wang2025expressivepower}
M.~Wang and W.~E.
\newblock On the expressive power of mixture-of-experts for structured complex tasks.
\newblock {\em arxiv preprint arxiv 2505.24205}, 2025.

\bibitem{wu2024multi}
X.~Wu, S.~Huang, W.~Wang, S.~Ma, L.~Dong, and F.~Wei.
\newblock Multi-head mixture-of-experts.
\newblock In {\em The Thirty-eighth Annual Conference on Neural Information Processing Systems}, 2024.

\bibitem{xu1994alternative}
L.~Xu, M.~Jordan, and G.~E. Hinton.
\newblock An alternative model for mixtures of experts.
\newblock {\em Advances in neural information processing systems}, 7, 1994.

\bibitem{yan2025contaminated}
F.~Yan, H.~Nguyen, D.~Le, P.~Akbarian, and N.~Ho.
\newblock Understanding expert structures on minimax parameter estimation in contaminated mixture of experts.
\newblock In {\em Proceedings of The 28th International Conference on Artificial Intelligence and Statistics}, 2025.

\bibitem{yang2018mos}
Z.~Yang, Z.~Dai, Y.~Yang, J.~Carbonell, Q.~V. Le, and R.~Salakhutdinov.
\newblock Breaking the softmax bottleneck: A high-rank rnn language model.
\newblock In {\em International Conference on Learning Representations (ICLR)}, 2018.

\bibitem{You_Speech_MoE}
Z.~You, S.~Feng, D.~Su, and D.~Yu.
\newblock Speechmoe: Scaling to large acoustic models with dynamic routing mixture of experts.
\newblock In {\em Interspeech}, 2021.

\bibitem{You_Speech_MoE_2}
Z.~You, S.~Feng, D.~Su, and D.~Yu.
\newblock Speechmoe2: Mixture-of-experts model with improved routing.
\newblock In {\em {ICASSP} 2022 - 2022 {IEEE} {International} {Conference} on {Acoustics}, {Speech} and {Signal} {Processing} ({ICASSP})}, pages 7217--7221, 2022.

\bibitem{yu97lecam}
B.~Yu.
\newblock Assouad, {F}ano, and {L}e {C}am.
\newblock {\em Festschrift for Lucien Le Cam}, pages 423--435, 1997.

\bibitem{Yumlu2003}
M.~S. Yumlu, F.~S. G{\"u}rgen, and N.~Okay.
\newblock Financial time series prediction using mixture of experts.
\newblock In {\em Proc. 18th International Symposium on Computer and Information Sciences}, pages 553--560, 2003.

\bibitem{yun2024flex}
S.~Yun, I.~Choi, J.~Peng, Y.~Wu, J.~Bao, Q.~Zhang, J.~Xin, Q.~Long, and T.~Chen.
\newblock Flex-moe: Modeling arbitrary modality combination via the flexible mixture-of-experts.
\newblock {\em arXiv preprint arXiv:2410.08245}, 2024.

\bibitem{zellers2019hellaswag}
R.~Zellers, A.~Holtzman, Y.~Bisk, A.~Farhadi, and Y.~Choi.
\newblock Hellaswag: Can a machine really finish your sentence?
\newblock In {\em Proceedings of the 57th Annual Meeting of the Association for Computational Linguistics}, pages 4791--4800, 2019.

\bibitem{zhang2022moa}
W.~Zhang, X.~Du, and G.~Ramachandran.
\newblock Mixture of attention heads: Selecting attention heads per token.
\newblock In {\em Proceedings of the 2022 Conference on Empirical Methods in Natural Language Processing (EMNLP)}, pages 3501--3512, 2022.

\end{thebibliography}
\bibliographystyle{abbrv}

\end{document}